\pgfplotsset{compat = newest}
\newcommand{\highlight}[2][blue!20]{\mathchoice%
  {\colorbox{#1}{$\displaystyle#2$}}%
  {\colorbox{#1}{$\textstyle#2$}}%
  {\colorbox{#1}{$\scriptstyle#2$}}%
  {\colorbox{#1}{$\scriptscriptstyle#2$}}}%
\newcommand{\notes}[1]{{ \vspace{3pt} \footnotesize \begin{spacing}{1} #1 \end{spacing}}}
\theoremstyle{plain}
\newtheorem{theorem}{Theorem}[section]
\newtheorem{proposition}[theorem]{Proposition}
\newtheorem{lemma}[theorem]{Lemma}
\theoremstyle{definition}
\newtheorem{definition}[theorem]{Definition}
\theoremstyle{remark}
\DeclareMathOperator*{\E}{\mathbb{E}}
\newcommand{\define}{\smash{\triangleq}}
\newcommand{\newterm}{\textit}
\newcommand{\erm}{SGD\xspace}
\newcommand{\dpsgd}{DP-SGD\xspace}
\newcommand{\dpis}{DP-IS-SGD\xspace}
\newcommand{\iwerm}{IW-SGD\xspace}
\newcommand{\dpiw}{\iwerm}
\newcommand{\iserm}{IS-SGD\xspace}
\newcommand{\gdro}{gDRO\xspace}
\newcommand{\noisygdro}{gDRO-n\xspace}
\newcommand{\noisyiserm}{IS-SGD-n\xspace}
\newcommand{\noisyiwerm}{IW-SGD-n\xspace}
\newcommand{\1}{\mathbbm{1}}
\newcommand{\eps}{\epsilon}
\newcommand{\xy}{z}
\newcommand{\loss}{\ell}
\newcommand{\params}{\theta}
\newcommand{\train}{\mathcal{T}}
\newcommand{\test}{\phi}
\newcommand{\datagen}{\mathcal{D}}
\newcommand{\subtrain}{{\substack{\dataset \sim \datagen^n \\ \xy \sim \dataset}}}
\newcommand{\subtest}{{\substack{\dataset \sim \datagen^n \\ \xy \sim \datagen}}}
\newcommand{\ptrain}{P_1}
\newcommand{\ptest}{P_0}
\newcommand{\dataset}{S}
\newcommand{\model}{\train}
\newcommand{\property}{\pi}
\newcommand{\tv}{d_\mathsf{TV}}
\newcommand{\setgroups}{\mathcal{G}}
\newcommand{\group}{g}
\newcommand{\sX}{\mathbb{X}}
\newcommand{\sY}{\mathbb{Y}}
\newcommand{\sD}{\mathbb{D}}
\newcommand{\sR}{\mathbb{R}}
\newcommand{\vb}{{\vec{b}}}
\newcommand{\cgap}{\textsc{cgap}}
\newcommand{\wggap}{\textsc{wggap}}
\def\shownotes{1}  \ifnum\shownotes=0
\newcommand{\bnote}[1]{\todo[linecolor=red,backgroundcolor=red!25,bordercolor=red]{#1 --BK}}
\newcommand{\jnote}[1]{\todo[linecolor=red,backgroundcolor=green!25,bordercolor=red]{#1 --JB}}
\newcommand{\fulltitle}{What You See is What You Get:\\
Principled Deep Learning via Distributional Generalization
}
\title{\fulltitle}
\renewcommand{\cite}{\citep}
\renewcommand{\paragraph}[1]{{\bf #1}}
\author{\hspace{-.6em}Bogdan Kulynych$^{1\star}$ \ Yao-Yuan Yang$^{2\star}$ \ Yaodong Yu$^3$ \ Jarosław Błasiok$^4$ \ Preetum Nakkiran$^2$ \\
$^1$EPFL \ $^2$UC San Diego \ $^3$UC Berkeley \ $^4$Columbia University\\
{\footnotesize$^{\star}$ denotes equal contribution.}
}
\begin{document}
\maketitle
\vspace{-0.2in}

\begin{abstract}
Having similar behavior at training time and test time---what we call a ``What You See Is What You Get'' (WYSIWYG) property---is desirable in machine learning. Models trained with standard stochastic gradient descent (SGD), however, do not necessarily have this property, as their complex behaviors such as robustness or subgroup performance can differ drastically between training and test time. In contrast, we show that Differentially-Private (DP) training provably ensures the high-level WYSIWYG property, which we quantify using a notion of distributional generalization. Applying this connection, we introduce new conceptual tools for designing deep-learning methods by reducing generalization concerns to optimization ones: to mitigate unwanted behavior at test time, it is provably sufficient to mitigate this behavior on the training data. By applying this novel design principle, which bypasses ``pathologies'' of SGD, we construct simple algorithms that are competitive with SOTA in several distributional-robustness applications, significantly improve the privacy vs. disparate impact trade-off of DP-SGD, and mitigate robust overfitting in adversarial training. Finally, we also improve on theoretical bounds relating DP, stability, and distributional generalization.
\end{abstract}

\section{Introduction}\label{sec:intro}
Much of machine learning (ML), both in theory and in practice,
operates under two assumptions. First, we have independent and identically distributed (i.i.d.) samples. Second, we care only about
a single averaged scalar metric (error, loss, risk).
Under these assumptions, we have mature methods and theory:
Modern learning methods excel
when trained on i.i.d. data to directly optimize a scalar loss, and there are
many theoretical tools for reasoning about \emph{generalization},
which explain when does optimization of a scalar on the training data translates to
similar values of this scalar at test time.

\begin{figure}[t!]
    \centering
    \includegraphics[width=.6\textwidth]{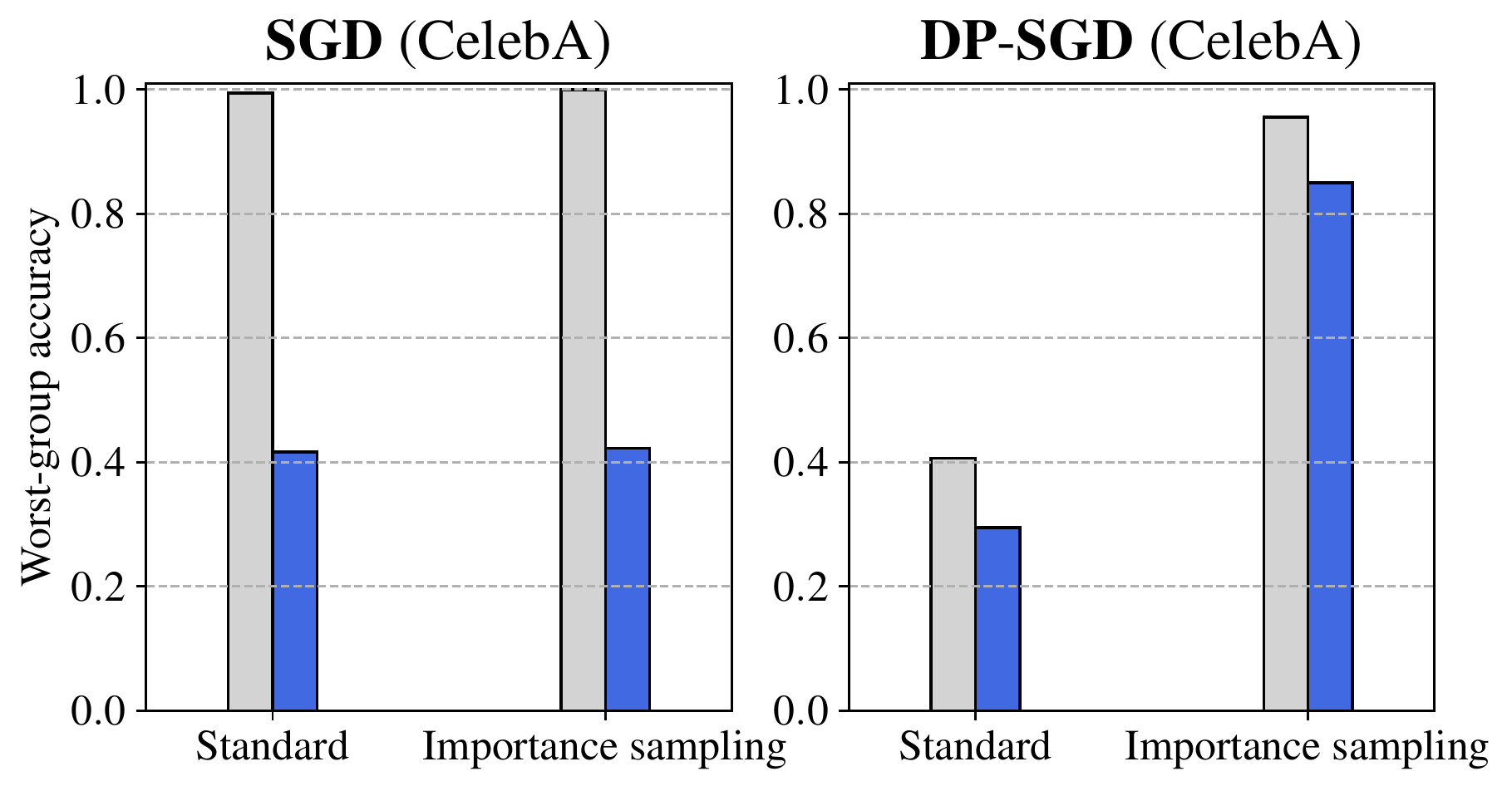}
    \vspace{-0.1in}
    \caption{
    {\bf Differential privacy ensures the desired behavior of importance sampling on test data.}
    The train and test accuracy of ResNets
    on CelebA, evaluated on
    the worst-performing (``male, blond'') subgroup.
    \emph{Left:}
    Standard SGD
    has a large generalization gap on this subgroup, and Importance Sampling (IS)
    has little effect.
    \emph{Right:}
    DP-SGD provably has small generalization gap on all subgroups,
    and IS improves subgroup performance as intended. See \cref{sec:experiments} for details.
    }
    \label{fig:CelebA}
    \vspace{-0.1in}
\end{figure}

The focus on scalar metrics such as average error, however, misses many
theoretically, practically, and socially relevant aspects of model performance.
For example, models with small \emph{average} error often have high error on salient minority subgroups~\citep{buolamwini2018gender, koenecke2020racial}.
In general, ML models are applied to the heterogeneous and long-tailed data distributions of the real world~\citep{van2017devil}.
Attempting to summarize their complex behavior with only 
a single scalar misses many rich and important aspects of learning.

These issues are compounded for modern overparameterized networks,
as their nuanced test-time behavior is not reflected at training time.
For example, consider the setting of \newterm{importance sampling}:
suppose we know that a certain subgroup of inputs is underrepresented in the training data
compared to the test distribution (breaking the i.i.d. assumption).
For underparameterized models, we can upsample this underrepresented group
to account for the distribution shift~\citep[see, e.g.,][]{gretton2009covariate}.
This approach, however, is known to empirically fail for overparameterized models~\citep{byrd2019effect}.
Because ``what you see'' (on the training data) is not ``what you get'' (at test time),
we cannot make principled train-time interventions to affect test-time behaviors.
This issue extends beyond importance sampling. 
For example, theoretically principled methods for
distributionally robust optimization (e.g. \citet{namkoong2016})
fail for overparameterized deep networks,
and require ad-hoc modifications %
~\citep{sagawa2019distributionally}.

We develop a theoretical framework which
sheds light on these existing issues,
and leads to improved practical methods in
privacy, fairness, and distributional robustness.
The core object in our framework is 
what we call the ``What You See Is What You Get'' (WYSIWYG) property.
A training procedure with the WYSIWYG property does \emph{not}
exhibit the ``pathologies'' of standard stochastic gradient descent (SGD):
all test-time behaviors will be expressed on the training data as well,
and there will be ``no surprises'' in generalization.

\paragraph{What You See Is What You Get (WYSIWYG) as a Design Principle.}
The WYSIWYG property is desirable for two reasons.
The first is diagnostic: as there are ``no surprises'' at test time, 
all properties of a model at test time
are already evident at the training stage.
It cannot be the case, for example, that a WYSIWYG model has
small disparate impact on the training data, but large disparate impact at test time.
The second reason is algorithmic:
to mitigate \emph{any} unwanted test-time behavior, it is sufficient to mitigate
this behavior on the training data.
This means that algorithm designers can be concerned only with achieving
desirable behavior at train time, as the WYSIWYG property guarantees it holds at test time too.
In practice, this enables the usage of many theoretically principled algorithms
which were developed in the underparameterized 
regime
to also apply in the modern overparameterized (deep learning) setting.
For example, we find that interventions such as importance sampling, or algorithms for distributionally robust optimization,
which fail without additional regularization, work exactly as intended with WYSIWYG (See \cref{fig:CelebA} for an illustration).

As WYSIWYG is a high-level conceptual property, we have to formalize it to use in computational practice. We do so using the notion of \newterm{Distributional Generalization} (DG), as introduced by \citet{nakkiran2020distributional,kulynych2022disparate}. If classical generalization ensures that the values of the model's loss on the training dataset and at test time are close on average~\cite{shalev2010learnability}, distributional generalization ensures that values of any other bounded test function---not only loss---are close on training and test time. We say that a model which satisfies an appropriately high level of distributional generalization exhibits the WYSIWYG property.

\paragraph{Achieving DG in Practice.}
Our key observation is that distributional generalization
is formally implied by
\emph{differential privacy} (DP)~\cite{dwork2006calibrating,dwork2014algorithmic}).
The spirit of this observation is not novel: 
DP training is known to satisfy much stronger notions of generalization (e.g., \emph{robust generalization}, see \cref{sec:related} for more details),
and stability than standard SGD \citep{dwork2015generalization, cummings2016adaptive, bassily2016algorithmic, steinke2020reasoning}.
We show that a similar connection holds for the notion of distributional generalization,
and prove (and improve) tight bounds relating DP, stability, and DG.
This guarantees the WYSIWYG property
for any method that is differentially-private,
including DP-SGD on deep neural networks \cite{abadi2016deep}.

We demonstrate how DG can be a useful design principle in three concrete settings.
First, we show that we can mitigate disparate impact of DP training~\cite{bagdasaryan2019differential, pujol2020fair} by leveraging importance sampling. Second, we study the setting of distributionally robust optimization~\cite[e.g.,][]{sagawa2019distributionally, hu2018does}.
We show how ideas from DP can be used to construct heuristic optimizers, which do not formally satisfy DP, yet empirically exhibit DG. 
Our heuristics lead to competitive results with SOTA algorithms in five datasets in the distributional robustness setting. Third, we show that the heuristic optimizer is also capable of reducing overfitting of adversarial loss in adversarial training~\cite{madry2017towards, zhang2019theoretically, rice2020overfitting}.

\paragraph{Our Contributions.}
We develop the theoretical connection between Differential Privacy (DP) and Distributional Generalization (DG),
and we leverage our theory to improve empirical performance
in privacy, fairness, and robustness applications.
Theoretically (\cref{sec:dg-theory,sec:applications,sec:algorithms}):
\begin{enumerate}[itemsep=2pt,parsep=2pt,topsep=0pt,partopsep=0pt]
    \item We provide tighter bounds than previously reported connecting DP and strong forms of generalization, and show that DP training methods satisfy DG, thus the WYSIWYG property. 
    \item We introduce \dpis, an importance-sampling version of \dpsgd,
    and show it satisfies DP and DG.
\end{enumerate}

Experimentally (\cref{sec:experiments}):
\begin{enumerate}[itemsep=2pt,parsep=2pt,topsep=0pt,partopsep=0pt]
    \item
    We use our framework to shed light on
    \emph{disparate impact}:
    The disparity in accuracy across groups
    at test time
    is provably reflected by the accuracy disparity \emph{on the train dataset}.
    \item We use our \dpis algorithm to largely mitigate the 
    disparate impact of DP using importance sampling.
    \item Based on our theoretical intuitions,
    we propose a DP-inspired heuristic: addition of gradient noise.
    We find this empirically achieves
    competitive and even improved results in several DRO settings, and reduces overfitting of adversarial loss in adversarial training.
\end{enumerate}

Taken together, our results emphasize the central role of the WYSIWYG property in
designing machine learning algorithms which avoid the ``pathologies''
of standard SGD. We also establish DP as a useful tool
for achieving WYSIWYG, thus extend its applications further beyond privacy.

\section{Theory of ``What You See is What You Get'' Generalization}\label{sec:dg-theory}

We first review the notion of distributional generalization and demonstrate why it captures the WYSIWYG property. 
Second, we show that strong stability notions imply distributional generalization. 
Finally, we improve on the known stability guarantees of differential privacy. 
As a result, we extend the connections between differential privacy, stability, and generalization to \emph{distributional} generalization, showing that stability and privacy imply the WYSIWYG property.

\paragraph{Notation.} We consider a learning task with a set of examples $\sX$ and labels $\sY$. We assume that a \newterm{source distribution} of labeled examples $\xy~\define~(x, y) \sim \datagen$ is defined over $\sD = \sX \times \sY$. Given an i.i.d.-sampled \newterm{dataset} $\dataset \sim \datagen^n$ of size $n$, we use a randomized \newterm{training algorithm} $\train(\dataset)$ that outputs a model's parameter vector $\params$ from the set $\Theta$. We denote by $f_\params(x)$ the resulting prediction function.

\subsection{Distributional Generalization and WYSIWYG}\label{sec:wysiwyg}
If on-average generalization~\cite{shalev2010learnability}
guarantees closeness only of loss values on train and test data, distributional generalization (DG) also guarantees closeness of values of all test functions $\test(\xy; \params) \in [0, 1]$ beyond only loss:
\begin{definition}[Based on \citet{nakkiran2020distributional}]
\label{def:dg}
An algorithm $\train(\dataset)$ satisfies $\delta$-distributional generalization if for all $\test: \sD \times \Theta \rightarrow [0, 1]$,
\begin{equation}\label{eq:dg-variational-defn}
    \Big|\E_\subtrain \test\big(\xy; \train(\dataset)\big) -
    \E_\subtest \test\big(\xy; \train(\dataset)\big)\Big| \leq \delta.
\end{equation}
By the variational characterization of the total-variation (TV) distance~\cite[see, e.g., ][Chapter 6.3]{polyanskiy2014lecture}, \cref{eq:dg-variational-defn} is equivalent to the bound
$
    \tv( \ptrain, \ptest) \leq \delta,
$
where $\ptrain$ and $\ptest$ are both distributions of $\big(\xy, \train(\dataset)\big)$ over the randomness of $\dataset \sim \datagen^n$ and the training algorithm $\train(\cdot)$, with the difference that $\xy \sim \dataset$ in the case of $\ptrain$ (train), and $\xy \sim \datagen$ in the case of $\ptest$ (test).
\end{definition}

It might seem that DG only ensures average closeness of bounded tests on train and test data. This is not, however, the full picture. Consider generalization in terms of a broader class of functions:
\begin{definition}[\citet{kulynych2022disparate}]
\label{def:dg-prop}
An algorithm $\train(\dataset)$ satisfies $(\delta, \property)$-distributional generalization if for a given property function $\property: \sD \times \Theta \rightarrow \sR^k$ it holds that $\tv( \property _\sharp \ptrain, \property _\sharp \ptest) \leq \delta,$
where $\property _\sharp P$ is the distribution of $\property(T)$ for $T \sim P$.
\end{definition}
Because TV distance is preserved under post-processing, we can see that $\delta$-distributional generalization implies $(\delta, \property)$-distributional generalization for \emph{all} property functions.
Informally, $\delta$-DG means that for \emph{all} numeric property functions $\property(\xy; \params)$ of a model,
the distributions of the property values are close on the train and test data, on average.
This fact captures the high-level idea of the \newterm{``What You See is What You Get''} (WYSIWYG) guarantee. 
Some example property functions:
\vspace{-.3em}
\begin{itemize}
    \setlength\itemsep{.1em}
    \item \textit{Subgroup loss:} $\property(\xy; \params) = \1\{\xy \in G\} \cdot \loss(\xy; \params)$, for some subgroup $G \subset \sD$.
    \item \textit{Counterfactual fairness:} $\property((x, y); \params) = f_\params(x') - f_\params(x)$, where $x'$ is a counterfactual version of $x$ had it had a different value of a sensitive attribute~\cite{kusner2017counterfactual}.
    \item \textit{Robustness to corruptions:} $\property(\xy; \params) = \loss(A(\xy); \params)$, where $A(x)$ is a possibly randomized transformation that distorts the example, e.g., by adding Gaussian noise.
    \item \textit{Adversarial robustness:} $\property(\xy; \params) = \loss(A_\params(\xy); \params)$, where $A_\params(\xy)$ is an adversarial example, e.g. generated using the PGD attack~\cite{madry2017towards}.

\end{itemize}
In the next sections, we show how a training algorithm can provably satisfy DG and therefore provide
WYSIWYG guarantees for all properties, including the ones above.

\subsection{Distributional Generalization from Stability and Differential Privacy}\label{sec:from-stability-to-dg}
The connections between privacy, stability, and generalization are well-known. In particular, stability of the learning algorithm---its non-sensitivity to limited changes in the training data---implies generalization~\cite{bousquet2002stability, shalev2010learnability}. In turn, differential privacy implies strong forms of stability, thus ensuring generalization through the chain Privacy $\Rightarrow$ Stability $\Rightarrow$ Generalization~\cite{raskhodnikova2008can, dwork2015preserving, dwork2015generalization, wang2016learning}.

Let us formally define differential privacy:
\begin{definition}[Differential Privacy~\cite{dwork2006calibrating, dwork2014algorithmic}]\label{def-dp} An algorithm $\train(\dataset)$ is $(\epsilon, \delta)$-differentially private (DP) if for any two \newterm{neighbouring datasets}---differing by one example---$\dataset$, $\dataset'$ of size $n$, for any subset $K \subseteq \Theta$ it holds that
$
    \Pr[ \train(\dataset) \in K ] \leq \exp(\epsilon) \Pr[ \train(\dataset') \in K] + \delta.
$
\end{definition}

DP mathematically encodes a notion of plausible deniability of the inclusion of an example in the dataset. However, it can also be thought as a strong form of stability~\cite{dwork2015preserving}. As such, DP implies other notions of stability. 
We consider the following notion, which has been studied in the literature under multiple names.
In the context of privacy, it is equivalent to $(0, \delta)$-differential privacy, and has been called additive differential privacy~\cite{geng2019optimal}, and total-variation privacy~\cite{barber2014privacy}. In the context of learning, it has been called total-variation (TV) stability~\cite{bassily2016algorithmic}. We take this last approach and refer to it as TV stability:
\begin{definition}[TV Stability]
An algorithm $\train(\dataset)$ is $\delta$-TV stable if for any two \newterm{neighbouring datasets} $\dataset$, $\dataset'$ of size $n$, for any subset $T \subseteq \Theta$ it holds that 
$
    \Pr[ \train(\dataset) \in K ] \leq \Pr[ \train(\dataset') \in K] + \delta.
$
\end{definition}
It is easy to see that $(\epsilon, \delta)$-DP immediately implies $\delta'$-TV stability with:
\begin{equation}\label{eq:dg-bound-loose}
    \delta' = \exp(\epsilon) - 1 + \delta.
\end{equation}

\paragraph{From Classical to Distributional Generalization.} Similarly to the classical generalization, one way to achieve distributional generalization is through strong stability:

\begin{theorem}\label{stmt:tv-to-dg}
Suppose that the training algorithm is $\delta$-TV stable. Then, the algorithm satisfies $\delta$-DG.
\end{theorem}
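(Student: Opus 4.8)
The plan is to verify the equivalent formulation $\tv(\ptrain, \ptest) \le \delta$ from \cref{def:dg} by bounding $\E_{\ptrain}\test - \E_{\ptest}\test$ for an arbitrary fixed test function $\test \in [0,1]$ and then taking a supremum, invoking the variational characterization of TV at the end. Writing $\dataset = (\xy_1, \dots, \xy_n)$ with each $\xy_j \sim \datagen$ i.i.d., the train-side expectation is an average over a uniformly chosen index, $\E_{\ptrain}\test = \frac{1}{n}\sum_{i=1}^{n} \E_\dataset \test(\xy_i; \train(\dataset))$, whereas the test-side expectation uses an independent fresh sample $\xy' \sim \datagen$ drawn outside the dataset, $\E_{\ptest}\test = \E_{\dataset, \xy'}\,\test(\xy'; \train(\dataset))$.

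The key step is a resampling (coupling) argument exploiting that $(\xy_1, \dots, \xy_n, \xy')$ are i.i.d., hence exchangeable. For each index $i$, swapping the roles of $\xy_i$ and $\xy'$ is measure-preserving, so I can rewrite $\E_{\dataset, \xy'}\,\test(\xy'; \train(\dataset)) = \E_{\dataset, \xy'}\,\test(\xy_i; \train(\dataset^{(i)}))$, where $\dataset^{(i)}$ denotes $\dataset$ with its $i$-th entry replaced by $\xy'$. Crucially, $\dataset$ and $\dataset^{(i)}$ are \emph{neighbouring} datasets---they differ in exactly one example---and the evaluation point $\xy_i$ is now identical on both sides. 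Since this holds for every $i$, averaging turns the train-vs-test discrepancy into a per-index difference over neighbouring datasets:
\[
\E_{\ptrain}\test - \E_{\ptest}\test = \frac{1}{n}\sum_{i=1}^{n} \E_{\dataset, \xy'}\big[\test(\xy_i; \train(\dataset)) - \test(\xy_i; \train(\dataset^{(i)}))\big].
\]

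Next I condition on the data $(\xy_1, \dots, \xy_n, \xy')$, so that $\dataset$, $\dataset^{(i)}$, and the point $\xy_i$ are all fixed and only the internal randomness of $\train$ remains. The map $\params \mapsto \test(\xy_i; \params)$ takes values in $[0,1]$, so by the variational characterization of TV distance together with $\delta$-TV stability applied to the neighbouring pair $(\dataset, \dataset^{(i)})$ (using that TV stability is symmetric under swapping $\dataset \leftrightarrow \dataset'$), each conditional difference is at most $\tv\big(\train(\dataset), \train(\dataset^{(i)})\big) \le \delta$ in absolute value. Averaging back over the data and over the $n$ indices gives $|\E_{\ptrain}\test - \E_{\ptest}\test| \le \delta$, and taking the supremum over all $\test \in [0,1]$ yields $\tv(\ptrain, \ptest) \le \delta$, which is exactly $\delta$-DG.

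The main obstacle is setting up the coupling correctly: one must recognize that the asymmetry between ``a training point $\xy_i$ sampled from $\dataset$'' and ``a fresh test point $\xy'$ sampled from $\datagen$'' dissolves under exchangeability once their labels are swapped, and that this swap is precisely what recasts the generalization gap as a comparison of two neighbouring datasets evaluated at a \emph{common} point---the only configuration that TV stability can control. Everything else (linearity of expectation, the $[0,1]$-boundedness enabling the variational TV inequality, and conditioning on the data to isolate the training randomness) is routine.
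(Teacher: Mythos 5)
Your proof is correct and follows essentially the same route as the paper's: both arguments use the exchangeability of $(\xy_1,\dots,\xy_n,\xy')$ to recast the train--test gap as a comparison of $\train$ on two neighbouring datasets evaluated at a common point, and then invoke $\delta$-TV stability via the variational characterization of TV distance. The paper phrases the coupling as the distributional identities in \cref{eq:dist-equiv} (growing an $(n-1)$-point dataset by one example) rather than as your per-index swap averaged over $i$, but this is only a cosmetic difference.
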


We refer to \cref{app:proofs} for the proofs of this and all other formal statements in the rest of the paper.

As DP implies TV-stability, by \cref{stmt:tv-to-dg} we have that DP also implies DG. We show that DP algorithms enjoy a significantly stronger stability guarantee than  previously known,
which means that the DG guarantee that one obtains from DP is also stronger.

\begin{proposition}\label{stmt:dp-to-tv-tight}
An algorithm which is $(\epsilon, \delta)$-DP is also $\delta'$-TV stable with:
\[
    \delta' = \frac{\exp(\epsilon) - 1 + 2\delta}{\exp(\epsilon) + 1}.
\]
\end{proposition}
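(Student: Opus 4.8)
The plan is to fix two neighbouring datasets $\dataset, \dataset'$ and an arbitrary measurable set $K \subseteq \Theta$, and to bound the one-sided gap $\Pr[\train(\dataset) \in K] - \Pr[\train(\dataset') \in K]$ by $\delta'$; since $\delta$-TV stability is exactly the requirement that this gap be at most $\delta'$ for every $K$ and every ordered pair of neighbours, bounding it for an arbitrary $K$ suffices. Abbreviate $a = \Pr[\train(\dataset) \in K]$ and $b = \Pr[\train(\dataset') \in K]$, so the goal is to show $a - b \leq \delta'$.

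The key step is to extract two consequences of $(\epsilon, \delta)$-DP that together pin down $a - b$ tightly. First, applying the DP inequality to the ordered pair $(\dataset, \dataset')$ on the event $K$ gives $a \leq e^\epsilon b + \delta$, i.e. $a - e^\epsilon b \leq \delta$. Second---and this is the move that makes the bound tight---applying DP to the \emph{reversed} pair $(\dataset', \dataset)$ on the \emph{complementary} event $K^c$ gives $1 - b \leq e^\epsilon(1 - a) + \delta$, which rearranges to $e^\epsilon a - b \leq e^\epsilon - 1 + \delta$. Both inequalities are valid because the DP guarantee of \cref{def-dp} holds for every subset and for both orderings of a neighbouring pair.

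Now I would simply add the two rearranged inequalities $a - e^\epsilon b \leq \delta$ and $e^\epsilon a - b \leq e^\epsilon - 1 + \delta$. The cross terms combine so that the left-hand side becomes $(e^\epsilon + 1)(a - b)$ while the right-hand side is $e^\epsilon - 1 + 2\delta$. Dividing by $e^\epsilon + 1 > 0$ yields $a - b \leq \frac{e^\epsilon - 1 + 2\delta}{e^\epsilon + 1} = \delta'$. Since nothing in the argument depended on the particular $K$ or on which dataset was labelled $\dataset$, taking the supremum over both is immediate and completes the proof.

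The only genuine obstacle is conceptual rather than computational: one must recognize that the forward DP bound on $K$ should be paired with the \emph{reverse} DP bound on $K^c$. Using both DP inequalities on $K$ alone would only give $a - b \leq (e^\epsilon - 1)b + \delta \leq e^\epsilon - 1 + \delta$ via $b \leq 1$, reproducing the loose constant of \cref{eq:dg-bound-loose}; it is the interaction between the event and its complement that shaves the factor down to $\frac{1}{e^\epsilon + 1}$. Optionally, one can confirm tightness by exhibiting a two-point output distribution that saturates both inequalities simultaneously, showing that no smaller constant is obtainable from $(\epsilon, \delta)$-DP alone.
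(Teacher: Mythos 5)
Your proof is correct, and the algebra at its core---summing two complementary consequences of DP so that the factor $e^\epsilon+1$ appears in the denominator---is exactly the mechanism behind the paper's bound. The route you take to those two inequalities is genuinely different, though. The paper works in the hypothesis-testing formalism: it imports the trade-off inequalities $\alpha_\test + e^\epsilon \beta_\test \geq 1-\delta$ and $e^\epsilon \alpha_\test + \beta_\test \geq 1-\delta$ from \citet{kairouz2015composition}, sums them, and converts back to total variation via Le Cam's identity $\tv(P,Q) = 1-(\alpha_{\test^*}+\beta_{\test^*})$ for the optimal decision rule. You instead derive the same two linear constraints directly from \cref{def-dp}, by applying it to the event $K$ for the ordered pair $(\dataset,\dataset')$ and to the complement $K^c$ for the reversed pair $(\dataset',\dataset)$; substituting $\alpha = a$ and $\beta = 1-b$ shows your inequalities are literally the ones of \citet{kairouz2015composition} in disguise. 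What your version buys is self-containedness---no external lemma and no hypothesis-testing vocabulary---and it isolates exactly why the naive bound $e^\epsilon-1+\delta$ of \cref{eq:dg-bound-loose} is loose, namely that it discards the constraint coming from $K^c$. What the paper's version buys is that the Kairouz et al.\ result is an exact characterization of $(\epsilon,\delta)$-DP, which immediately supplies the matching extremal mechanism for the tightness claim (\cref{stmt:dp-to-tv-lb}); on that optional point, note that for $\delta>0$ the saturating mechanism needs four outcomes rather than the two-point output distribution you allude to---two points suffice only when $\delta=0$.
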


\begin{wrapfigure}{r}{.45\textwidth}
    \centering
    \vspace{-0.2em}
    \resizebox{\linewidth}{!}{
\scalefont{1.35}
\begin{tikzpicture}
\begin{axis}[
    restrict y to domain = 0:5,
    xmin = 0, xmax = 5,
    ymin = 0, ymax = 1.5,
    xtick distance = 0.5,
    ytick distance = 0.5,
    grid = both,
    legend pos = south east,
    legend style = {draw=none},
    legend cell align={left},
    minor tick num = 1,
    major grid style = {lightgray},
    minor grid style = {lightgray!25},
    width = \textwidth,
    height = 0.4\textwidth,
    xlabel = {$\epsilon$},
    ylabel = {DG},
    ]

\addplot[
    domain = 0:5,
    samples = 200,
    smooth,
    ultra thick,
    blue,
] {exp(x) - 1};
\addlegendentry{\cref{eq:dg-bound-loose}}

\addplot[
    domain = 0:5,
    samples = 200,
    smooth,
    ultra thick,
    cyan,
] {x};
\addlegendentry{\citet{steinke2020reasoning}}

\addplot[
    domain = 0:5,
    samples = 200,
    smooth,
    ultra thick,
    red,
] {(exp(x) - 1)/(exp(x) + 1)};
\addlegendentry{\cref{stmt:dp-to-tv-tight} (Ours)}
 
\end{axis}
\end{tikzpicture}}
\vspace{-1.6em}
\caption{DG bound from $(\epsilon, 0)$-DP.}
\label{fig:bound}
\end{wrapfigure}
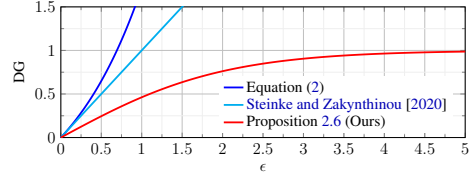

In \cref{app:it-bounds}, we discuss the relationship of this result to other works in the literature on information-theoretic generalization. In particular, to \citet{steinke2020reasoning} whose results can also be used to relate DP and DG. \cref{fig:bound} shows that the known bounds quickly become vacuous unlike the bound in \cref{stmt:dp-to-tv-tight}. In fact, we show that our bound is tight in \cref{app:proofs}.

{\bf Stronger Distributional Generalization Guarantees.}
Although DG immediately implies generalization for all bounded properties,
it is possible to obtain tighter bounds from TV stability.
For example,
directly applying DG to the \emph{subgroup loss} property yields a bound
that decays with the size of the subgroup:
accuracy on very small subgroups is not guaranteed to generalize well.
In \cref{app:subgroup} we show that TV stability implies ``subgroup DG'',
which guarantees that the accuracy on even small subgroups generalizes well in expectation.
As another example, in \cref{app:calibration} we show that TV stability also ensures generalization of calibration properties of the learning algorithm.

\vspace{-0.1in}
\section{Example Applications}\label{sec:applications}
To demonstrate that WYSIWYG is a useful property in algorithm design, in the remainder of this paper
we use it to construct
simple and high-performing algorithms
for three example applications: mitigation of disparate impact of DP, ensuring group-distributional robustness, and mitigation of robust overfitting in adversarial training.

\paragraph{Mitigating Disparate Impact of DP.}
First, we consider applications in which learning presents privacy concerns, e.g., in the case that the training data contains sensitive information. Using training procedures that satisfy DP is a standard way to guarantee privacy in such settings.
Training with DP, however, is known to incur \emph{disparate impact} on the model accuracy:
some subgroups of inputs can have worse test accuracy than others.
For example, \citet{bagdasaryan2019differential} show that using DP-SGD---a standard algorithm for satisfying DP~\cite{abadi2016deep}---in place of regular SGD
causes a significant accuracy drop on
``darker skin'' faces in models trained on the CelebA dataset of celebrity faces~\cite{liu2015faceattributes},
but a less severe drop on ``lighter skin'' faces.
Our goal is to mitigate such disparate impact. This issue---a quality-of-service harm~\cite{madaio2020co}---is but one of many possible harms due to ML systems. We do not aim to mitigate any other broad fairness-related issues, nor claim this is possible within our framework.

Formally, we assume the data distribution $\datagen$ is a mixture of $m$ groups indexed by set $\setgroups = \{1, \ldots, m\}$, such that $\datagen = \sum_{i = 1}^{m} q_{i} \datagen_{i}$. The vector $(q_i, \ldots, q_m) \in [0, 1]^m$ represents the group probabilities, with $\sum_{i = 1}^{m} q_{i} = 1$.
For given parameters $(\eps, \delta)$, we
want to learn a model $\params$ that simultaneously satisfies
$(\eps, \delta)$-DP, has high overall accuracy, and incurs small \newterm{loss disparity}:
\vspace{-0.07in}
\begin{equation}\label{eq:fair-acc}
    \max_{\group, \group' \in \setgroups} \left| \E_{\xy \sim \datagen_{\group}}[\ell(\xy; \params)] - \E_{\xy \sim \datagen_{\group'}}[\ell(\xy; \params)]\right|.
\end{equation}
\paragraph{Group-Distributional Robustness.}
Next, we consider a setting of \newterm{group-distributionally robust optimization} \citep[e.g.,][]{sagawa2019distributionally,hu2018does}. If in the standard learning approach we want to train a model that minimizes \emph{average} loss, in this setting, we want to minimize the \emph{worst-case (highest) group loss}. This objective can be used to mitigate fairness concerns such as those discussed previously, as well as to avoid learning spurious correlations~\cite{sagawa2019distributionally}. 

Formally, we want to learn a model $\params$ that minimizes the \newterm{worst-case group loss}:
\begin{equation}
\label{eq:dro-obj}
\max_{\group \in \setgroups}\E_{\xy \sim \datagen_\group}\left[\ell(\xy; \params)\right].
\end{equation}
Unlike the previous application, in this setting, we do not require privacy of the training data. We use training with DP as a \emph{tool} to ensure the generalization of the worst-case group loss.

\paragraph{Mitigating Robust Overfitting.} 
Finally, we consider the setting of robustness to test-time adversarial examples through adversarial training~\cite{goodfellow2014explaining}. A common way to train robust models in this sense in image domains is to minimize \newterm{robust (adversarial) loss}~\cite{madry2017towards}:
\vspace{-.2em}
\begin{equation}
\label{eq:adv-obj}
\E_{(x, y) \sim \datagen} \left[ \max_{ \|x - x_\mathsf{adv}\|_p \leq \gamma }
\ell((x_\mathsf{adv}, y); \params) \right],
\end{equation}
where $\gamma > 0$ is a small constant. \citet{rice2020overfitting} observed that adversarially trained models exhibit ``robust overfitting'': higher generalization gap of robust loss than that of the regular loss. In this application, we similarly aim to use a relaxed version of training with DP as a tool to ensure generalization of robust loss, thus mitigate robust overfitting.

\section{Algorithms which Distributionally Generalize}\label{sec:algorithms}
\vspace{-0.05in}

In this section, we construct algorithms for the applications in \cref{sec:applications}.
Our approach follows the blueprint:
First, we apply a principled algorithmic intervention that ensures desired behavior on \emph{the training data} (e.g., importance sampling). Second, we modify the resulting algorithm to additionally ensure DG, which guarantees that the desired behavior generalizes to \emph{test time}.

\subsection{DP Training with Importance Sampling}
\label{sec:dpis}

\newcommand{\samplerate}{\bar p}
\newcommand{\weightfunc}{p}

Our first algorithm, \dpis (\cref{alg:dp_is}), is a version of DP-SGD~\cite{abadi2016deep} which performs importance sampling.
\dpis is designed to mitigate disparate impact while retaining DP guarantees. 
The standard \dpsgd samples data batches using \newterm{uniform Poisson subsampling}\emph{:} Each example in the training set is chosen into the batch according to the outcome of a Bernoulli trial with probability $\samplerate \in [0, 1]$. To correct for unequal representation and the resulting disparate impact, we use \newterm{non-uniform Poisson subsampling}: Each example $\xy \in \dataset$ has a possibly different probability $\weightfunc(\xy)$ of being selected into the batch, where $\weightfunc(\xy)$ does not depend on the dataset $\dataset$ otherwise, and is bounded: $0 \leq \weightfunc(\xy) \leq \weightfunc^* \leq 1$. We denote this subsampling procedure as $\mathsf{Sample}_{\weightfunc(\cdot)}(\dataset)$.

We assume that we know to which group any $\xy = (x, y)$ belongs, denoted as $g(\xy)$, e.g., the group is one of the features in $x$. We choose $\weightfunc(\xy)$ to satisfy two properties. First, to increase the sampling probability for examples in minority groups: $\weightfunc(\xy) \propto \nicefrac{1}{q_{g(\xy)}}$. Second, to keep the average batch size equal to $\samplerate \cdot n$ as in standard \dpsgd. In the rest of the paper, we assume that the group probabilities $(q_1, \ldots, q_m)$  are known, but it is possible to estimate them in a private way using standard methods~\cite{nelson2019sok}.
We present \dpis in \cref{alg:dp_is}, along with its differences to the standard \dpsgd.

\begin{figure}[t]
\begin{algorithm}[H]
  \caption{\dpis (DP Importance Sampling SGD)}\label{alg:dp_is}
  \renewcommand{\algorithmicrequire}{\textbf{Input:}}
  \begin{algorithmic}
  \Require Dataset $\dataset$, loss $\loss(\xy; \params)$, initial parameters $\params_0$, learning rate $\eta$, maximal gradient norm~$C$, noise parameter~$\sigma$, number of epochs $T$, sampling rate $\samplerate$,
  $\highlight{\text{group probabilities } (q_1, \ldots, q_m)}.$
  \For{$t = 1, \ldots, T$}
  \State Sample batch $S_{t} \gets \highlight{\smash{\mathsf{Sample}_{\weightfunc(\cdot)}}(\dataset)\text{, with sampling probabilities } \weightfunc(\xy) ~\define~ \nicefrac{\samplerate}{m \cdot q_{g(\xy)}}}$
  \State $\tilde g_t \gets
  \frac{1}{|S_t|}  \sum_{\xy \in S_t}
  \underbrace{\nicefrac{ 1 }{\max \{1,~C^{-1} \cdot \| \nabla_\params \loss(\xy; \params)\|_2\}}}_{\text{Gradient clipping}}
  \cdot \nabla_\params \loss(\xy; \params)
  + \underbrace{\sigma C \cdot \mathcal{N}(0, I) }_{\text{Gradient noise}}$
  \State $\params_{t} \gets \params_{t-1} + \eta \cdot \tilde g_t$
  \EndFor
  \end{algorithmic}
\end{algorithm}
\vspace{-0.1in}
\notes{
\setlength\fboxsep{0pt}
The $\highlight{\text{highlighted}}$ parts indicate the differences with respect to \dpsgd. We obtain \dpsgd as a special case when we have a single group with $q = 1$ (implying $\weightfunc(\xy) = \samplerate$).}
\vspace{-0.1in}
\end{figure}

\paragraph{DP Properties of \dpis.}
Uniform Poisson subsampling is well-known to amplify the privacy guarantees of an algorithm~\cite{chaudhuri2006random, li2012sampling}.
For example, \citet{li2012sampling} show that if an algorithm $\train(\dataset)$ satisfies $(\epsilon, \delta)$-DP, then $\train \circ \smash{\mathsf{Sample}_{\samplerate}}(\dataset)$ provides approximately $(O(\samplerate \epsilon), \samplerate \delta)$-DP for small values of $\epsilon$. We show in \cref{app:proofs} that non-uniform Poisson subsampling provides the same amplification guarantee with $\samplerate = \weightfunc^*$, where $\weightfunc^*$ is the maximum value of $\weightfunc(\cdot)$.

As this guarantee is independent of the internal workings of $\train(\dataset)$, it is loose. For DP-SGD, one way of computing tight privacy guarantees of subsampling is using the notion of \newterm{Gaussian differential privacy} (GDP)~\cite{dong2019gaussian}. GDP is parameterized by a single parameter $\mu$. If an algorithm $\train(\dataset)$ satisfies $\mu$-GDP, one can efficiently compute a set of $(\epsilon, \delta)$-DP guarantees also satisfied by $\train(\dataset)$~\cite{dong2019gaussian}.
We show that we can use any GDP-based mechanism for computing the privacy guarantee of \dpsgd to obtain the privacy guarantees of \dpis in a black-box manner:
\begin{proposition}\label{stmt:dpiw-blackbox}
Let us denote by $\mu(\samplerate, \sigma, C, T)$ (see \cref{alg:dp_is}) a function that returns a $\mu$-GDP guarantee of \dpsgd. Then, \dpis satisfies a GDP guarantee $\mu(\weightfunc^*, \sigma, C, T)$.
\end{proposition}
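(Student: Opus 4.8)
The plan is to work entirely in the $f$-DP / GDP framework of \citet{dong2019gaussian} and reduce the privacy of \dpis to that of \dpsgd at the uniform rate $\weightfunc^*$. The first observation is that \dpis (\cref{alg:dp_is}) and \dpsgd differ \emph{only} in the subsampling step: in each iteration both clip per-example gradients to norm $C$, add isotropic Gaussian noise of scale $\sigma C$, normalize by the batch size, and take a gradient step. These operations are structurally identical, so the per-iteration mechanism of each is a subsampled Gaussian of the same form that differs only through the subsampling distribution. Consequently it suffices to show that the one-step trade-off function of \dpis, maximized over neighbouring datasets, dominates (is pointwise at least as large as) that of \dpsgd run at rate $\weightfunc^*$, and then to lift this to all $T$ steps by composition.

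For the per-step comparison, fix a neighbouring pair differing in a single example $\xy_0$, say $\dataset' = \dataset \cup \{\xy_0\}$, and consider one iteration. Under Poisson subsampling the inclusion of each example is independent, so the only thing that changes between $\dataset$ and $\dataset'$ is whether $\xy_0$'s clipped gradient $\group_0$ (with $\|\group_0\|_2 \le C$) is added, which happens with probability $\weightfunc(\xy_0)$. Writing $W$ for the common sum of the other sampled clipped gradients and $N$ for the Gaussian noise, the two output distributions are $\mu_0 = P_W * N$ and $\mu_1 = (1 - \weightfunc(\xy_0))\,\mu_0 + \weightfunc(\xy_0)\,\mu_0^{+\group_0}$, where $\mu_0^{+\group_0}$ is $\mu_0$ translated by $\group_0$. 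Since the background term $W$ is shared by both distributions, joint convexity together with the translation-invariance of the hockey-stick divergence (equivalently, of the trade-off function) shows that marginalizing over $W$ can only bring $\mu_0$ and $\mu_1$ closer, so the worst case is the background-free one-dimensional subsampled Gaussian at rate $\weightfunc(\xy_0)$, call it $f_{\weightfunc(\xy_0)}$, with the worst case over $\group_0$ at $\|\group_0\|_2 = C$. The subsampled-Gaussian trade-off $f_p$ is monotone in the rate---sampling the differing example more frequently leaks more---so $\weightfunc(\xy_0) \le \weightfunc^*$ yields $f_{\weightfunc(\xy_0)} \ge f_{\weightfunc^*}$, i.e. the \dpis step is at least as private as a \dpsgd step at rate $\weightfunc^*$.

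Finally, the differing example $\xy_0$ is fixed across all iterations, so every step uses the same rate $\weightfunc(\xy_0)$; by the composition theorem for trade-off functions \citep{dong2019gaussian}, pointwise domination is preserved under $T$-fold composition, so the composed trade-off function of \dpis dominates that of \dpsgd at rate $\weightfunc^*$ over $T$ steps. Since by hypothesis the latter is certified $\mu(\weightfunc^*, \sigma, C, T)$-GDP, meaning its composed trade-off function lies above the Gaussian trade-off $G_{\mu(\weightfunc^*,\sigma,C,T)}$, the same lower bound holds for \dpis, which therefore satisfies $\mu(\weightfunc^*, \sigma, C, T)$-GDP; nothing in the argument uses how the accountant $\mu(\cdot)$ is implemented, giving the claimed black-box guarantee. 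I expect the main obstacle to be exactly the reduction in the second paragraph: unlike in \dpsgd, the non-target examples of \dpis are sampled at heterogeneous rates, so the background distributions $P_W$ genuinely differ between the two algorithms, and the joint-convexity/translation-invariance step is what lets me discard this difference and certify that only the target rate $\weightfunc(\xy_0)$ governs the per-step privacy loss.
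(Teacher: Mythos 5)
Your proof is correct and takes essentially the same route as the paper's: the paper establishes a non-uniform Poisson amplification lemma in the appendix showing the per-step trade-off function is $p^* f + (1-p^*)\mathsf{Id}$ (via the mixture lemma of \citet{bu2020deep}) and then invokes the black-box accountant, and your joint-convexity/translation-invariance argument over the shared background $W$ is precisely the content of that mixture lemma. Your explicit monotonicity-in-the-rate and $T$-fold composition steps are exactly what the paper compresses into ``immediate since GDP is a special case of $f$-privacy.''
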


\subsection{Gaussian Gradient Noise}\label{sec:algo_dr}
We showed that \dpis enjoys theoretical guarantees for both DP and DG.
DP models, however, often have lower test accuracy compared to standard training~\cite{chaudhuri2011differentially}. 
This can be an unnecessary disadvantage in settings where privacy is not required,
such as in our robustness applications.
Thus, we explore training algorithms which are inspired by our theory 
yet do not come with generic theoretical guarantees of DG.

Note that \dpsgd uses gradient \newterm{clipping} and \newterm{noise} (see \cref{alg:dp_is}).
Individually, these are used as \emph{regularization methods}
for improving stability and generalization~\cite{hardt2016train, neelakantan2015adding}.
Following this, we relax \dpis to only use gradient noise.
This sacrifices privacy guarantees in exchange for practical performance.
Specifically, we apply gradient noise to three standard algorithms for achieving group-distributional robustness: importance sampling (\iserm), importance weighting (\iwerm)~\cite{gretton2009covariate}, and \gdro~\cite{sagawa2019distributionally}. This results in the following variations: \noisyiserm, \noisyiwerm, \noisygdro, respectively. Similarly, we apply gradient noise to standard PGD adversarial training~\cite{madry2017towards}. See Appendix~\ref{app:algo} for details.

\section{Experiments}
\label{sec:experiments}
We empirically study the distributional generalization in real-world applications. 
The code for the experiments is available at  \url{https://github.com/yangarbiter/dp-dg}.

\paragraph{Datasets.}
We use the following datasets with group annotations:
CelebA~\citep{liu2015faceattributes},
UTKFace~\citep{zhifei2017cvpr},
iNaturalist2017 (iNat)~\cite{van2018inaturalist},
CivilComments~\cite{borkan2019nuanced},
MultiNLI~\cite{N18-1101, sagawa2019distributionally},
and ADULT~\cite{kohavi1996scaling}.
For every dataset, each example belongs to one group (e.g., CelebA) or multiple groups (e.g., CivilComments).
For example, in the CelebA dataset, there are four groups: ``blond male'', ``male with other hair color'', ``blond female'', and ``female with other hair color''. Additionally, we use the CIFAR-10~\cite{krizhevsky2009learning} dataset for the adversarial-overfitting application. We present more details on the datasets, their groups, and used model architectures in \cref{app:experiment}.

\subsection{Enforcing DG in Practice}
\label{sec:dg-in-practice}

We empirically confirm that a training procedure 
with DP guarantees also has a bounded DG gap.

\begin{wrapfigure}{r}{.33\linewidth}
\vspace{-1.4em}
\centering
\includegraphics[width=.33\textwidth]{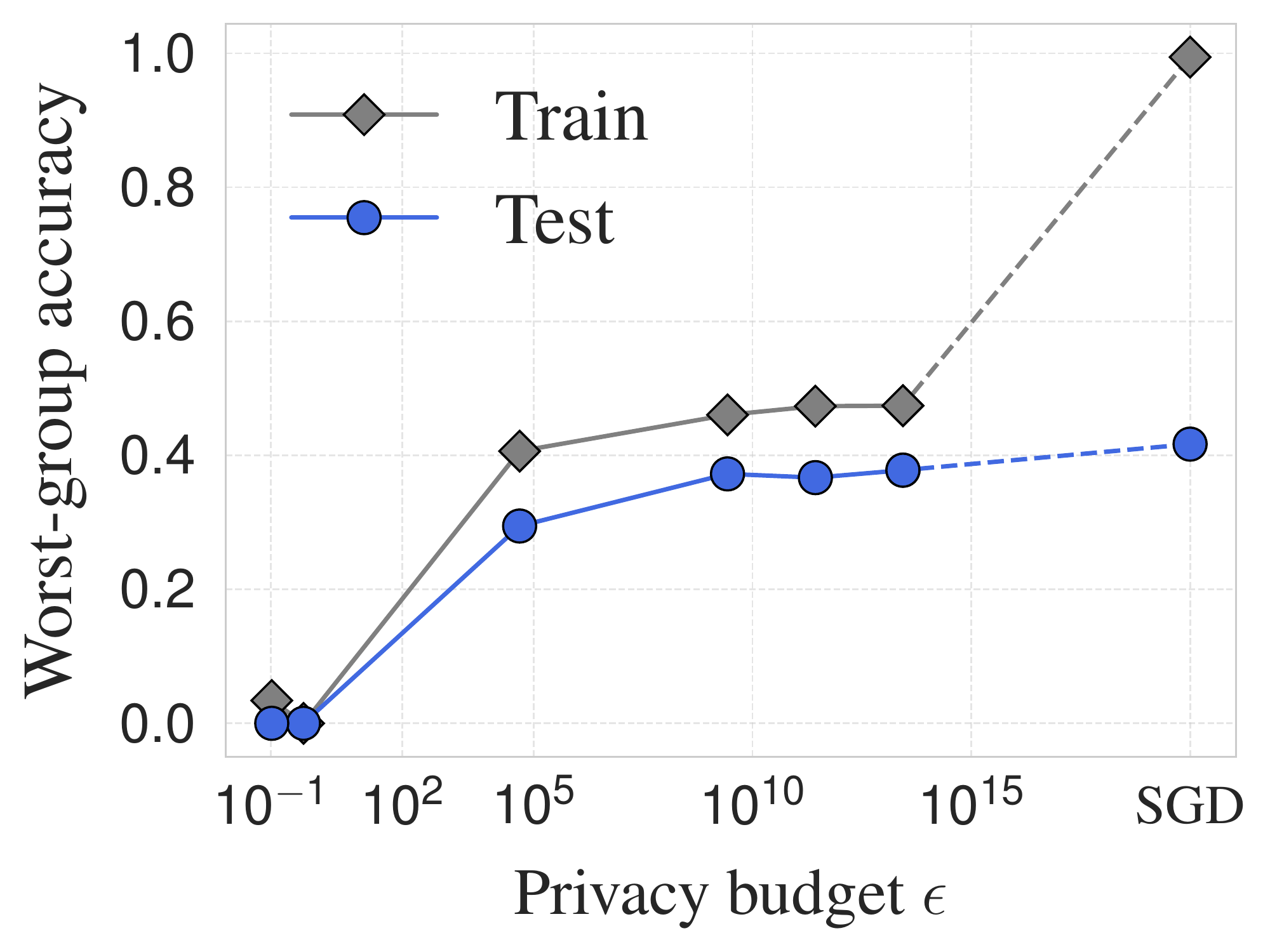}
\vspace{-0.2in}
\caption{\textbf{Privacy induces DG.} Train/test worst-case group accuracies as a function of privacy parameter $\epsilon$ of \dpsgd on CelebA (x axis). Increasing privacy reduces the generalization gap.}\label{fig:dpdg_gap_privacy}
\vspace{-2em}
\end{wrapfigure}

In practice, it is not possible to compute the exact DG gap. As a proxy in applications which concern subgroup performance in this section, and \cref{sec:exp-disparate,sec:exp-dro}, we use the difference between train-time and test-time worst-group accuracy. This (a) follows the empirical approach by \citet{nakkiran2020distributional} which proposes to estimate the gap in \cref{eq:dg-variational-defn} using a finite set of test functions, and (b) measures the aspect of distributional generalization that is relevant to our applications. We provide more details on this choice of the proxy measure in~\cref{app:empirical-dg}.

We train a model on CelebA using \dpsgd for varying levels of $\eps$. \cref{fig:dpdg_gap_privacy} shows that 
the gap between training and testing worst-group accuracy increases as 
the level of privacy decreases, which is consistent with our theoretical bounds. 
In \cref{app:dg-regularization-basic} we also explore how regularization methods which do not necessarily formally imply DG, can empirically improve DG.

\subsection{Disparate Impact of Differentially Private Models}\label{sec:exp-disparate}
We evaluate \dpis (\cref{alg:dp_is}),
and demonstrate that it can mitigate the disparate impact in realistic settings
where both privacy and fairness are required.

\begin{figure*}[t!]
    \centering
    \subfigure[Accuracy disparity (lower is better)]{
      \includegraphics[width=.3\textwidth]{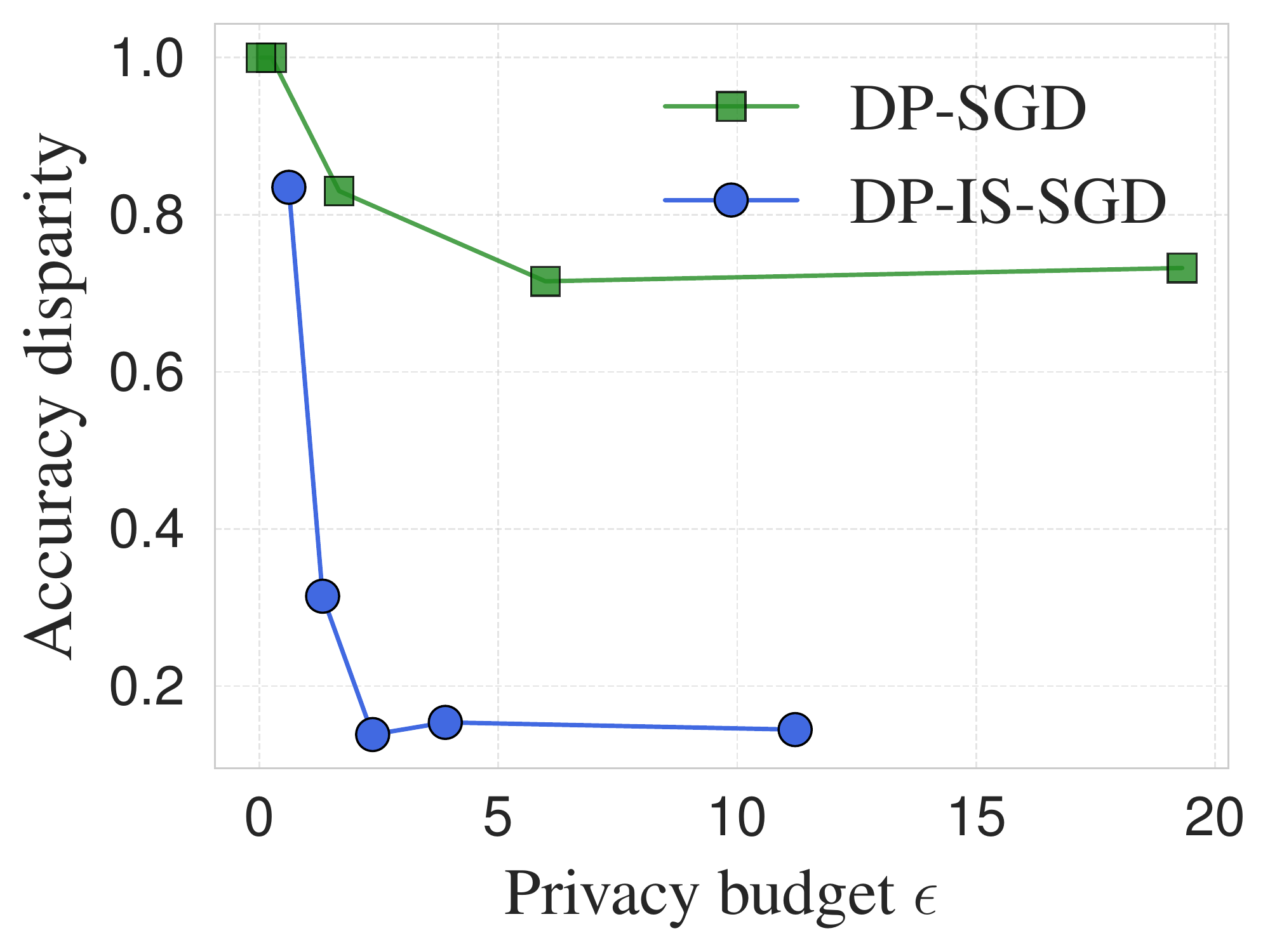}}
    \subfigure[Worst-group accuracy (higher is better)]{
      \includegraphics[width=.3\textwidth]{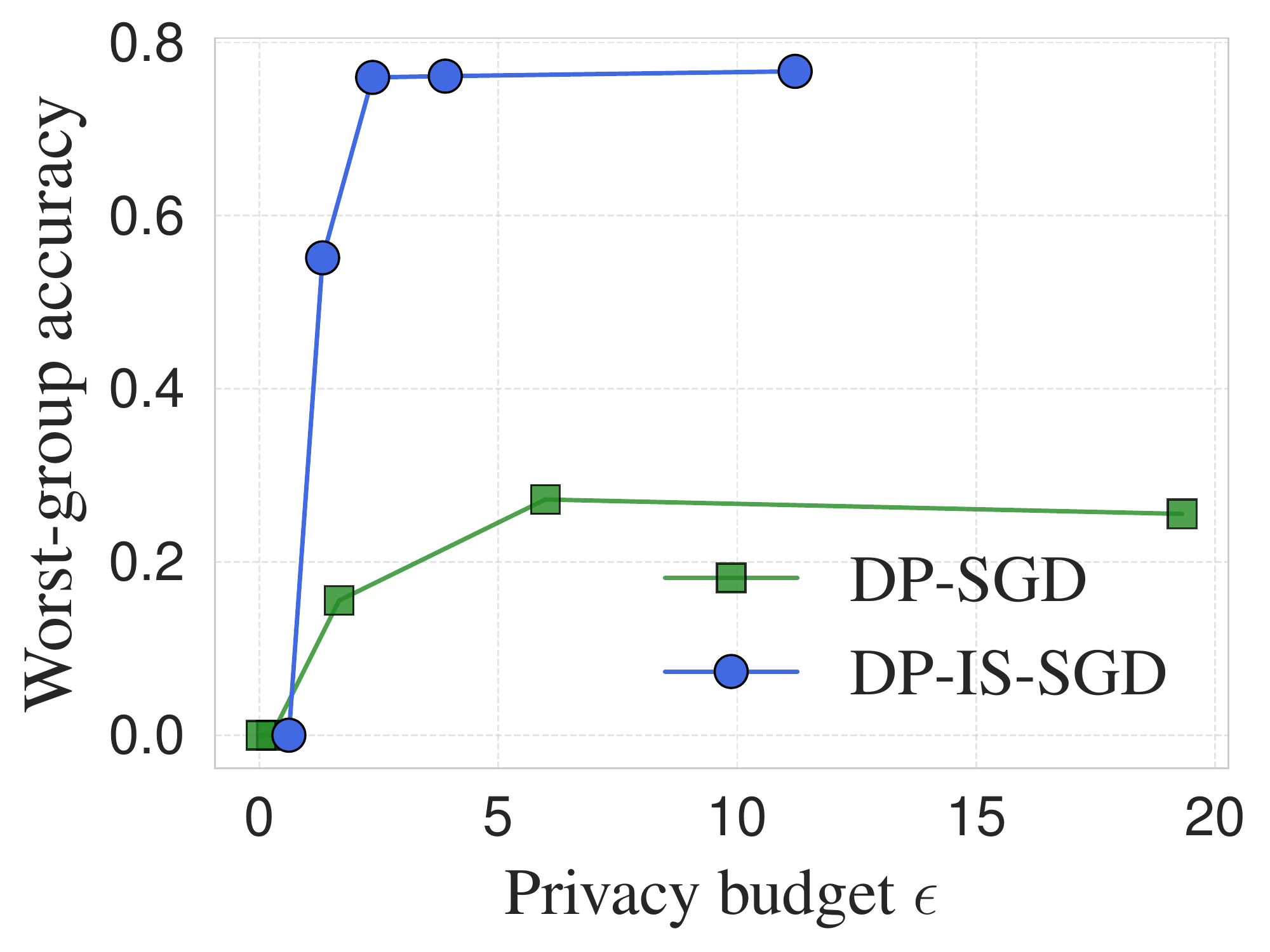}}
    \subfigure[Test accuracy (higher is better)]{
      \includegraphics[width=.3\textwidth]{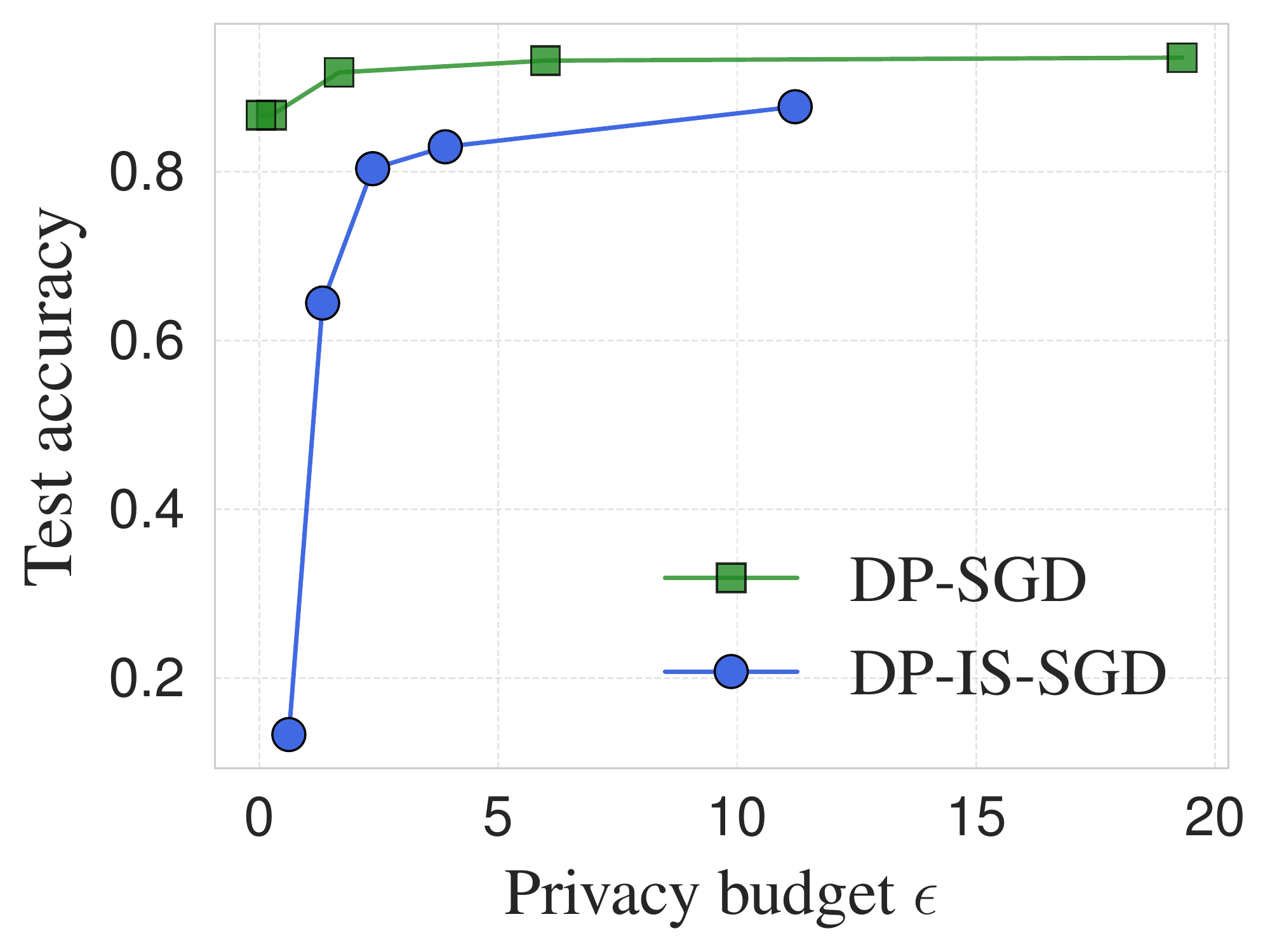}}
    \vspace{-0.05in}
    \caption{
    {\bf Importance Sampling Improves Disparate Impact of DP-SGD.}
    The accuracy disparity of the models trained with \dpsgd and \dpis on CelebA.
    Adding importance sampling (IS) improves disparate impact at most privacy budgets in this setting.
    We set $\delta = \nicefrac{1}{2n}$, where $n$ is the number of training examples.
    We use GDP accountant to compute the privacy budget $\varepsilon$.
    }
    \label{fig:disparity}
    \vspace{-0.11in}
\end{figure*}

\cref{fig:disparity} shows the accuracy disparity, test accuracy, and worst-case group accuracy, computed as in \cref{eq:fair-acc}, as a function of the privacy budget $\epsilon$. The models are trained with \dpsgd and \dpis.
When comparing \dpsgd and \dpis with the same or similar $\epsilon$, we observe that \dpis achieves lower disparity on all datasets.
However, this comes with a drop in average accuracy.
On CelebA, with $\epsilon \in [2, 12]$, \dpis has around 8 p.p. lower test accuracy than \dpsgd. 
At the same time, the disparity drop ranges from 40~p.p. to 60~p.p., which is significantly higher than  the accuracy drop.
We observe similar results on UTKFace. On iNat, however, although DP-IS-SGD decreases disparity, the overall test accuracy suffers a significant hit. This is likely because the minority subgroup is very small, which results in high maximum sampling probability~$p^*$, thus deteriorating the privacy guarantee. Details for UTKFace and iNat are in \cref{app:exp-disparate-extra}.

In summary, we find that \dpis can achieve lower disparity
at the same privacy budget compared to standard DP-SGD,
with mild impact on test accuracy.

\paragraph{Comparison to DP-SGD-F~\cite{xu2021removing}.} \dpsgd-F is a variant of \dpsgd which dynamically adapts gradient-clipping bounds for different groups to reduce the disparate impact. We did not manage to achieve good overall performance of \dpsgd-F on the datasets above. In \cref{app:exp-disparate-extra}, we compare it to \dpis on the ADULT dataset (used by \citet{xu2021removing}), finding that \dpis obtains lower disparity for the same privacy level, yet also lower overall accuracy.

\subsection{Group-Distributionally Robust Optimization}\label{sec:exp-dro}

We investigate whether our proposed versions of standard algorithms with Gaussian gradient noise (\cref{sec:algo_dr}) can improve group-distributional robustness.
To do so, we evaluate empirical DG using worst-group accuracy as a proxy for DG gap as in \cref{sec:dg-in-practice},
following the evaluation criteria in prior work~\citep{sagawa2019distributionally,idrissi2022simple}.
State-of-the-art (SOTA) methods apply $\ell_2$ regularization and early-stopping to achieve the best performance.
We compare three baselines with $\ell_2$ regularization, \iserm-$\ell_2$, \iwerm-$\ell_2$, and \gdro-$\ell_2$ to our noisy-gradient variations as well as \dpis.
We use the validation set to select the best-performing regularization parameter and epoch (for early stopping) for each method.
See \cref{app:exp-dro-extra} for details on the experimental setup.

\cref{tab:sota_table} shows the worst-group accuracy of each algorithm on five datasets.
When comparing \iserm, \iwerm, and \gdro with their noisy counterparts,
we observe that the noisy versions in general have similar or slightly better performance compared to non-noisy counterparts. For instance, IS-SGD-n improves the SOTA results on CivilComments dataset. 
This showcases that in terms of learning distributionally robust models, \textit{noisy gradient can be a more effective regularizer than the currently standard $\ell_2$ regularizer}. 
We also find that \dpis improves on baseline methods or even achieves SOTA-competetitive performance on several datasets. For instance, on CelebA and MNLI, \dpis achieves better performance than IS-SGD-$\ell_{2}$.
This is surprising, as DP tends to deteriorate performance. This suggests that distributional robustness and privacy
are not incompatible goals.
Moreover, DP can be a useful tool even when privacy is not required.

\begin{table}[t!]
    \centering
    \caption{
    \textbf{Our noisy-gradient algorithms produce competitive results 
    compared to counterparts with $\ell_2$ regularization.}
    The table shows the worst-group accuracy of each algorithm.
    Baselines are in the top rows; our algorithms are in the bottom.
    For gDRO-$\ell_2$-SOTA, we show avg. $\pm$ std. over five runs from \citet{idrissi2022simple}.
    For CelebA, we show avg. $\pm$ std. over three random splits.
    }
    \label{tab:sota_table}
    \resizebox{.57\linewidth}{!}{
    \setlength{\tabcolsep}{3pt}
    \begin{tabular}{l|ccccc}
    \toprule
           &  CelebA & UTKFace & iNat. & Civil. & MNLI \\
    \midrule
      \erm-$\ell_2$       & 73.0 $\pm$ 2.2  & 86.3  & 41.8  & 57.4 & 67.9 \\
      \iserm-$\ell_2$     & 82.4 $\pm$ 0.5  & 85.8  & 70.6  & 64.3 & 70.4 \\
      \iwerm-$\ell_2$\tablefootnote{\iwerm numbers are different from Fig.~\ref{fig:CelebA}, as in the figure we do not apply regularization.}
                          & \textbf{89.0} $\pm$ 0.9 & 86.5  & 67.6 & 65.7 & 68.1 \\
      \gdro-$\ell_2$      & 84.5 $\pm$ 0.8 & 85.2  & 67.3 & 67.3 & 75.9 \\
      \gdro-$\ell_2$-SOTA & 86.9 $\pm$ 0.5 & --- & --- & 69.9 $\pm$ 0.5 & \textbf{78.0} $\pm$ 0.3 \\
    \midrule
      \dpis         & 86.0 $\pm$ 0.8 &  82.5          & 51.4 & 70.4 & 72.3 \\
      \noisyiserm   & 84.9 $\pm$ 1.0 &  85.5    & \textbf{71.0} & \textbf{71.9} & 70.8 \\
      \noisyiwerm   & \textbf{88.5} $\pm$ 0.4   & \textbf{88.5} & 70.9 & 69.9 & 69.7 \\
      \noisygdro    & 83.3 $\pm$ 0.5  & 87.5   & 56.4 & 71.3 & \textbf{78.0}  \\
    \bottomrule
    \end{tabular}
    } %
    
\end{table}

\subsection{Mitigating Robust Overfitting}
\label{sec:exp-rob-overfitting}

\begin{figure}[t]
    \centering
    \subfigure[Gen. gap of robust accuracy (lower is better)]{\includegraphics[width=.36\linewidth]{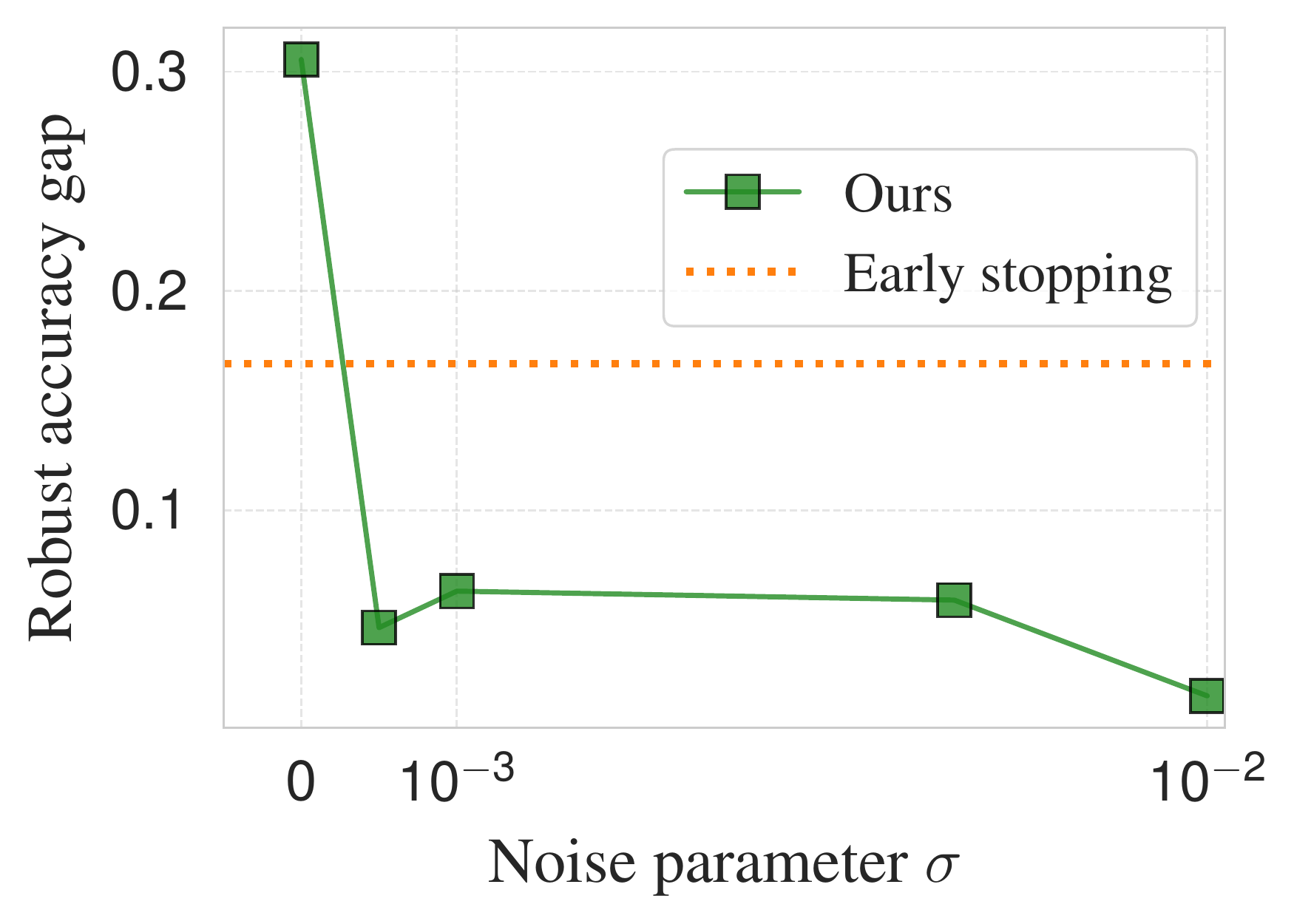}}~
        \subfigure[Robust accuracy (higher is better)]{\includegraphics[width=.36\linewidth]{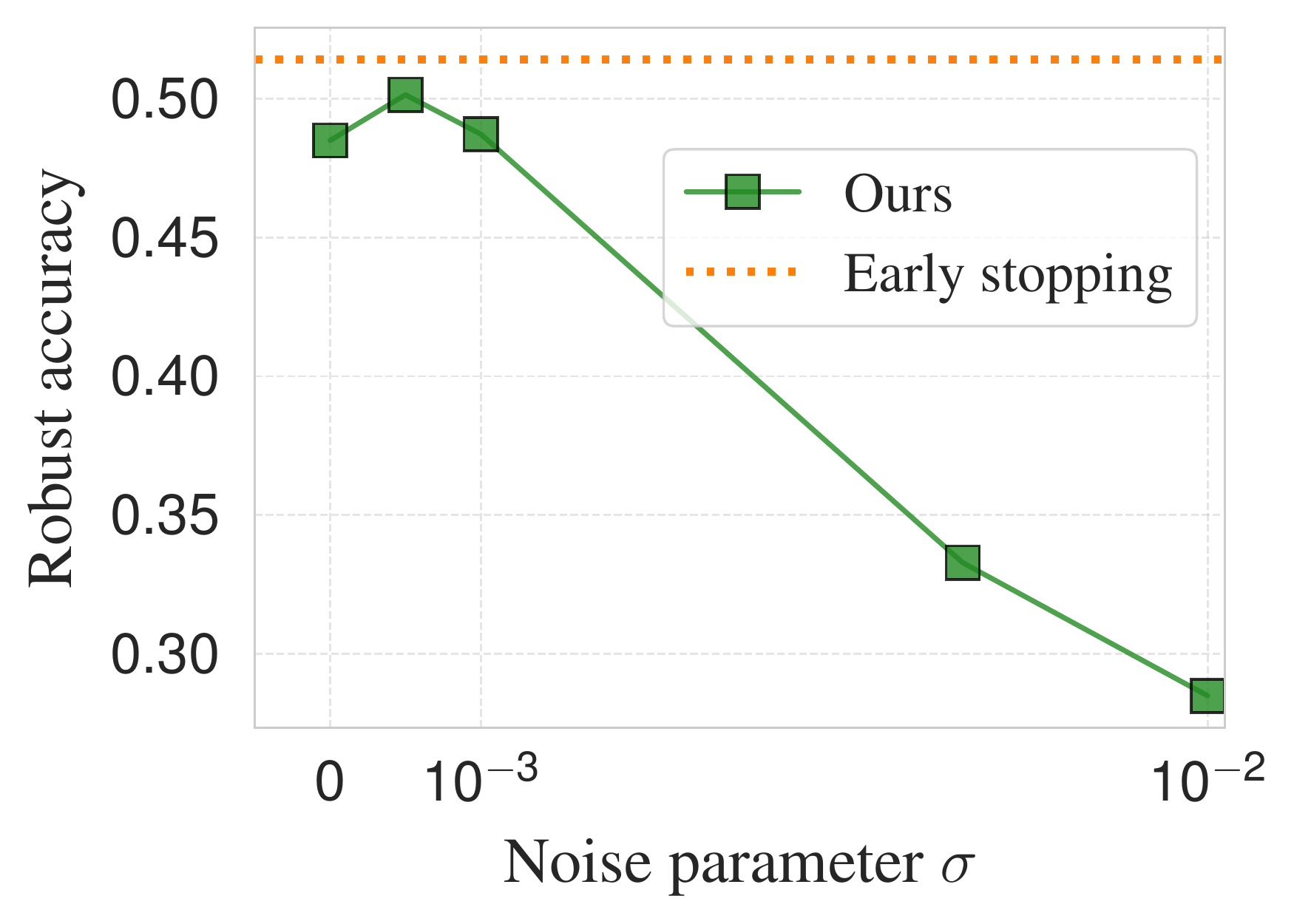}}
    \caption{\textbf{Noisy gradient reduces overfitting in adversarial training.} We show the generalization gap of robust accuracy (left), and test-time robust accuracy (right) of adversarially trained models with different levels of noise magnitude. The dash orange lines represent the performance of adversarial training with early stopping. The model trained without noise exhibits ``robust overfitting'' of about 30 p.p. Gradient noise reduces the generalization gap by more than 3$\times$ for all values of the noise parameter at a cost of decreased robust accuracy as the noise gets larger.}
    \label{fig:adv}
    \vspace{-0.1in}
\end{figure}

As in the previous section, we expect that a modification of a standard projected gradient descent (PGD) method for adversarial training~\cite{madry2017towards} with added Gaussian gradient noise (\cref{sec:algo_dr}) improves the generalization behavior of adversarial training.

To verify this, we adversarially train models on the CIFAR-10 dataset with varying levels of the noise magnitude. We provide more details on the setup in \cref{app:exp-rob-overfitting-extra}. \cref{fig:adv} shows that in standard adversarial training without noise the gap between robust training accuracy and robust test accuracy is large at approximately 30 p.p., which is consistent with the prior observations of \citet{rice2020overfitting}. By injecting noise into the gradient, our proposed approach decreases the generalization gap of robust accuracy by more than 3$\times$ to less than 10 p.p. Surprisingly, in our experiments, training with gradient noise achieves both a small adversarial accuracy gap \textit{and} better adversarial test accuracy compared to standard adversarial training, when using a small noise magnitude ($\sigma=5 \times 10^{-4}$).
In terms of resulting robust accuracy, the method's performance is comparable to early stopping, identified as the most effective way to prevent robust overfitting by \citet{rice2020overfitting}.
These experimental results demonstrate how WYSIWYG can be a useful design principle in practice. 

\section{Related Work}
\label{sec:related}

{\bf DP and Strong Generalization.}
DP is known to imply a stronger than standard
notion of generalization, called \emph{robust generalization}\footnote{Unrelated to ``robust overfitting'' in adversarial training.}
~\citep{cummings2016adaptive, bassily2016algorithmic}.
Robust generalization can be thought as a
high-probability counterpart of DG: generalization holds with high probability
over the train dataset, not only on average over datasets.
We focus on our notion of DG for both conceptual and theoretical simplicity.
A more comprehensive discussion of relations to robust generalization is in Appendix~\ref{app:rob-gen}.
It can also be useful to consider intermediary definitions, varying in strength from DG to robust generalization. In~\cref{app:strong-dg},
we introduce such an notion (``strong DG'') and show its connections to DP. 
Other than robust generalization, our connections in \cref{sec:dg-theory} can also be derived from weaker generalization bounds that rely on information-theoretic measures~\cite{steinke2020reasoning}. We detail this in~\cref{app:it-bounds}. 

{\bf Disparate Impact of DP.}
\citet{bagdasaryan2019differential, pujol2020fair} have shown that ensuring DP in algorithmic systems can cause error disparity across population groups. \citet{xu2021removing} proposed a variant of \dpsgd for reducing disparate impact. We compare our method to DP-SGD-F in \cref{app:exp-disparate-extra}.
In another line of related work, \citet{sanyal2022unfair,cummings2019compatibility} show fundamental trade-offs between model's loss and DP training. As our theoretical results concern generalization, not loss per se, our results do not contradict these theoretical trade-offs. We discuss the relationship in detail in \cref{app:fairness-dp-tensions}. 

{\bf Group-Distributional Robustness.}
Group-distributional robustness aims to improve the worst-case group performance.
Existing approaches include using worst-case group loss~\citep{mohri2019agnostic, sagawa2019distributionally, zhang2020coping}, balancing majority and minority groups by reweighting or subsampling~\citep{byrd2019effect, sagawa2020investigation, idrissi2022simple}, leveraging generative models~\citep{goel2020model}, and applying various regularization techniques~\citep{sagawa2019distributionally, cao2019learning}.
Although some work~\citep{sagawa2019distributionally, cao2019learning} discusses the importance of
regularization in distributional robustness, they have not explored potential reasons for this (e.g. via the connection to generalization).
Another line of work studies how to improve group performance without group annotations~\citep{duchi2021statistics, liu2021just, creager2021environment}, which is a different setting from ours as we assume the group annotations are known.

{\bf Robust Overfitting.} \citet{rice2020overfitting,yu2022understanding} have shown that adversarially trained models tend to overfit in terms of robust loss. \citet{rice2020overfitting} proposed to use regularization to mitigate overfitting, but the noisy gradient has not been explored for this. We showed that the WYSIWYG framework can serve as an alternative direction for mitigating and explaining this issue.

\section{Conclusions and Future Work}
We argue that a ``What You See is What You Get'' property, which we formalize through the notion of distributional generalization (DG), can be desirable for learning algorithms, as it enables principled algorithm design in settings including deep learning. We show that this property is possible to achieve with DP training.
This enables us to leverage advances in DP to enforce DG in many applications.

We propose enforcing DG as a general design principle,
and we use it to construct simple yet effective algorithms in three settings.
In certain fairness settings,
we largely mitigate the disparate impact of differential privacy
by using importance sampling and enforcing DG in our new algorithm \dpis.
In our analysis, however, the privacy and DG guarantees of DP-IS-SGD deteriorate
in the presence of very small groups.
Future work could explore individual-level accounting~\cite{feldman2021individual} for a tighter analysis.
In certain worst-case generalization settings, inspired by \dpsgd, we propose using
a noisy-gradient regularizer.
Compared to SOTA algorithms in DRO, noisy gradient 
achieves competitive results
across many standard benchmarks.
In certain adversarial-robustness settings, our proposed noisy-gradient regularizer significantly reduces robust overfitting. An interesting direction for future work would be to explore its effectiveness in large-scale settings, e.g., ImageNet~\citep{croce2022interplay}. 
We hope future work can explore extending
this design principle to ensure generalization of other properties, such as calibration and counterfactual fairness.

\subsubsection*{Acknowledgements}
We thank Kamalika Chaudhuri, Benjamin L. Edelman,
Saurabh Garg, Gautam Kamath, Maksym Andriushchenko, and Cassidy Laidlaw
for useful feedback on an early draft.
BK acknowledges support from the Swiss National Science Foundation with grant 200021-188824.
Yao-Yuan Yang thanks NSF under CNS 1804829 and ARO MURI W911NF2110317 for the research support.
Yaodong Yu acknowledges support from the joint Simons Foundation-NSF DMS grant \#2031899.
PN is grateful for support of
the NSF and the
Simons Foundation for the Collaboration on the Theoretical
Foundations of Deep Learning\footnote{\url{https://deepfoundations.ai}}
through awards DMS-2031883 and \#814639.

\clearpage

\bibliographystyle{plainnat}
\bibliography{main}

\clearpage
\section*{Checklist}

\begin{enumerate}

\item For all authors...
\begin{enumerate}
  \item Do the main claims made in the abstract and introduction accurately reflect the paper's contributions and scope?
    \answerYes{}
  \item Did you describe the limitations of your work?
    \answerYes{We address the limitations where appropriate, e.g., high-level limitations in \cref{sec:applications}, and the fact that some methods we use in the empirical evaluations deviate from theory in \cref{sec:algorithms,sec:experiments}.}
  \item Did you discuss any potential negative societal impacts of your work?
    \answerYes{In \cref{sec:applications} we caution against using WYSIWYG for fairness-washing~\cite{bietti2020ethics} through technical solutionism. Other than this, our potential negative societal impacts are the same as with any other academic work which aims to improve the performance of deep learning: we risk improving performance of deep learning in downstream applications that are harmful, e.g., privacy-invasive, or violating other civil liberties. It is beyond the scope of our work to discuss such broad issues.}
  \item Have you read the ethics review guidelines and ensured that your paper conforms to them?
    \answerYes{}
\end{enumerate}

\item If you are including theoretical results...
\begin{enumerate}
  \item Did you state the full set of assumptions of all theoretical results?
    \answerYes{}
        \item Did you include complete proofs of all theoretical results?
    \answerYes{See \cref{app:proofs}.}
\end{enumerate}

\item If you ran experiments...
\begin{enumerate}
  \item Did you include the code, data, and instructions needed to reproduce the main experimental results (either in the supplemental material or as a URL)?
    \answerYes{We included the experiment code as supplemental material.}
  \item Did you specify all the training details (e.g., data splits, hyperparameters, how they were chosen)?
    \answerYes{See \cref{app:experiment}.}
        \item Did you report error bars (e.g., with respect to the random seed after running experiments multiple times)?
    \answerYes{For experimental settings in which it was feasible for us to repeat experiments multiple times, we reported the standard errors (\cref{tab:sota_table}).}
        \item Did you include the total amount of compute and the type of resources used (e.g., type of GPUs, internal cluster, or cloud provider)?
    \answerYes{See \cref{app:technical}.}
\end{enumerate}

\item If you are using existing assets (e.g., code, data, models) or curating/releasing new assets...
\begin{enumerate}
  \item If your work uses existing assets, did you cite the creators?
    \answerYes{See \cref{app:technical}.}
  \item Did you mention the license of the assets?
    \answerYes{See \cref{app:technical}.}
  \item Did you include any new assets either in the supplemental material or as a URL?
    \answerNA{}
  \item Did you discuss whether and how consent was obtained from people whose data you're using/curating?
    \answerYes{See \cref{app:technical}.}
  \item Did you discuss whether the data you are using/curating contains personally identifiable information or offensive content?
    \answerYes{See \cref{app:technical}.}
\end{enumerate}

\item If you used crowdsourcing or conducted research with human subjects...
\begin{enumerate}
  \item Did you include the full text of instructions given to participants and screenshots, if applicable?
    \answerNA{}
  \item Did you describe any potential participant risks, with links to Institutional Review Board (IRB) approvals, if applicable?
    \answerNA{}
  \item Did you include the estimated hourly wage paid to participants and the total amount spent on participant compensation?
    \answerNA{}
\end{enumerate}

\end{enumerate}

\newpage
\appendix
\onecolumn

\section{Related Work Details}
\label{app:related}

\subsection{Differential Privacy and Robust Generalization}
\label{app:rob-gen}
DP is known to imply a stronger 
notion of generalization, called \emph{robust generalization}, which
is a ``tail bound'' version of DG \citep{dwork2015generalization, cummings2016adaptive, bassily2016algorithmic}.
The original motivations for robust generalization are slightly different, 
but in our notation, a training procedure $\train$ is said to satisfy $(\gamma, \eta)$-Robust Generalization if and only if for any
test $\test: \sD \times \Theta \rightarrow [0, 1]$, we have
\begin{equation*}
    \Pr_{\substack{\dataset \sim \mathcal{D}^n, \train}} \left( \big|\E_{\xy \sim \dataset} \test\big(\xy; \train(\dataset)\big) -
    \E_{\xy \sim \mathcal{D}} \test\big(\xy; \train(\dataset)\big)\big| > \gamma \right) \leq \eta.
\end{equation*}
Comparing this to the definition of DG (Definition~\ref{def:dg}), it is immediate\bnote{Can you elaborate?} that any procedure that satisfies $(\gamma, \eta)$-Robust Generalization, also satisfies $(\gamma + \eta)$-DG.
Any training method satisfying $(\epsilon, \delta)$-DP also satisfies $(\mathcal{O}(\epsilon), \mathcal{O}(\delta))$-robust generalization, as long as the sample size $n$ is of size $\Omega(\log(1/\delta)/\epsilon^2)$ \citep[Theorem 7.2]{bassily2016algorithmic}, therefore it satisfies $\mathcal{O}(\epsilon + \delta)$-DG by the previous implication. Thus, it is possible to recover the result that DP implies DG as a consequence of these previous works, although with
looser bounds.

The difference between Distributional Generalization and Robust Generalization
is that DG considers all quantities in expectation,
while robust generalization considers tail bounds with respect to the train dataset.
We focus on DG for two reasons:
First, we believe DG is conceptually simpler, as
it can be seen as simply the TV distance between two natural distributions,
and does not involve additional parameters.
This simplicity is conceptually useful to the algorithm designer,
but also enables us to prove simpler tight theoretical bounds
that are independent of sample size.
Second, it is often possible to lift results about DG to
the stronger setting of robust generalization, with additional bookkeeping.
Thus, we focus on DG in this paper, with the understanding
that stronger guarantees
can be obtained for these methods if desired.

\subsection{Information-Theoretic Generalization Bounds}
\label{app:it-bounds}
Other than robust generalization, it is also possible to obtain bounds on generalization of arbitrary test (loss) functions using information-theoretic measures~\cite{xu2017information, russo2019much, steinke2020reasoning}. Recently, \citeauthor{steinke2020reasoning} showed that one such information-theoretic measure---\newterm{conditional mutual information} (CMI) between training algorithm outputs and the training dataset---is bounded if the training algorithm is DP or TV stable. Thus, we could relate stability and DG as in \cref{sec:from-stability-to-dg} using CMI as an intermediate tool.

In particular, \citeauthor{steinke2020reasoning} show that for a given $\phi: \sD \times \Theta \rightarrow [0, 1]$:
\begin{equation}\label{eq:dp-cmi}
    \Big|\E_\subtrain \test\big(\xy; \train(\dataset)\big) -
    \E_\subtest \test\big(\xy; \train(\dataset)\big)\Big| \leq \sqrt{ \frac{2}{n} \cdot \mathrm{CMI}_\datagen(\train)},
\end{equation}
where $\mathrm{CMI_\datagen}(\train)$ is the conditional mutual information of the training algorithm with respect to the data distribution. If the training algorithm satisfies $(\epsilon, 0)$-DP, they also show that $\mathrm{CMI_\datagen}(\train) \leq \frac{n}{2} \epsilon^2$, where $n$ is the dataset size. Plugging this into \cref{eq:dp-cmi}, we can see that the generalization upper bound (right-hand side) is~$\epsilon$. This is significantly looser than our bound in \cref{sec:from-stability-to-dg}, as illustrated in \cref{fig:bound}.

A recent line of work on information-theoretic bounds explores sharper generalization bounds using individual-level measures~\cite{bu2020tightening,haghifam2020sharpened}. Analogously, as a direction for future work, it could also be possible to obtain tighter bounds on DG using per-instance notions of stability~\cite{wang2017per, feldman2021individual}.

\subsection{Tension between Differential Privacy and Algorithmic Fairness}
\label{app:fairness-dp-tensions}
Beyond empirical observations that training with DP results in disparate impact on performance across subgroups~\cite{bagdasaryan2019differential, pujol2020fair}, \citet{cummings2019compatibility} and, more recently, \citet{sanyal2022unfair} theoretically analyze the inherent tensions between DP and algorithmic fairness.

It might appear that this trade-off contradicts our results in which we claim that using DP or similar noise-adding algorithms with additional train-time interventions can reduce disparate impact. However, \citet{cummings2019compatibility} and \citet{pujol2020fair} discuss the relationship between privacy and disparate performance (accuracy or false-positive/false-negative rates), whereas we discuss the relationship between privacy and generalization. Even if a DP model has to incur at least a certain error on small subgroups on average \cite[Lemma 1]{sanyal2022unfair}, this error is guaranteed to be similar at train time and test time (from our theoretical results in \cref{sec:from-stability-to-dg}). 

In terms of empirical results, the lower bound on subgroup error in Lemma 1 from \citet{sanyal2022unfair} vanishes for subgroups of size greater than 100 even for small values of epsilon (e.g., 0.1). The subgroups and values of epsilon in our experiments in \cref{sec:experiments} are all larger than this, thus in our regime we can achieve meaningful subgroup performance using the \dpis algorithm despite the fundamental trade-off.
\section{Proofs Omitted in the Main Body}\label{app:proofs}

\subsection{TV-Stability implies Distributional Generalization}
\begin{proof}[Proof of \cref{stmt:tv-to-dg}]
First, observe that the following distributions are equivalent as the dataset is an i.i.d. sample:
\begin{equation}\label{eq:dist-equiv}
    \begin{aligned}
        \Pr_{\substack{\dataset \sim \datagen^n \\ \xy \sim \dataset}}[\test\big(\xy; \model(\dataset)\big)] &\equiv
        \Pr_{\substack{\dataset \sim \datagen^{n - 1} \\ \xy \sim \datagen}}[\test\big(\xy; \model({\dataset \cup \{\xy\}})\big)], \\
        \Pr_{\substack{\dataset \sim \datagen^n \\ \xy \sim \datagen}}[\test\big(\xy; \model(\dataset)\big)] &\equiv
        \Pr_{\substack{\dataset \sim \datagen^{n - 1} \\ \xy \sim \datagen \\ \xy' \sim \datagen}}[\test\big(\xy'; \model({\dataset \cup \{\xy\}})\big)]. \\
    \end{aligned}
\end{equation}
It is thus sufficient to analyze the equivalent distributions instead. By the post-processing property of differential privacy, for any dataset $\dataset \in \sD^{n - 1}$, any two examples $\xy, \xy' \in \sD$, and any set $K \subseteq \{0, 1\}$:
\[
    \Pr[ \test\big(\xy; \model({\dataset \cup \{\xy\}})\big) \in K] \leq \Pr[ \test\big(\xy; \model({\dataset \cup \{\xy'\}})\big) \in K] + \delta,
\]
as datasets $\dataset \cup \{\xy\}$ and $\dataset \cup \{\xy'\}$ are neighbouring. Taking the expectation of both sides over $\xy, \xy' \sim \datagen$ and $\dataset \sim \datagen^{n - 1}$, we get:
\begin{equation}\label{eq:dp-bound-1}
\begin{aligned}
    \Pr_{\substack{\dataset \sim \datagen^{n - 1} \\ \xy \sim \datagen}}[ \test\big(\xy; \model({\dataset \cup \{\xy\}})\big) \in K]
    &\leq \Pr_{\substack{\dataset \sim \datagen^{n - 1} \\ \xy \sim \datagen\\ \xy' \sim \datagen}}[ \test\big(\xy; \model({\dataset \cup \{\xy'\}})\big) \in K] + \delta \\
    &= \Pr_{\substack{\dataset \sim \datagen^{n - 1} \\ \xy \sim \datagen\\ \xy' \sim \datagen}}[ \test\big(\xy', \model({\dataset \cup \{\xy\}})\big) \in K] + \delta,
\end{aligned}
\end{equation}
where the last equality is simply renaming of the variables for convenience. Note that analogously we also can obtain a symmetric bound:
\begin{equation}\label{eq:dp-bound-2}
\begin{aligned}
    \Pr_{\substack{\dataset \sim \datagen^{n - 1} \\ \xy \sim \datagen \\ \xy' \sim \datagen}}[ \test\big(\xy', \model({\dataset \cup \{\xy\}})\big) \in K]
    &\leq \Pr_{\substack{\dataset \sim \datagen^{n - 1} \\ \xy \sim \datagen}}[ \test\big(\xy; \model({\dataset \cup \{\xy\}})\big) \in K] + \delta,
\end{aligned}
\end{equation}
The total variation between these two distributions is bounded:
\[
    \begin{aligned}
    & \tv \Big(\Pr_{\substack{\dataset \sim \datagen^{n - 1} \\ \xy \sim \datagen}}[ \test\big(\xy; \model({\dataset \cup \{\xy\}})\big)],
    \Pr_{\substack{\dataset \sim \datagen^{n - 1} \\ \xy \sim \datagen\\\xy' \sim \datagen}}[ \test\big(\xy', \model({\dataset \cup \{\xy\}})\big)] \Big) \\
    = &\sup_{K \subseteq \mathsf{range}(\test)} \Big| \Pr_{\substack{\dataset \sim \datagen^{n - 1} \\ \xy \sim \datagen}}[ \test\big(\xy; \model({\dataset \cup \{\xy\}})\big) \in K]
    - \Pr_{\substack{\dataset \sim \datagen^{n - 1} \\ \xy \sim \datagen \\ \xy' \sim \datagen}}[ \test\big(\xy', \model({\dataset  \cup \{\xy\}})\big) \in K] \Big| \leq \delta,
    \end{aligned}
\]
where the last inequality is by \cref{eq:dp-bound-2}. Using the equivalences in \cref{eq:dist-equiv} we can see that:
\[
    \begin{aligned}
    \tv \Big(\Pr_{\substack{\dataset \sim \datagen^{n} \\ \xy \sim \dataset}}[ \test\big(\xy; \model(\dataset)\big)],
    \Pr_{\substack{\dataset \sim \datagen^{n} \\ \xy \sim \datagen}}[ \test\big(\xy; \model(\dataset)\big)] \Big)
    = \big|\E_\subtrain[\test(\xy; \model(\dataset)] - \E_\subtest[\test(\xy; \model(\dataset)]\big|
    &\leq \delta,
    \end{aligned}
\]
which is the sought result.
\end{proof}

\subsection{Tight Bound on TV-Stability from DP}\label{sec:tv-to-dp}
To prove \cref{stmt:dp-to-tv-tight}, we make use of the hypothesis-testing interpretation of DP~\cite{wasserman2010statistical}. Let us define the hypothesis-testing setup and the two types of errors in hypothesis testing. For any two probability distributions $P$ and $Q$ defined over $\sD$, let $\test: \sD \rightarrow \{0, 1\}$ be a \newterm{hypothesis-testing decision rule} that aims to tell whether a given observation from the domain $\sD$ comes from $P$ or $Q$.

\begin{definition}[Hypothesis-testing FPR and FNR] Without loss of generality, the \newterm{false-positive error rate} $\alpha_\test$ (FPR, or type I error rate), and the \newterm{false-negative error rate} $\beta_\test$ (FNR, or type II error rate) of the decision rule $\test: \sD \rightarrow [0, 1]$ are defined as the following probabilities:
\begin{equation}\label{eq:ht-errors}
\begin{aligned}
    \alpha_\test & \define \Pr_{\xy \sim P}[\test(\xy) = 1] = \E_P[\test], \\
    \beta_\test & \define \Pr_{\xy \sim Q}[\test(\xy) = 0] = 1 - \E_Q[\test].
\end{aligned}
\end{equation}
\end{definition}

A well-known result due to Le Cam provides the following relationship between the trade-off between the two types of errors and the total variation between the probability distributions:
\begin{equation}\label{eq:le-cam}
    \alpha_\test + \beta_\test \geq 1 - \tv(P, Q).
\end{equation}
DP is known to provide the following relationship between FPR and FNR of any decision rule:
\begin{proposition}[\citet{kairouz2015composition}]
Suppose that an algorithm $\train(\dataset)$ satisfies $(\epsilon, \delta)$-DP. Then, for any decision rule $\test: \sD \rightarrow [0, 1]$:
\begin{equation}
\begin{aligned}\label{eq:dp-ht}
    \alpha_\test + \exp(\epsilon) \, \beta_\test \geq 1 - \delta, \\
    \exp(\epsilon) \, \alpha_\test + \beta_\test \geq 1 - \delta.
\end{aligned}
\end{equation}
\end{proposition}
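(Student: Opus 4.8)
The plan is to reduce both inequalities in \cref{eq:dp-ht} to the defining inequality of $(\epsilon, \delta)$-DP (\cref{def-dp}) applied to a single carefully chosen set, and then lift the resulting bounds from deterministic to randomized tests by linearity. Throughout I identify $P = \train(\dataset)$ and $Q = \train(\dataset')$ for an arbitrary pair of neighbouring datasets, so that the two output distributions over $\Theta$ are related both as $\Pr_P[K] \leq \exp(\epsilon)\Pr_Q[K] + \delta$ and, by symmetry of the neighbouring relation, as $\Pr_Q[K] \leq \exp(\epsilon)\Pr_P[K] + \delta$, for every measurable $K \subseteq \Theta$. The decision rule is read as acting on the algorithm output $\params \in \Theta$, distinguishing $\train(\dataset)$ from $\train(\dataset')$.

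First I would treat a deterministic rule, i.e. $\test = \1_{K}$ for a measurable rejection region $K \subseteq \Theta$, with acceptance region $K^c = \Theta \setminus K$. Then $\alpha_\test = \Pr_P[K] = 1 - \Pr_P[K^c]$ and $\beta_\test = \Pr_Q[K^c]$. Applying DP to the set $K^c$ in the direction $\Pr_P[\cdot] \leq \exp(\epsilon)\Pr_Q[\cdot] + \delta$ gives $1 - \alpha_\test \leq \exp(\epsilon)\beta_\test + \delta$, which rearranges to the first inequality $\alpha_\test + \exp(\epsilon)\beta_\test \geq 1 - \delta$. For the second inequality I instead apply DP to $K$ itself in the opposite direction $\Pr_Q[\cdot] \leq \exp(\epsilon)\Pr_P[\cdot] + \delta$: since $\beta_\test = 1 - \Pr_Q[K]$ and $\alpha_\test = \Pr_P[K]$, this reads $1 - \beta_\test \leq \exp(\epsilon)\alpha_\test + \delta$, i.e. $\exp(\epsilon)\alpha_\test + \beta_\test \geq 1 - \delta$. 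Thus each bound is a one-line consequence of DP once the correct region and direction are selected.

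It remains to extend from deterministic to general randomized rules $\test : \Theta \to [0, 1]$. The key observation is that both error functionals are affine in $\test$: $\alpha_\test = \E_P[\test]$ and $\beta_\test = 1 - \E_Q[\test]$, so each of $\alpha_\test + \exp(\epsilon)\beta_\test$ and $\exp(\epsilon)\alpha_\test + \beta_\test$ is affine in $\test$. I would use the layer-cake identity $\test(\params) = \int_0^1 \1\{\test(\params) \geq t\}\,dt$ together with Fubini's theorem to write $\alpha_\test = \int_0^1 \alpha_{\1\{\test \geq t\}}\,dt$ and $\beta_\test = \int_0^1 \beta_{\1\{\test \geq t\}}\,dt$, where each $\1\{\test \geq t\}$ is the indicator of the measurable set $\{\params : \test(\params) \geq t\}$. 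Integrating the deterministic bounds over $t \in [0, 1]$ then yields the randomized bounds, since $\int_0^1 (1 - \delta)\,dt = 1 - \delta$.

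I do not expect a genuine obstacle here: the statement is essentially the hypothesis-testing restatement of DP, and the content lies entirely in choosing the acceptance/rejection region and the direction of the DP inequality. The only mildly technical point is the deterministic-to-randomized extension, which is a routine linearity/layer-cake argument; the one thing worth emphasizing is that the two inequalities are not symmetric under a single use of DP but require invoking the guarantee for both orderings of the neighbouring pair.
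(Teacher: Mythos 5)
Your proof is correct. Note that the paper itself does not prove this proposition---it is imported verbatim from \citet{kairouz2015composition}---so there is no internal proof to compare against; your argument (applying the $(\epsilon,\delta)$-DP inequality to the acceptance region $K^c$ for the first bound, to $K$ with the roles of the neighbouring datasets reversed for the second, and then lifting from indicator tests to randomized rules $\test \in [0,1]$ via the layer-cake identity $\test(\params) = \int_0^1 \1\{\test(\params) \geq t\}\,dt$ and Fubini) is exactly the standard derivation of the hypothesis-testing characterization of DP, and every step checks out, including the observation that both orderings of the neighbouring pair must be invoked. You are also right to read the decision rule as acting on the algorithm's output in $\Theta$ rather than on $\sD$ as the statement's notation literally suggests.
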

We can now prove \cref{stmt:dp-to-tv-tight}: \begin{proof}[Proof]
Consider a hypothesis-testing setup in which we want to distinguish between the distributions $\train(\dataset)$ and $\train(\dataset')$. Let us sum the two bounds in \cref{eq:dp-ht}:
\begin{equation}\label{eq:combined-kairouz}
\begin{aligned}
    (\exp(\epsilon) + 1) (\alpha_\test + \beta_\test) \geq 2(1 - \delta) \implies \alpha_\test + \beta_\test \geq \frac{2 - 2\delta}{\exp(\epsilon) + 1}.
\end{aligned}
\end{equation}
Let us take the optimal decision rule $\test^*$. In this case, the bound in \cref{eq:le-cam} holds exactly:
\[
    \tv(\train(\dataset), \train(\dataset')) = 1 - (\alpha_{\test^*} + \beta_{\test^*}).
\]
Combining this with \cref{eq:combined-kairouz}, we get:
\[
    \tv(\train(\dataset), \train(\dataset')) \leq 1 - \frac{2 - 2\delta}{\exp(\epsilon) + 1} = \frac{\exp(\epsilon) - 1 + 2\delta}{\exp(\epsilon) + 1}.
\]
\end{proof}

Next, we show that the upper bound is tight:
\begin{proposition}
\label{stmt:dp-to-tv-lb}
There is an algorithm $\train(\dataset)$ satisfying $(\varepsilon, \delta)$-DP, such that $\tv(\train(\dataset), \train(\dataset')) = \frac{\exp(\varepsilon) - 1 + 2\delta}{\exp(\varepsilon) + 1}$ for any two neighbouring datasets $S$ and $S'$.
\end{proposition}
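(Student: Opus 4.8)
The plan is to establish tightness by exhibiting the \emph{extremal} pair of output distributions that makes both directions of the $(\varepsilon,\delta)$-DP hypothesis-testing constraints in \cref{eq:dp-ht} simultaneously tight. Since $\tv(\train(\dataset),\train(\dataset'))$ depends only on the two output distributions $P \define \train(\dataset)$ and $Q \define \train(\dataset')$, it suffices to construct a pair of distributions over a binary output space $\{0,1\} \subseteq \Theta$ that satisfies the two-sided DP constraints and attains $\tv(P,Q) = \frac{\exp(\varepsilon)-1+2\delta}{\exp(\varepsilon)+1}$. The upper bound of \cref{stmt:dp-to-tv-tight} guarantees that no pair can exceed this value, so attaining it certifies that the constant cannot be improved.

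Concretely, I would take $P = \mathrm{Bernoulli}(p)$ and $Q = \mathrm{Bernoulli}(q)$ with
\[
    p = \frac{\exp(\varepsilon)+\delta}{\exp(\varepsilon)+1}, \qquad q = \frac{1-\delta}{\exp(\varepsilon)+1}.
\]
These are exactly the values where the two inequalities $p \le \exp(\varepsilon)\, q + \delta$ and $(1-q) \le \exp(\varepsilon)(1-p) + \delta$ hold with \emph{equality}; solving this $2\times 2$ linear system is the routine computation that pins down $p$ and $q$. One then checks that $p,q \in [0,1]$ for every $\delta \in [0,1]$, and that the two remaining DP inequalities $q \le \exp(\varepsilon)\, p + \delta$ and $1-p \le \exp(\varepsilon)(1-q)+\delta$ are automatically satisfied because $q < p$. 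Together these four inequalities are precisely the $(\varepsilon,\delta)$-DP conditions between $P$ and $Q$ (equivalently, the constraints of \cref{eq:dp-ht} applied to the optimal decision rule), so $P$ and $Q$ are a valid DP output pair, and a direct subtraction gives $\tv(P,Q) = p - q = \frac{\exp(\varepsilon)-1+2\delta}{\exp(\varepsilon)+1}$, matching the bound.

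Finally, I would promote this pair into a bona fide algorithm defined on all datasets: fix a distinguished example $\xy^\circ$ and let $\train(\dataset)$ output $\mathrm{Bernoulli}(q)$ if $\xy^\circ \in \dataset$ and $\mathrm{Bernoulli}(p)$ otherwise. Every neighbouring pair then yields outputs that are either identical (TV $0$) or equal to $\{P,Q\}$ (TV $c$), so the algorithm is globally $(\varepsilon,\delta)$-DP by the verified inequalities, and the extremal value $c$ is attained on neighbours that toggle $\xy^\circ$. The main obstacle here is not the TV computation but the two-sided verification --- confirming that the specific $p,q$ render \emph{both} Kairouz inequalities tight while keeping the other two slack --- together with a careful reading of the quantifier: by the triangle structure of the substitution-neighbour graph, it is in general impossible for a single mechanism to hit the maximal $c$ on \emph{every} pair of neighbours simultaneously, so the content that the construction certifies is that $c$ is exactly the \emph{worst-case} TV-stability over neighbours, which is precisely what shows the constant in \cref{stmt:dp-to-tv-tight} is optimal.
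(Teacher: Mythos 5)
Your proposal is correct and takes essentially the same route as the paper: the paper exhibits the four-outcome reduced mechanism of \citet{kairouz2015composition} and computes its total variation directly, while your $\mathrm{Bernoulli}(p)$/$\mathrm{Bernoulli}(q)$ pair with $p = \frac{e^{\varepsilon}+\delta}{e^{\varepsilon}+1}$, $q=\frac{1-\delta}{e^{\varepsilon}+1}$ is exactly the two-outcome coarsening of that mechanism (push forward outcomes $\{1,3\}$ to $1$ and $\{0,2\}$ to $0$), obtained by making the two Kairouz inequalities tight. Your closing remark about the quantifier is a fair reading --- the paper sidesteps it by taking the dataset space to be $\{0,1\}$ so that the extremal pair is the only neighbouring pair --- but the mathematical content is identical.
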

\begin{proof}
    We use the construction of the reduced mechanism by \cite{kairouz2015composition}. Consider a mechanism $\train: \{0, 1\} \rightarrow \{0, 1, 2, 3\}$, defined as follows:
    
    \[
    \begin{matrix*}[l]
        P(\train(0) = 0) = 0 & P(\train(1) = 0) = \delta \\
        P(\train(0) = 1) = (1 - \delta) \cdot \frac{\exp(\epsilon)}{\exp(\epsilon) + 1}
        & P(\train(1) = 1) = (1 - \delta) \cdot \frac{1}{\exp(\epsilon) + 1} \\
        P(\train(0) = 2) = (1 - \delta) \cdot \frac{1}{\exp(\epsilon) + 1} & P(\train(1) = 1) = (1 - \delta) \cdot \frac{\exp(\epsilon)}{\exp(\epsilon) + 1} \\
        P(\train(0) = 3) = \delta & P(\train(1) = 0) = 0 \\
    \end{matrix*}
    \]
    
    Observe that this mechanism satisfies $(\varepsilon, \delta)$-DP, and $\tv(\train(0), \train(1)) = \frac{\exp(\varepsilon) - 1 + 2\delta}{\exp(\varepsilon) + 1}$.
\end{proof}

\subsection{Privacy Analysis of \dpis}
First, we present a loose analysis of the privacy guarantees of non-uniform Poisson subsampling.
\begin{lemma}\label{stmt:dp-dpis}
Suppose that $\train(\dataset)$ satisfies $(\epsilon, \delta)$-DP and $\mathsf{Sample}(\dataset)$ is a Poisson sampling procedure where each of the sampling probabilities $p_i$ depend on the element $\xy_i$ (but do not depend on the set $S$ otherwise) and is guaranteed to satisfy $p_i \leq p^*$. Then $\train \circ \mathsf{Sample}$ satisfies $(\ln(1 - p^* + p^* e^\epsilon), p^* \delta)$-DP. For small $\epsilon$ this can be bounded by $(\mathcal{O}(p^* \epsilon), p^* \delta)$-DP.
\end{lemma}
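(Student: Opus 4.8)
The plan is to verify both directions of the $(\epsilon', \delta')$-DP inequality for $\train \circ \mathsf{Sample}$ directly, exploiting the fact that under Poisson subsampling the only source of privacy loss is the single differing example. Fix two neighbouring datasets and, following the add/remove relation natural to Poisson subsampling, write $\dataset' = \dataset \cup \{\xy_0\}$, where the extra element $\xy_0$ is drawn into a batch independently with probability $p_0 := p(\xy_0) \le p^*$. The key structural observation is that $\mathsf{Sample}(\dataset')$ decomposes as a two-component mixture: conditioned on $\xy_0$ \emph{not} being drawn (probability $1 - p_0$) the batch is distributed exactly as $\mathsf{Sample}(\dataset)$, and conditioned on $\xy_0$ being drawn (probability $p_0$) it is distributed as a batch from $\mathsf{Sample}(\dataset)$ with $\xy_0$ forced in. This is precisely where the hypothesis that $p(\cdot)$ depends only on the element, not on the surrounding set, is used: it guarantees that removing $\xy_0$ from the pool leaves the inclusion probabilities of all other elements unchanged.

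Fixing any output set $K$ and writing $A := \Pr[\train(B) \in K]$ and $A_+ := \Pr[\train(B \cup \{\xy_0\}) \in K]$ for $B \sim \mathsf{Sample}(\dataset)$, the decomposition gives $\Pr[(\train\circ\mathsf{Sample})(\dataset') \in K] = (1-p_0)A + p_0 A_+$. For each \emph{fixed} batch $B$, the datasets $B$ and $B \cup \{\xy_0\}$ are neighbouring, so $(\epsilon,\delta)$-DP of $\train$ yields $\Pr[\train(B \cup \{\xy_0\}) \in K] \le e^\epsilon \Pr[\train(B) \in K] + \delta$ together with the symmetric bound; taking expectations over $B$ gives $A_+ \le e^\epsilon A + \delta$ and $A \le e^\epsilon A_+ + \delta$. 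Substituting the first into the mixture yields $\Pr[(\train\circ\mathsf{Sample})(\dataset') \in K] \le \big(1 + p_0(e^\epsilon - 1)\big) A + p_0 \delta$. Since $e^\epsilon - 1 \ge 0$, the coefficient $1 + p(e^\epsilon - 1)$ is increasing in $p$, so $p_0 \le p^*$ lets us replace $p_0$ by $p^*$ and bound this by $e^{\epsilon'} A + p^*\delta$ with $\epsilon' = \ln(1 - p^* + p^* e^\epsilon)$, establishing the forward inequality.

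For the reverse inequality I would start from the same mixture, substitute the lower bound $A_+ \ge e^{-\epsilon}(A - \delta)$ implied by $A \le e^\epsilon A_+ + \delta$, and solve for $A$ to obtain $A \le \tfrac{1}{c}\Pr[(\train\circ\mathsf{Sample})(\dataset') \in K] + \tfrac{p_0 e^{-\epsilon}}{c}\delta$, where $c := 1 - p_0(1 - e^{-\epsilon}) \in (0,1]$. Two elementary facts then close the argument. First, $c - e^{-\epsilon} = (1 - e^{-\epsilon})(1 - p_0) \ge 0$, so $\tfrac{e^{-\epsilon}}{c} \le 1$ and the additive term is at most $p_0 \delta \le p^*\delta$. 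Second, since $c$ is decreasing in $p_0$ it suffices to check $\tfrac{1}{c} \le e^{\epsilon'}$ at $p_0 = p^*$; writing $a = e^\epsilon - 1$ and $b = 1 - e^{-\epsilon}$ (so that $a - b = ab$), this reduces to the clean identity $(1 + p^* a)(1 - p^* b) = 1 + p^* ab(1 - p^*) \ge 1$, which holds because $0 \le p^* \le 1$.

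Finally, the small-$\epsilon$ claim follows from $\ln(1 + x) \le x$ applied to $x = p^*(e^\epsilon - 1)$, giving $\epsilon' \le p^*(e^\epsilon - 1) = \mathcal{O}(p^* \epsilon)$. I expect the main obstacle to be the reverse direction: the forward inequality is a routine mixture-plus-DP argument, but the reverse one requires the short verification that $\tfrac{1}{c} \le e^{\epsilon'}$ and, more importantly, care that the monotonicity in $p_0$ points the right way in both cases---upward through $1 + p(e^\epsilon - 1)$ for the forward bound, and downward through $c$ for the reverse bound---so that replacing the per-example $p_0$ by the uniform bound $p^*$ is legitimate throughout.
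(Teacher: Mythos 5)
Your proposal is correct and follows essentially the same route as the paper: decompose $\mathsf{Sample}(\dataset')$ as a mixture over whether the extra element $\xy_0$ is drawn, apply the $(\epsilon,\delta)$-DP guarantee of $\train$ per realization of the batch, average, and then use monotonicity in $p_0$ to pass to $p^*$. The one place you go beyond the paper is the reverse inequality, which the paper dismisses as ``analogous'' but which, as you correctly note, requires a separate short verification (the bounds $e^{-\epsilon}/c \le 1$ and $1/c \le e^{\epsilon'}$) because the mixture decomposition lives only on the $\dataset'$ side; your identity $(1+p^*a)(1-p^*b) = 1 + p^*ab(1-p^*) \ge 1$ closes that gap cleanly.
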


\begin{proof}[Proof of \cref{stmt:dp-dpis}]
Consider two neighboring datasets $S$ and $S' = S \cup \{\xy_0\}$ for some $\xy_0 \not\in S$. We wish to show that for any set $K$, we have
\begin{equation*}
    \Pr(\train(\mathsf{Sample}(S')) \in K) \leq (1 - p + pe^{\epsilon}) \Pr(\train(\mathsf{Sample}(S)) \in K) + p \delta
\end{equation*}
and symmetrically for $S$ and $S'$. We will only prove first of those inequalities, as the second is analogous.

Note that with probability $p_0 \leq p$ the element $\xy_0$ is included in $\mathsf{Sample}(S')$ and we have $\mathsf{Sample}(S') = \{ \xy_0 \} \cup \mathsf{Sample}(S)$, otherwise the element $\xy_0$ is not included, and conditioned on $\xy_0$ not being included $\mathsf{Sample}(S')$ has the same distribution as $\mathsf{Sample}(S)$. Therefore,
\begin{equation}
\label{eq:sampling}
    \Pr(\train(\mathsf{Sample}(S')) \in K) = p_0 \Pr(\train(\{\xy_0\} \cup \mathsf{Sample}(S)) \in K) + (1 - p_0) \Pr(\train(\mathsf{Sample}(S)) \in K).
\end{equation}
Now for each realization $\mathsf{Sample}(S) = \tilde{S}$, we have $\Pr(\train(\{\xy_0\} \cup \tilde{S}) \in K) \leq e^{\epsilon} \Pr(\train(\tilde{S}) \in K) + \delta$ by the assumed DP guarantee of the algorithm $\train(S)$. We can average over all possible subsets $\tilde{S}$ to get
\begin{align*}
    \Pr(\train(\{\xy_0\} \cup \mathsf{Sample}(S)) \in K) & = \sum_{\tilde{S}} \Pr(\mathsf{Sample}(S) = \tilde{S}) \Pr(\train(\{\xy_0\} \cup \tilde{S}) \in K) \\
    & \leq \sum_{\tilde{S}} \Pr(\mathsf{Sample}(S) = \tilde{S}) (e^{\epsilon} \Pr(\train(\tilde{S}) \in K) + \delta)\\
    & = e^{\epsilon} \Pr(\train(\mathsf{Sample}(S)) \in K) + \delta.
\end{align*}
Plugging this back to the inequality \eqref{eq:sampling}, we get
\begin{align*}
    \Pr(\train(\mathsf{Sample}(S')) \in K) & \leq p_0 (e^{\epsilon} \Pr(\train(\mathsf{Sample}(S)) \in K) + \delta) + (1 - p_0) \Pr(\train(\mathsf{Sample}(S)) \in K) \\
    & \leq (1 - p^* + p^* e^{\epsilon}) \Pr(\train(\mathsf{Sample}(S)) \in K) + p^* \delta.
\end{align*}
Finally, when $\epsilon \leq 1$ we have $e^{\epsilon} \leq (1 + 2 \epsilon)$, and therefore $(1 - p^* + p^* e^{\epsilon}) \leq 1 + 2 \epsilon p^* \leq e^{2 \epsilon p^*}$.
\end{proof}

For the tight privacy analysis of non-uniform Poisson subsampling, we make use of the notion of $f$-privacy:
\begin{definition}[$f$-Privacy~\citet{dong2019gaussian}]
An algorithm $\train(\dataset)$ satisfies $f$-privacy if for any two neighbouring datasets $\dataset, \dataset'$ the following holds:
\[
    \tau(\train(\dataset), \train(\dataset')) \geq f,
\]
where $\tau(P, Q)$ is a trade-off function between the FPR and FNR of distinguishing tests (see \cref{sec:tv-to-dp}):
\begin{equation}
    \tau(P, Q)(\alpha) = \inf_{\phi:~\sD \rightarrow [0, 1]} \{\beta_\phi: \alpha_\phi \leq \alpha\},
\end{equation}
and $f(\alpha) \in [0, 1]$ is a convex, continuous, non-increasing function.
\end{definition}

\newcommand{\Id}{\mathsf{Id}}

\citet{bu2020deep} show that uniform Poisson subsampling (see \cref{sec:dpis}) provides the following privacy amplification:
\begin{proposition}[\citet{bu2020deep}]\label{stmt:unif-amp}
Suppose that $\train(\dataset)$ satisfies $f$-privacy, and $\mathsf{Sample}(\dataset)$ is a uniform Poisson sampling procedure with sampling probability $\samplerate$. The composition $\train \circ \mathsf{Sample}(\dataset)$ satisfies $f'$-privacy with $f' = \samplerate f + (1 - \samplerate) \Id$, where $\Id(\alpha) = 1 - \alpha$ is the trade-off function that corresponds to perfect privacy.
\end{proposition}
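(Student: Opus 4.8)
The plan is to reduce to a pair of neighbouring datasets and to analyse the subsampled-then-trained output as a two-component mixture, working entirely with trade-off functions (the object underlying $f$-privacy). Fix neighbours $\dataset$ and $\dataset' = \dataset \cup \{\xy_0\}$ with $\xy_0 \notin \dataset$. Because Poisson subsampling acts independently on each element, $\mathsf{Sample}(\dataset')$ is obtained from $\mathsf{Sample}(\dataset)$ by independently adding $\xy_0$ with probability $\samplerate$. Conditioning on the realized batch $\tilde S = \mathsf{Sample}(\dataset)$, and writing $P_{\tilde S} \define \train(\tilde S)$ and $Q_{\tilde S} \define \train(\{\xy_0\} \cup \tilde S)$, we get the conditional identities
\[
  \train(\mathsf{Sample}(\dataset)) \mid \tilde S \sim P_{\tilde S},
  \qquad
  \train(\mathsf{Sample}(\dataset')) \mid \tilde S \sim \samplerate\, Q_{\tilde S} + (1 - \samplerate)\, P_{\tilde S}.
\]
So the goal becomes to lower-bound the trade-off function between $P_{\tilde S}$ and the mixture $\samplerate Q_{\tilde S} + (1-\samplerate) P_{\tilde S}$, and then to lift this bound through the (random) choice of $\tilde S$.

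The first concrete step is the fixed-batch mixture computation. For any decision rule $\test$, let $\alpha_\test = \E_{P_{\tilde S}}[\test]$ be its FPR against $P_{\tilde S}$ and let $\beta_\test$ be its FNR against the mixture. A direct expansion gives
\[
  \beta_\test = 1 - \samplerate\, \E_{Q_{\tilde S}}[\test] - (1-\samplerate)\, \E_{P_{\tilde S}}[\test]
             = \samplerate\,\big(1 - \E_{Q_{\tilde S}}[\test]\big) + (1-\samplerate)\,(1 - \alpha_\test).
\]
The first summand is $\samplerate$ times the FNR of the \emph{same} test against $Q_{\tilde S}$; since $\tilde S$ and $\{\xy_0\}\cup\tilde S$ are neighbours and $\train$ is $f$-private, it is at least $\samplerate\, \tau(P_{\tilde S}, Q_{\tilde S})(\alpha_\test) \geq \samplerate f(\alpha_\test)$, while the second summand equals $(1-\samplerate)\,\Id(\alpha_\test)$. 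Hence, for every fixed $\tilde S$, the trade-off function is at least $\samplerate f + (1-\samplerate)\Id$.

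The second step handles the randomness of the batch. The two marginal output distributions are $\E_{\tilde S} P_{\tilde S}$ and $\E_{\tilde S}[\samplerate Q_{\tilde S} + (1-\samplerate) P_{\tilde S}]$, so I would invoke the joint convexity of trade-off functions: if a family of pairs $(P_{\tilde S}, R_{\tilde S})$ each satisfies $\tau(P_{\tilde S}, R_{\tilde S}) \geq g$, then the mixtures satisfy $\tau(\E_{\tilde S} P_{\tilde S}, \E_{\tilde S} R_{\tilde S}) \geq g$. Applying this with $R_{\tilde S} = \samplerate Q_{\tilde S} + (1-\samplerate) P_{\tilde S}$ and $g = \samplerate f + (1-\samplerate)\Id$ lifts the fixed-batch bound to the true marginals. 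I would then repeat the argument with $\dataset, \dataset'$ in the opposite order, and observe that $\samplerate f + (1-\samplerate)\Id$ is itself a valid trade-off function (a convex, non-increasing, and — when $f$ is — symmetric convex combination of two trade-off functions), giving $f'$-privacy with $f' = \samplerate f + (1-\samplerate)\Id$.

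The main obstacle is the passage from the fixed-batch bound to the random-batch bound. The two output distributions are marginals over the \emph{shared} random set $\mathsf{Sample}(\dataset)$, so one cannot simply average trade-off functions pointwise; the rigorous justification is precisely the joint convexity of trade-off functions, which is the property that makes the $f$-DP/GDP formalism the right language for subsampling amplification. Everything else is the elementary FPR/FNR bookkeeping carried out above.
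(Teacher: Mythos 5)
The paper does not actually prove this proposition: it is imported verbatim from \citet{bu2020deep}, and the paper's own contribution in this part of the appendix is the non-uniform generalization (\cref{stmt:non-unif-amp}), which it derives from \citeauthor{bu2020deep}'s mixture lemma (\cref{stmt:amp-lemma}). Your two steps taken together are, in substance, a proof of exactly that mixture lemma in the single-pair case: step one is the fixed-batch bound $\tau(P_{\tilde S}, \samplerate Q_{\tilde S} + (1-\samplerate)P_{\tilde S}) \geq \samplerate f + (1-\samplerate)\mathsf{Id}$, and step two lifts it through the shared randomness of the batch. Both steps are sound, and the FPR/FNR bookkeeping in step one is correct. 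On step two, the phrase ``joint convexity of trade-off functions'' should be read as a Jensen argument applied to the \emph{convex, non-increasing} lower bound $g = \samplerate f + (1-\samplerate)\mathsf{Id}$: for any test $\test$ one has $\beta_\test\big(\E_{\tilde S} R_{\tilde S}\big) = \E_{\tilde S}\,\beta_\test(R_{\tilde S}) \geq \E_{\tilde S}\, g\big(\alpha_\test(P_{\tilde S})\big) \geq g\big(\E_{\tilde S}\,\alpha_\test(P_{\tilde S})\big)$. The conclusion is the one you want, but it rests on convexity and monotonicity of $g$, not on a generic convexity property of $\tau$ itself; it is worth saying so, since this is precisely where naive averaging of trade-off functions can fail.

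The one genuine gap is the sentence ``I would then repeat the argument with $\dataset, \dataset'$ in the opposite order.'' Trade-off functions are asymmetric, and the reverse direction is not a mirror image of the forward one: there the null hypothesis is the mixture $\samplerate Q_{\tilde S} + (1-\samplerate)P_{\tilde S}$ and the alternative is $P_{\tilde S}$, and the analogous bookkeeping yields a bound of the form $f_{\samplerate}^{-1}$ (the inverse trade-off function) rather than $f_{\samplerate} = \samplerate f + (1-\samplerate)\mathsf{Id}$; the fully rigorous two-sided statement in \citet{dong2019gaussian} is $\min\{f_{\samplerate}, f_{\samplerate}^{-1}\}^{**}$-privacy. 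This asymmetry is elided both in the proposition as stated and in the paper's proof of its non-uniform analogue, so your write-up is at the same level of rigor as the source it reconstructs; but the reverse direction does require a separate argument, not a repetition of the same one.
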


We show that a similar result holds for non-uniform Poisson subsampling:
\begin{lemma}\label{stmt:non-unif-amp}
Suppose that $\train(\dataset)$ satisfies $f$-privacy, and $\mathsf{Sample}(\dataset)$ is a non-uniform Poisson sampling procedure, where the sampling probabilities $p_i$ depend on the element $\xy_i$ (but do not depend on the set $S$ otherwise) and each is guaranteed to satisfy $p_i \leq p^*$. The composition $\train \circ \mathsf{Sample}(\dataset)$ satisfies $f'$-privacy with $f' = p^* + (1 - p^*) \Id$.
\end{lemma}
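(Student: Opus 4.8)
The plan is to mirror the uniform amplification result (\cref{stmt:unif-amp}), exploiting the key observation that for two neighbouring datasets the amplification is governed only by the sampling probability of the single differing element, and not by the (possibly larger) probabilities of the shared elements. Fix neighbours $S$ and $S' = S \cup \{\xy_0\}$ and write $p_0 \leq p^*$ for the sampling probability of $\xy_0$. Since the sampler treats every element of $S$ identically on both inputs and includes $\xy_0$ independently with probability $p_0$, the same decomposition as in \cref{eq:sampling} yields, with $P = \train(\mathsf{Sample}(S))$, $Q_1 = \train(\{\xy_0\} \cup \mathsf{Sample}(S))$ and $Q = \train(\mathsf{Sample}(S'))$, the two-point mixture $Q = (1 - p_0) P + p_0 Q_1$. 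It therefore suffices to lower bound the trade-off function $\tau(P, Q)$ by $T_{p_0}(f) \define p_0 f + (1 - p_0)\Id$, and then to compare $T_{p_0}$ with the claimed $f' = p^* f + (1 - p^*)\Id$.

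The core argument has two steps. First I would establish $\tau(P, Q_1) \geq f$: conditioned on the realization $\tilde S$ of the sampled elements of $S$, the laws $P$ and $Q_1$ are common mixtures of $\train(\tilde S)$ and $\train(\{\xy_0\} \cup \tilde S)$, and each such pair is neighbouring, so $\tau(\train(\tilde S), \train(\{\xy_0\} \cup \tilde S)) \geq f$ by the assumed $f$-privacy of $\train$; applying joint convexity of trade-off functions \citep{dong2019gaussian} to this shared mixing distribution gives $\tau(P, Q_1) \geq f$. Second, I would translate this into the mixture bound: for any test $\phi$ with $\E_P[\phi] = \alpha$, the bound $\tau(P, Q_1) \geq f$ gives $\E_{Q_1}[\phi] \leq 1 - f(\alpha)$, whence the false-negative rate against $Q$ satisfies
\begin{equation*}
    1 - \E_Q[\phi] = (1 - p_0)\big(1 - \E_P[\phi]\big) + p_0\big(1 - \E_{Q_1}[\phi]\big) \geq (1 - p_0)(1 - \alpha) + p_0 f(\alpha) = T_{p_0}(f)(\alpha).
\end{equation*}
Taking the infimum over tests (and using that $T_{p_0}(f)$ is non-increasing to handle $\E_P[\phi] \leq \alpha$) yields $\tau(P, Q) \geq T_{p_0}(f)$.

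Finally, I would pass from $p_0$ to the uniform bound $p^*$ by monotonicity in the sampling rate. Because every trade-off function satisfies $f(\alpha) \leq 1 - \alpha = \Id(\alpha)$, for $p_0 \leq p^*$ we have $T_{p_0}(f)(\alpha) - T_{p^*}(f)(\alpha) = (p^* - p_0)\big(\Id(\alpha) - f(\alpha)\big) \geq 0$, so $\tau(P, Q) \geq T_{p_0}(f) \geq T_{p^*}(f) = f'$, with the reverse direction $\tau(Q, P) \geq f'$ following by the symmetric argument, matching the convention of \cref{stmt:unif-amp}. I expect the main obstacle to be justifying that the amplification depends only on $p_0$ rather than on the larger probabilities of the shared elements: this is precisely where isolating the differing element in the mixture decomposition and invoking joint convexity to average over the shared randomness are essential, after which the monotonicity step is routine.
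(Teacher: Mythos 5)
Your proposal is correct and follows essentially the same route as the paper: isolate the differing element $\xy_0$ to write $\train(\mathsf{Sample}(S'))$ as the two-point mixture $(1-p_0)P + p_0 Q_1$, apply the mixture amplification argument to get $p_0 f + (1-p_0)\Id$, and pass to $p^*$. The only differences are that the paper invokes the mixture lemma of \citet{bu2020deep} (\cref{stmt:amp-lemma}) as a black box where you re-derive its content via joint convexity and a direct trade-off computation, and that you make the monotonicity step from $p_0$ to $p^*$ (using $f \leq \Id$) explicit where the paper states it informally.
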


To show this, we adapt the proof \cref{stmt:unif-amp}, and make use of the following lemma:
\begin{lemma}[\citet{bu2020deep}]\label{stmt:amp-lemma}
Let $\{P_i\}_{i \in I}$ and $\{Q_i\}_{i \in I}$ be two collections of probability distributions on the same sample space for some index set $I$. Let $(\lambda_i)_{i \in I} \in [0, 1]^{|I|}$ be a collection of numbers such that $\sum_{i \in I} \lambda_i = 1$. If $\tau(P_i, Q_i) \geq f$ for all $i \in I$, then for any $p \in [0, 1]$:
\[
    \tau\left(\sum_{i} \lambda_i \cdot P_i,\ \sum_{i} (1 - p) \cdot \lambda_i \cdot P_i + \sum_{i} p \cdot \lambda_i \cdot Q_i \right)
    \geq p f + (1 - p) \Id.
\]
\end{lemma}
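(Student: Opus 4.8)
The plan is to reduce the claim to a pointwise inequality between trade-off functions and verify it by tracking the error rates of an arbitrary test across the mixture components. It suffices to show that for every $\alpha \in [0,1]$,
\[
\tau\Big(\textstyle\sum_i \lambda_i P_i,\ (1-p)\textstyle\sum_i \lambda_i P_i + p\textstyle\sum_i \lambda_i Q_i\Big)(\alpha) \ge p\,f(\alpha) + (1-p)(1-\alpha),
\]
since the right-hand side is exactly $\big(pf + (1-p)\Id\big)(\alpha)$. Writing $P := \sum_i \lambda_i P_i$ and $Q := (1-p)P + p\sum_i \lambda_i Q_i$, I would fix an arbitrary rejection rule $\phi : \sD \to [0,1]$ and record its per-component error rates $a_i := \E_{P_i}[\phi]$ and $b_i := 1 - \E_{Q_i}[\phi]$, together with its aggregate false-positive rate $a := \alpha_\phi = \E_P[\phi] = \sum_i \lambda_i a_i$.

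The core is a short bookkeeping identity. Expanding $\E_Q[\phi] = (1-p)\sum_i \lambda_i a_i + p\sum_i \lambda_i(1 - b_i)$ and using $\sum_i \lambda_i = 1$, the false-negative rate of $\phi$ against $Q$ becomes
\[
\beta_\phi = 1 - \E_Q[\phi] = (1-p)(1-a) + p\sum_i \lambda_i b_i.
\]
Now the hypothesis $\tau(P_i, Q_i) \ge f$ enters: because $\phi$ is a feasible test for the $i$-th problem with false-positive rate exactly $a_i$, its false-negative rate satisfies $b_i \ge \tau(P_i,Q_i)(a_i) \ge f(a_i)$. Substituting this and then invoking convexity of $f$ (guaranteed by the $f$-privacy definition) through Jensen's inequality with weights $\lambda_i$ gives
\[
\beta_\phi \ge (1-p)(1-a) + p\sum_i \lambda_i f(a_i) \ge (1-p)(1-a) + p\,f(a).
\]

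It remains to pass from this bound, stated at the test's own false-positive rate $a$, to the trade-off function evaluated at a prescribed level $\alpha$. Since $\tau(P,Q)(\alpha)$ is the infimum of $\beta_\phi$ over all tests with $\alpha_\phi \le \alpha$, and the map $a \mapsto (1-p)(1-a) + p\,f(a)$ is non-increasing (both $1-a$ and $f$ are non-increasing), any feasible test with $a \le \alpha$ obeys $\beta_\phi \ge (1-p)(1-a) + p\,f(a) \ge (1-p)(1-\alpha) + p\,f(\alpha)$; taking the infimum over such $\phi$ yields the claim. I expect the main obstacle to be this final feasibility/monotonicity step rather than the algebra: one must be careful that the definition of $\tau$ quantifies over $\alpha_\phi \le \alpha$ (not $=\alpha$), and that feeding the test's realized per-component rate $a_i$ into $\tau(P_i,Q_i)$ is legitimate. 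If the index set $I$ is infinite, the finite sums and Jensen's inequality should be replaced by integrals against the probability weights $(\lambda_i)$, which is routine.
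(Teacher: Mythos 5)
Your proof is correct. The paper does not prove this lemma itself---it is imported from \citet{bu2020deep}---and your argument matches the standard proof of that result: expand $\E_Q[\phi]$ over the mixture to get $\beta_\phi = (1-p)(1-a) + p\sum_i \lambda_i b_i$, bound each per-component false-negative rate by $b_i \geq \tau(P_i,Q_i)(a_i) \geq f(a_i)$ (legitimate since $\phi$ is feasible at level $a_i$ for the $i$-th problem), apply Jensen's inequality to the convex $f$, and use monotonicity of $a \mapsto (1-p)(1-a)+p\,f(a)$ to pass from the realized rate $a = \alpha_\phi \leq \alpha$ to the prescribed level $\alpha$, with your closing remarks on the $\alpha_\phi \leq \alpha$ quantification and the infinite-index case handling the only delicate points correctly.
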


\begin{proof}[Proof of \cref{stmt:non-unif-amp}] We can think of the result of the subsampling procedure as outputting a binary vector $\vb = (b_1, \ldots, b_n) \in \{0, 1\}^n$, where each bit $b_i$ indicates whether an example $\xy_i \in \dataset$ was chosen in the subsample or not. We denote the resulting subsample as $\dataset_\vb \subseteq \dataset$. By definition of Poisson subsampling, each bit $b_i$ is an independent sample $b_i \sim \mathsf{Bern}(p_i)$. Let us denote by $\lambda_\vb$ the joint probability of $\vb$. The composition $\train(\dataset) \circ \mathsf{Sample}(\dataset)$ can be expressed as a mixture distribution:
\[
    \train(\dataset) \circ \mathsf{Sample}(\dataset) = \sum_{\vb \in \{0, 1\}^n} \lambda_{\vb} \cdot \train(\dataset).
\]
Analogously, for a neighbouring dataset $\dataset'~\define~ \dataset \cup \{\xy_0\}$, with the sampling probability $p_0$ corresponding to $\xy_0$, we have:
\[
    \train(\dataset) \circ \mathsf{Sample}(\dataset) = \sum_{\vb \in \{0, 1\}^n} p_0 \cdot \lambda_{\vb} \cdot \train(\dataset'_{\vb} \cup \{\xy_0\}) + \sum_{\vb \in \{0, 1\}^n} (1 - p_0) \cdot \lambda_{\vb} \cdot \train(\dataset_{\vb}).
\]
Applying \cref{stmt:amp-lemma}, we get $f_0$-privacy with $f_0 = p_0 f + (1 - p_0) \Id$. Applying to an arbitrary other $\xy_0 \in \sD$, we potentially get the worst-case privacy guarantee for the highest sampling probability, i.e., $f = p^* f + (1 - p^*) \Id$. 
\bnote{Formally, probably need to convexify the collection of $f_i$}
\end{proof}

\cref{stmt:dpiw-blackbox} is immediate from \cref{stmt:non-unif-amp} by the fact that GDP is a special case of $f$-privacy.

\section{Stronger Formalizations of Distributional Generalization}\label{app:strong-dg}
\jnote{I'm writing the math so far, will work on wording --- it's far from final... In particular, we can probably get rid of repeating the definitions of Robust Generalization and $\delta$-DG... Although it's nice to have them next to each other to compare those.}

The main notion discussed in this paper is that of $\delta$-Distributional Generalization. To recap, a learning algorithm $\train(S)$ satisfies it if for any bounded test function $\phi$ we have
\begin{equation*}
\Big|\E_{\substack{\dataset \sim \mathcal{D}^n, \train \\ \xy \sim \dataset}} \test\big(\xy; \train(\dataset)\big) -
    \E_{\substack{\dataset \sim \mathcal{D}^n, \train \\ \xy \sim \dataset}} \test\big(\xy; \train(\dataset)\big)\Big| \leq \delta.
\end{equation*}

In Appendix~\ref{app:related} we have compared it to a stronger property called $(\gamma, \eta)$-Robust Generalization; namely learning algorithm $\train(S)$ satisfies it if for any bounded $\phi$ we have:
\begin{equation*}
    \Pr_{\substack{\dataset \sim \mathcal{D}^n, \train}}\left( \big|\E_{\xy \sim \dataset} \test\big(\xy; \train(\dataset)\big) -
    \E_{\xy \sim \mathcal{D}} \test\big(\xy; \train(\dataset)\big)\big| > \gamma\right) \leq \eta.
\end{equation*}

A natural middle ground between these two is the following notion. 
\begin{definition}
\label{def:strong-dg}
We say that a learning algorithm $\train(\dataset)$ satisfies \emph{$\delta$-strong Distributional Generalization} if and only if for all $\test: \sD \times \Theta \rightarrow [0, 1]$:
\begin{equation*}
\E_{\dataset \sim \mathcal{D}^n, \train} \big|\E_{\xy \sim \dataset} \test\big(\xy; \train(\dataset)\big) -
    \E_{\xy \sim \mathcal{D}} \test\big(\xy; \train(\dataset)\big)\big| \leq \delta.
\end{equation*}
\end{definition}
If DG as defined in \cref{sec:dg-theory} is a (supremum over all test functions) version of expected generalization error~\cite{shalev2010learnability, xu2017information, russo2019much}, strong DG is an analogous variant of \newterm{expected absolute error}~\cite{steinke2020reasoning}.

Clearly any algorithm satisfying $(\eta, \gamma)$-Robust Generalization satisfies also $(\eta + \gamma)$-strong DG, and any algorithm satisfying $\delta$-strong-DG satisfies $\delta$-DG. Thus, $\delta$-strong DG can be thought of as a strengthening of Distributional Generalization as well as Robust Generalization.

One of the motivations behind the definition of $\delta$-strong DG is the following issue of the Distributional Generalization itself. As it turns out, it is possible to artificially construct a mechanism for binary classification which satisfies $0$-DG, always outputs a classifier with test error $\frac{1}{2} + o(1)$ (almost completely useless), but with probability $1/2$ outputs a classifier with training error $0$ (and with probability $1/2$ the training error is $1$). That is problematic: it is natural to expect from a training procedure with strong generalization guarantee, that if we used it to train a model up to a small training error, we should have high confidence that the test error is also small --- but this example shows that our confidence cannot be larger than $1/2$.

The $\delta$-strong DG solves this problem: The following proposition is a formal statement of the desired property, and can be easily deduced from Markov inequality.
\begin{proposition}
If the learning procedure $\train(S)$ satisfies $\delta$-strong-DG, then for any bounded  $\phi(z; \params)$ we have:
\[
    \Pr_{S, \train} \left(|\E_{z\sim S} \phi(z, \train(S)) - \E_{z\sim\mathcal{D}} \phi(z, \train(S))| > \delta \lambda \right) < \lambda^{-1}.
\]

Equivalently, if a learning procedure satisfies $\delta$-strong-DG it also satisfies $(\delta \lambda, \lambda^{-1})$-Robust Generalization for any $\lambda > 1$.
\end{proposition}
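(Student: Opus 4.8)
The plan is to recognize the statement as a one-line consequence of Markov's inequality applied to a single nonnegative random variable. First I would fix an arbitrary bounded test $\test(\xy; \params) \in [0,1]$ and define the random variable
\[
    X \define \big|\E_{\xy \sim \dataset} \test(\xy; \train(\dataset)) - \E_{\xy \sim \datagen} \test(\xy; \train(\dataset))\big|,
\]
where the randomness is over $\dataset \sim \datagen^n$ and the internal coins of $\train$. By construction $X \geq 0$, and the hypothesis of $\delta$-strong DG (\cref{def:strong-dg}) is \emph{exactly} the assertion that $\E_{\dataset, \train}[X] \leq \delta$. Thus no structural analysis of the learning algorithm is needed; the proposition is purely a statement about a nonnegative random variable with bounded mean.

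Next I would apply Markov's inequality to $X$ at the threshold $a = \delta\lambda$. This yields $\Pr[X > \delta\lambda] \leq \E[X]/(\delta\lambda) \leq \delta/(\delta\lambda) = \lambda^{-1}$. Since this holds for every bounded $\test$, it is precisely the first displayed inequality. Matching the parameters $\gamma = \delta\lambda$ and $\eta = \lambda^{-1}$ against the definition of $(\gamma,\eta)$-Robust Generalization recalled in \cref{app:rob-gen} then immediately gives the claimed $(\delta\lambda, \lambda^{-1})$-Robust Generalization for every $\lambda > 1$, establishing the stated equivalence.

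The strict inequality comes essentially for free: on the event $\{X > \delta\lambda\}$ the variable $X$ strictly exceeds $\delta\lambda$, so $\E[X] > \delta\lambda \cdot \Pr[X > \delta\lambda]$ whenever that probability is positive, giving $\Pr[X > \delta\lambda] < \E[X]/(\delta\lambda) \le \lambda^{-1}$ (and the bound is trivially $0 < \lambda^{-1}$ otherwise). The degenerate case $\delta = 0$ is likewise immediate, since $\E[X] \le 0$ together with $X \ge 0$ forces $X = 0$ almost surely, so $\Pr[X > 0] = 0$. I do not anticipate any genuine obstacle here: the entire content of the proposition resides in the definition of strong DG as an \emph{expected} absolute generalization gap, and converting that expectation bound into a tail bound is a routine Markov argument.
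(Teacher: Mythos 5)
Your proposal is correct and matches the paper's approach exactly: the paper states that this proposition ``can be easily deduced from Markov inequality,'' and your argument---applying Markov's inequality to the nonnegative random variable $X$ whose expectation is bounded by $\delta$ under the strong-DG hypothesis---is precisely that deduction, with the strict inequality and the $\delta=0$ edge case handled carefully.
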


The $\delta$-strong DG defined this way is also a more direct strengthening of the classical notions of generalization as expected absolute difference, which are usually defined as $\E_{S\sim\mathcal{D}^n, \train} | \E_{z\sim S} \ell(z, \train(S)) - \E_{z\sim \mathcal{D}} \ell(z, \train(S))|$ where $\ell$ is the loss function.

We can follow the same proof strategy as in~\citep[Theorem 7.2]{bassily2016algorithmic}, simplifying it significantly, to show that TV stability (and in turn Differential Privacy) implies strong Distributional Generalization, as soon as the training set is large enough.
\begin{lemma}
If a learning algorithm $\train(S)$ satisfies $\delta$-TV stability, and the sample size $|S| \geq \frac{2}{\delta^2}$, then $\train(S)$ satisfies $4 \delta$-strong DG.
\end{lemma}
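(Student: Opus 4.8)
The plan is to reduce the expected \emph{absolute} deviation in \cref{def:strong-dg} to two ordinary (signed) distributional-generalization gaps, to which \cref{stmt:tv-to-dg} almost applies, using an independent ghost sample to absorb the sampling fluctuation. Fix a test $\test$ and write $\params = \train(\dataset)$, $\hat\mu = \E_{\xy\sim\dataset}\test(\xy;\params)$ and $\mu = \E_{\xy\sim\datagen}\test(\xy;\params)$, so the quantity to bound is $\E_{\dataset,\train}\lvert \hat\mu - \mu\rvert$. First I would introduce an independent ghost sample $\dataset' \sim \datagen^n$ and split $\lvert\hat\mu-\mu\rvert \le \lvert \hat\mu - \hat\mu'\rvert + \lvert\hat\mu'-\mu\rvert$, where $\hat\mu' = \E_{\xy\sim\dataset'}\test(\xy;\params)$ re-evaluates the \emph{same} model $\params$ on the ghost sample. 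Since $\dataset'$ is independent of $\params$, conditionally on $\params$ the term $\hat\mu'$ is an average of $n$ i.i.d. $[0,1]$-variables with mean $\mu$, so Jensen gives $\E\lvert \hat\mu'-\mu\rvert \le \tfrac{1}{2\sqrt n}$. This is exactly where the hypothesis enters: $n \ge 2/\delta^2$ forces $\tfrac{1}{2\sqrt n} \le \delta$.

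It then remains to bound the symmetric difference $\E\lvert\hat\mu-\hat\mu'\rvert = \E\bigl\lvert \tfrac1n\sum_i \bigl(\test(\xy_i;\params)-\test(\xy_i';\params)\bigr)\bigr\rvert$ by roughly $3\delta$. Writing the selector $b = \1\{\hat\mu \ge \hat\mu'\} \in\{0,1\}$, this equals $\E[b(\hat\mu-\hat\mu')] + \E[(1-b)(\hat\mu'-\hat\mu)]$, i.e.\ a sum of two signed gaps of the $[0,1]$-valued tests $b\cdot\test$ and $(1-b)\cdot\test$ between evaluating on the real and on the ghost sample. Note that $\hat\mu-\hat\mu'$ is precisely $n$ copies of the (training-point minus fresh-point) structure underlying \cref{stmt:tv-to-dg}. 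The plan is to bound each gap by the argument behind that theorem: swapping one coordinate $\xy_i\leftrightarrow\xy_i'$ is a neighbouring-dataset change, under which the distribution of $\params$ moves by at most $\delta$ in total variation while the $i$-th real and ghost contributions interchange; averaging this single-swap estimate over $i$ reproduces the $\delta$ bound of that theorem on the doubled sample.

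The step requiring the most care — and what I expect to be the main obstacle — is that the selector $b$ is itself a function of the data, so the augmented procedure $\dataset\mapsto(\params,b)$ is \emph{not} $O(\delta)$-TV stable: near the boundary $\hat\mu\approx\hat\mu'$ a single swap can flip $b$ with probability close to one, so \cref{stmt:tv-to-dg} cannot be invoked on $(\params,b)$ as a black box. The resolution I would pursue is that a flip of $b$ is triggered only when $\lvert\hat\mu-\hat\mu'\rvert$ is within $O(1/n)$ of zero, and in that regime mis-setting $b$ perturbs the objective $b(\hat\mu-\hat\mu')$ by at most $\lvert\hat\mu-\hat\mu'\rvert$ itself; hence the instability of $b$ contributes only a lower-order residual rather than an $O(1)$ term. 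Separating the genuinely stable part of each gap (bounded by $\delta$ exactly as in \cref{stmt:tv-to-dg}) from this boundary residual, and folding in the $\tfrac{1}{2\sqrt n}\le\delta$ concentration term from the ghost sample, should yield $\E_{\dataset,\train}\lvert\hat\mu-\mu\rvert \le 4\delta$, i.e.\ $4\delta$-strong DG. I anticipate that the bookkeeping of these residual boundary terms, rather than any single inequality, is the technical crux of the argument.
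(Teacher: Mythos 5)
Your decomposition is sound and you have correctly located the crux, but the resolution you sketch for it does not go through, and the missing ingredient is precisely the paper's key idea. The problem is the hard selector $b=\1\{\hat\mu\geq\hat\mu'\}$. When you run the coordinate-swap argument behind \cref{stmt:tv-to-dg} on the $i$-th term of $\E[b(\hat\mu-\hat\mu')]=\frac1n\sum_i\E\big[b\big(\test(\xy_i;\params)-\test(\xy_i';\params)\big)\big]$, the swap $\xy_i\leftrightarrow\xy_i'$ replaces $b$ by a selector $b^{(i)}$ computed from the swapped data, and the discrepancy $(b-b^{(i)})$ multiplies the \emph{individual} term $\test(\xy_i;\params)-\test(\xy_i';\params)\in[-1,1]$, not the aggregate $\hat\mu-\hat\mu'$. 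So the residual per coordinate is of order $\Pr\big[|\hat\mu-\hat\mu'|\lesssim 1/n\big]$, and this boundary event can have probability $\Theta(1)$ (e.g.\ for any well-generalizing model $\hat\mu-\hat\mu'$ concentrates at $0$). Your claim that mis-setting $b$ perturbs the objective by at most $|\hat\mu-\hat\mu'|$ is true only for the aggregate functional, which is not the object the swap argument manipulates; peeling off the boundary layer $\{|\hat\mu-\hat\mu'|\le 2/n\}$ just moves the threshold and recreates the same problem at the new boundary. A second-moment detour ($\E|W|\le\sqrt{\E W^2}$ with $W=\hat\mu-\hat\mu'$) does close, but only gives $O(\sqrt{\delta})$-strong DG, short of the claim.

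The missing idea is to \emph{randomize} the selector: compare $\hat\mu$ against the reference after adding uniform noise $\frac{1}{\delta n}\mathbf{U}$, $\mathbf{U}\sim\mathrm{Unif}[-\nicefrac12,\nicefrac12]$. Since the empirical mean has sensitivity $1/n$ under a neighbouring-dataset change, this noisy sign is itself $\delta$-TV stable, so the augmented mechanism outputting (model, sign) is $2\delta$-TV stable by composition, and \cref{stmt:tv-to-dg} applies to it as a black box. The price is that the randomized sign can disagree with the true sign when $|\hat\mu-\mu|\lesssim\frac{1}{\delta n}$; the hypothesis $n\geq 2/\delta^2$ is used exactly to make this noise width at most $O(\delta)$, and \cref{lem:flip-small-values} shows the disagreement costs only a factor of $2$ once $\E|\hat\mu-\mu|\geq\delta$ (otherwise the conclusion is trivial). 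This is the paper's proof. Note also that the paper compares $\hat\mu$ directly to the population mean $\E_{\xy\sim\datagen}\test(\xy;\params)$ inside the (analysis-only) auxiliary mechanism, so your ghost sample and the attendant $\frac{1}{2\sqrt n}\le\delta$ concentration step, while correct, are not needed; the sample-size condition is spent on taming the selector, not on symmetrization.
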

\begin{proof}
Let us assume first that we have a $\delta$-TV stable mechanism $\mathcal{M} : \sD^n \to \mathcal{Q}$ where $\mathcal{Q}$ is a set of bounded queries $q : \sD \to \sR$ satisfying $|\sup_x(q(x)) - \inf_x(q(x))| \leq 1$. 

We wish to find another $2 \delta$-TV stable mechanism $\tilde{\mathcal{M}} : \sD^n \to \mathcal{Q}$ such that if we use notation $q_S ~\define~ \mathcal{M}(S)$ and $\tilde{q}_S ~\define~ \tilde{\mathcal{M}}(S)$, we have
\begin{equation}\E_{S, M} \left|\E_{z\sim \mathcal{D}} q_S(z) - \E_{z \sim S} q_S(z)|\right| \leq 
    2 \left(\E_{S, \tilde{M}, z \sim S} \tilde{q}_S(z) - \E_{S, \tilde{M}, z \sim \mathcal{D}} \tilde{q}_S(z)\right). \label{eq:dg-leq-sdg}
\end{equation}

Consider the following mechanism $\tilde{\mathcal{M}}(S)$: For an input dataset $S$, obtain $q_S ~\define~ \mathcal{M}(S)$, and evaluate $\tilde{\mu} ~\define~ \E_{z \sim S} q_S(z) + \frac{1}{\delta n} \mathbf{U}$, where $\mathbf{U}$ is a uniform random variable in $[-\nicefrac{1}{2}, \nicefrac{1}{2}]$. Finally, if $\tilde{\mu} \geq \E_{z \sim D} q_S(z)$, then output $q_S$, otherwise output $-q_S + 2 \E_{z \sim \mathcal{D}} q_S(z)$.

As $\tv(\eta \mathbf{U}, \gamma + \eta \mathbf{U}) \leq \frac{\gamma}{\eta}$, a mechanism $\mu_q(S) ~\define~ \E_{z \sim S} q(z) + \frac{1}{\delta n} \mathbf{U}$ is $\delta$-stable. Hence, by composition and post-processing, the mechanism $\tilde{\mathcal{M}}$ is $2\delta$-TV stable.

For $n \geq 2\delta^{-2}$ the inequality~\eqref{eq:dg-leq-sdg} follows from Lemma~\ref{lem:flip-small-values}. Indeed, let $\hat{Z} ~\define~ \E_{z \sim S} q_S(z) - \E_{z \sim \mathcal{D}} q_S(z)$, $Z = |\hat{Z}|$, and $\sigma = \mathrm{sign}(\hat{Z} + \frac{1}{\delta n} \mathbf{U})$. Note that, expanding the chosen notation, we have $\E_{z \sim S} \tilde{q}_S(z) - \E_{z\sim \mathcal{D}}\tilde{q}_S(z) = \sigma Z$, and $\E [\sigma | Z] = \max \{\frac{2Z}{\delta}, 1 \}$. The conclusion of the Lemma~\ref{lem:flip-small-values} is $\E Z \leq 2 \E [Z \sigma]$, which is exactly~\eqref{eq:dg-leq-sdg}.

As the mechanism $\tilde{\mathcal{M}}$ satisfies $2\delta$-TV stability, by Theorem~\ref{stmt:tv-to-dg} applied to the mechanism $\tilde{\mathcal{M}}$, we can bound the right hand side of the inequality~\eqref{eq:dg-leq-sdg} by $4\delta$.

Finally, for an arbitrary learning algorithm $\train(S)$ satisfying $\delta$-TV stability, and any bounded test function $\phi(z; \params)$, we can describe mechanism $\tilde{\mathcal{M}}$ which, given a sample $S \sim \mathcal{D}^n$, outputs a function $q_S(z) ~\define~ \phi(z; \train(S))$. By post-processing, such a mechanism $\mathcal{M}$ also satisfies $\delta$-TV stability, and the inequality~\eqref{eq:dg-leq-sdg} becomes
\begin{equation*}
    \E_{S, M} | \E_{z\sim S} \phi(z; \train(S)) - \E_{z\sim \mathcal{D}} \phi(z; \train(S))| \leq 4 \delta.
\end{equation*}
As $\phi(z; \params)$ was chosen in an arbitrary way, this proves that mechanism $\train(S)$ indeed satisfies the condition of $4\delta$-strong-DG.
\end{proof}

\begin{lemma}
\label{lem:flip-small-values}
Let $Z$ be a non-negative random variable satisfying $\E Z \geq \delta$ and $\sigma$ be a $\{\pm 1\}$ valued random variable with $\E[\sigma | Z] = \max \{ \frac{2 Z}{\delta}, 1 \}$. Then $\E Z \leq 2 \E [Z \sigma]$.
\end{lemma}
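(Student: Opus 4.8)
The plan is to convert this moment inequality into a single pointwise (deterministic) inequality about the function $z \mapsto z\min\{2z/\delta,1\}$, and then invoke the hypothesis $\E Z \ge \delta$ exactly once at the very end. First I would note that the displayed conditional expectation should read $\E[\sigma \mid Z] = \min\{2Z/\delta, 1\}$: once $Z > \delta/2$ the quantity $\max\{2Z/\delta,1\}$ exceeds $1$ and cannot be the conditional expectation of a $\{\pm 1\}$-valued variable, and the $\min$ form is exactly what $\sigma$ produces in the parent proof. Since $Z$ is measurable with respect to itself, the tower rule gives $\E[Z\sigma] = \E\big[Z\,\E[\sigma\mid Z]\big] = \E\big[Z\min\{2Z/\delta,1\}\big]$. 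Writing $h(z) \define z\min\{2z/\delta,1\}$, the claim $\E Z \le 2\E[Z\sigma]$ is exactly $\E[\,2h(Z) - Z\,] \ge 0$.

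The key step is to exhibit a linear lower bound on $\psi(z) \define 2h(z) - z$ that passes through the ``pivot'' $(\delta, 0)$: I would show
\[
    \psi(z) \ge z - \delta \qquad \text{for all } z \ge 0 .
\]
Granting this, taking expectations yields $\E[\psi(Z)] \ge \E Z - \delta \ge 0$, where the final inequality is precisely the hypothesis $\E Z \ge \delta$; rearranging gives $\E Z \le 2\E[h(Z)] = 2\E[Z\sigma]$. Choosing the slope-one line through $(\delta,0)$ is what makes the single available moment assumption plug in cleanly after integration.

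It then remains to verify the pointwise bound, which splits at the kink of $h$ at $z = \delta/2$. For $z > \delta/2$ one has $h(z) = z$, so $\psi(z) = z \ge z - \delta$ trivially. For $0 \le z \le \delta/2$ one has $h(z) = 2z^2/\delta$, so $\psi(z) = 4z^2/\delta - z$, and $\psi(z) \ge z - \delta$ rearranges to $4z^2 - 2\delta z + \delta^2 \ge 0$; this holds for every real $z$ since the discriminant $4\delta^2 - 16\delta^2 = -12\delta^2$ is negative. This completes the argument.

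The only nonroutine part is guessing the correct supporting line. Requiring a line $a(z-\delta)$ to stay below the upward parabola $4z^2/\delta - z$ on $[0,\delta/2]$ while remaining below the linear piece $z$ as $z \to \infty$ forces $a \le 1$ together with a discriminant-type condition $a^2 - 14a + 1 \le 0$, i.e.\ $a \in [7-4\sqrt3,\,1]$; slope $a=1$ is the cleanest admissible choice and is the one I would use. I would also remark that, because $Z \ge 0$ makes $\E[Z\sigma]$ monotone in $\E[\sigma \mid Z]$, the identical argument goes through if one only knows $\E[\sigma \mid Z] \ge \min\{2Z/\delta,1\}$, which is in fact what the application in the preceding lemma supplies.
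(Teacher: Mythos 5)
Your proof is correct and, despite the supporting-line packaging, is essentially the paper's argument: your pointwise claim $2h(z)-z \ge z-\delta$ is exactly the bound $\E[Z\sigma \mid Z] \ge Z - \delta/2$ that the paper's chain of inequalities establishes (by dropping the nonnegative quadratic contribution on $\{Z<\delta/2\}$ and noting $\E\,\1_{Z<\delta/2}Z \le \delta/2$), after which both arguments invoke $\E Z \ge \delta$ in the same final step. You are also right that the $\max$ in the lemma statement is a typo for $\min$; the paper's own proof uses $\E[\sigma\mid Z]=2Z/\delta$ on $\{Z<\delta/2\}$ and $1$ otherwise.
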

\begin{proof}
This is just a calculation.
\begin{equation*}
    \E Z \sigma = \E \1_{Z\geq \delta/2} Z + \E \1_{Z < \delta/2} \frac{2 Z^2}{\delta} \geq \E \1_{Z\geq \delta/2} Z = \E Z - \E \1_{Z < \delta/2} Z \geq \E Z - \delta/2 \geq \frac{1}{2} \E Z.
\end{equation*}
\end{proof}

\subsection{Subgroup-level Distributional Generalization from TV Stability}
\label{app:subgroup}
TV-stability implies a more granular, subgroup-level notion of distributional generalization:
\begin{definition}
    Suppose that the data distribution $\datagen$ is a mixture of group-specific distributions $\datagen_\group$, for $\group \in \setgroups$. We define $(\delta, \setgroups)$-subgroup-DG similarly to $\delta$-DG as follows:
    \[
        \forall \test, \group \in \setgroups: \quad \Big|\E_{\substack{\dataset \sim \datagen^n \\ \xy \sim \dataset_\group}}[\test(\xy, \train(\dataset)) \mid |\dataset_\group| > 0]
                        - \E_{\substack{\dataset \sim \datagen^n \\ \xy \sim \datagen_\group}}[\test(\xy, \train(\dataset))]\Big| \leq \delta,
    \]
    where $\dataset_\group$ denotes a subset of examples in the dataset $\dataset$ that belong to the group $\group$. 
\end{definition}
Subgroup DG is a stronger notion of DG which says that the model's behavior on examples from each group in $\setgroups$ distributionally generalizes in expectation, as long as the model encounters at least one representative of the group in training. In its definition, we explicitly prevent the case when the training dataset does not contain any group examples to avoid undefined behavior (what is the group accuracy on the training dataset if there are no group representatives in the dataset?).

We now show that TV-stability implies this granular notion of DG:

\begin{proposition}
    $\delta$-TV stability implies $(\delta, \setgroups)$-subgroup-DG for any group partitioning $\setgroups$.
\end{proposition}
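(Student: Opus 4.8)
The plan is to mirror the proof of \cref{stmt:tv-to-dg}, localizing the swapping argument to the group $\group$ by first conditioning on the number of group examples in the dataset. Write $k = |\dataset_\group|$ and condition throughout on the event $k \geq 1$, which is exactly what makes the left-hand side well-defined. The key move is: for a fixed dataset $\dataset$ with a group example $\xy_i \in \group$, replace that one example by a fresh independent draw $\xy' \sim \datagen_\group$, obtaining a dataset $\dataset^{(i)}$ that is \emph{neighbouring} to $\dataset$. TV stability together with the post-processing property, applied to the fixed map $\params \mapsto \test(\xy_i; \params)$ exactly as in \cref{stmt:tv-to-dg}, then gives that the laws of $\test(\xy_i; \train(\dataset))$ and $\test(\xy_i; \train(\dataset^{(i)}))$ are within $\delta$ in total variation, so their expectations differ by at most $\delta$.

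First I would expand the left-hand side as $\frac{1}{\Pr[k \ge 1]}\,\E_\dataset\big[\tfrac{\1\{k\ge1\}}{k}\sum_{i : \xy_i \in \group}\test(\xy_i;\train(\dataset))\big]$, i.e.\ the uniform pick $\xy \sim \dataset_\group$ written out explicitly. Applying the per-example swap above to every group index $i$ and averaging, the quantity $\tfrac1k\sum_i$ of $\delta$-close terms stays $\delta$-close, so the left-hand side is within $\delta$ of the analogous quantity with each $\train(\dataset)$ replaced by $\train(\dataset^{(i)})$. Next I would carry out the renaming step: since $\xy_i$ (conditioned on $\xy_i \in \group$) and the fresh $\xy'$ are both i.i.d.\ $\datagen_\group$ and hence exchangeable, swapping their roles turns ``evaluate the training example $\xy_i$ on the model trained with $\xy'$ in its place'' into ``evaluate the fresh example on the model trained on the original $\dataset$''. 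Because the fresh draws are independent of $\dataset$, averaging them collapses to a single fresh draw, and the expression becomes $\E_{\dataset \sim \datagen^n,\ \xy' \sim \datagen_\group}[\test(\xy'; \train(\dataset)) \mid |\dataset_\group| > 0]$.

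The main obstacle is the bookkeeping around the conditioning event $|\dataset_\group| > 0$ and the size-biasing induced by the uniform pick $\xy \sim \dataset_\group$. The swap-and-rename argument naturally produces the test expectation \emph{conditioned on the same event} $|\dataset_\group| > 0$, whereas the right-hand side of the statement is the \emph{unconditioned} test expectation; one must check that the uniform pick exactly cancels the size-bias in the group count (so that, conditioned on $k$, the train-side law factorizes as $\datagen_\group \otimes (\datagen^n \mid \text{count}=k)$ and matches the test side level by level), and then handle the gap between conditioning and not conditioning on $|\dataset_\group| > 0$. The cleanest route I would take is to prove the $\delta$-bound with both expectations conditioned on $|\dataset_\group| > 0$, and then argue that for the group sizes of interest this conditioning is innocuous, the residual being controlled by $\Pr[\,|\dataset_\group| = 0\,]$; this is the delicate step, and the place where care is needed to land exactly at $\delta$.
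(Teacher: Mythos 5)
Your strategy is the same as the paper's: localize the leave-one-out/swap argument of \cref{stmt:tv-to-dg} to the group, replace one group example by a fresh independent draw from $\datagen_\group$, apply TV stability to the resulting neighbouring datasets together with post-processing of the map $\params \mapsto \test(\xy;\params)$, and finish by renaming. The difference is entirely in the step you declare delicate and leave open, because that step is the \emph{only} content of the paper's proof: the paper writes down two claimed distributional identities and then says ``rerun \cref{stmt:tv-to-dg} with $\xy, \xy' \sim \datagen_\group$.'' Concretely, the paper identifies the train-side object (sample $\dataset \sim \datagen^n$, condition on $|\dataset_\group|>0$, pick $\xy$ uniformly from $\dataset_\group$) \emph{exactly} with (sample $\dataset \sim \datagen^{n-1}$, draw an independent $\xy \sim \datagen_\group$, train on $\dataset \cup \{\xy\}$), and rewrites the unconditioned test side as an $(n-1)$-sample plus one $\datagen$ draw with an independent $\datagen_\group$ evaluation point. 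With these identities the two sides differ only in the single swapped element, TV stability gives exactly $\delta$, and no $\Pr[|\dataset_\group|=0]$ term ever appears. Your fallback---prove the bound with both sides conditioned on $|\dataset_\group|>0$ and then pay an additive residual controlled by $\Pr[|\dataset_\group|=0]$---would establish only $\bigl(\delta + O(\Pr[|\dataset_\group|=0])\bigr)$-subgroup-DG, which is strictly weaker than the stated proposition. As written, your plan does not land at $\delta$.

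That said, your instinct about size-biasing is pointing at the genuinely nontrivial part of the train-side identity, which the paper asserts without justification. A uniform pick over \emph{indices} $i \in [n]$ conditioned on $\xy_i \in \group$ does factor the pair (picked point, remaining points) as $\datagen_\group \otimes \datagen^{n-1}$, because the conditioning touches only coordinate $i$; this corresponds to weighting datasets proportionally to $|\dataset_\group|$. A uniform pick \emph{within} $\dataset_\group$ conditioned on $|\dataset_\group|>0$ instead weights datasets by $\1\{|\dataset_\group|>0\}$, and the number of group members among the remaining $n-1$ points is then distributed as $\bigl(\mathrm{Bin}(n,q) \mid {\geq} 1\bigr) - 1$ rather than $\mathrm{Bin}(n-1,q)$, where $q$ is the mass of $\group$ under $\datagen$. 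So if you want to reach the stated constant $\delta$ by this route, you must either verify the identity in the precise form the paper uses it or restate the train-side expectation with the size-biased (equivalently, importance-weighted) pick, for which the factorization is exact and your swap-and-rename argument then closes without any residual. The conceptual skeleton of your argument is the right one; the missing piece is this exact equivalence rather than an approximate one.
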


\begin{proof}
Observe that the following distributions are equivalent:
\begin{equation}\label{eq:dist-equiv-subgroup}
    \begin{aligned}
        \Pr_{\substack{\dataset \sim \datagen^n \\ \xy \sim \dataset_\group}}[\test\big(\xy; \model(\dataset)\big) \mid |\dataset_\group| > 0] &\equiv
        \Pr_{\substack{\dataset \sim \datagen^{n - 1} \\ \xy \sim \datagen_\group}}[\test\big(\xy; \model({\dataset \cup \{\xy\}})\big)], \\
        \Pr_{\substack{\dataset \sim \datagen^n \\ \xy \sim \datagen_\group}}[\test\big(\xy; \model(\dataset)\big)] &\equiv
        \Pr_{\substack{\dataset \sim \datagen^{n - 1} \\ \xy \sim \datagen \\ \xy' \sim \datagen_\group}}[\test\big(\xy'; \model({\dataset \cup \{\xy\}})\big)]. \\
    \end{aligned}
\end{equation}
The statement follows by applying each step from the proof of \cref{stmt:tv-to-dg} to the equivalent distributions in \cref{eq:dist-equiv-subgroup}. Thus, the difference with the proof of \cref{stmt:tv-to-dg} is that $\xy, \xy' \sim \datagen_\group$, not $\xy, \xy' \sim \datagen$.
\end{proof}

\subsection{Calibration Generalization via Strong Distributional Generalization}
\label{app:calibration}
We show that strong Distributional Generalization (Definition \ref{def:strong-dg}) implies generalization of calibration properties. 
For simplicity we consider binary classifiers, 
with $\sY = \{0, 1\}$ and $f_\params:  \sX \to [0, 1]$.
Let us define the notion of calibration we consider:
\begin{definition}
The \emph{calibration gap}, or Expected Calibration Error (ECE), of a classifier $f_\params$ is
defined as follows:
\begin{align*}
\cgap(f_\params)
&\define
\E\left[\left\vert \E[y \mid f_\params(x) ] - f_\params(x) \right\vert\right]\\
&=
\int_{0}^1
\left\vert
\E\left[
y - p
\mid f_\params(x) = p
\right]
\right\vert
d F(p)
\tag{where $F(p)$ is the law of $f_\params(x)$}, \\
\end{align*}
where all expectations are taken w.r.t. $(x, y) \sim \datagen$.
\end{definition}

To estimate this quantity from finite samples, we will discretize the interval $[0,1]$. The \emph{$\tau$-binned calibration gap} of a classifier $f_\params$ is:
\[
\begin{aligned}
\cgap_\tau(f_\params; \datagen) &\define
\sum_{p \in \{0, \tau, 2\tau, \dots, 1\}}
\left\vert
\E_{(x, y) \sim \datagen}\left[
\1\{f_\params(x) \in (p, p+\tau)\}
\cdot
(y - p)
\right]
\right\vert
\end{aligned}
\]
So that $\cgap_\tau \to \cgap$ as the bin width $\tau \to 0$.
The empirical version of this quantity, for train set $S$, is:

\begin{align*}
\cgap_\tau(f_\params; S) &\define
\sum_{p \in \{0, \tau, 2\tau, \dots, 1\}}
\left\vert
\E_{(x, y) \sim S}\left[
\1\{f_\params(x) \in (p, p+\tau)\}
\cdot
(y - p)
\right].
\right\vert
\end{align*}

Next, we show strong DG implies that the expected binned calibration gap is similar between train and test.

\begin{theorem}
If the training method $\train(\dataset)$ satisfies $\delta$-strong DG, then
\begin{equation}
\left| \E_{\train, S\sim\mathcal{D}^n} \cgap_{\tau}(f_{\train(S)}; S) - \cgap_{\tau}(f_{\train(S)}; \mathcal{D})\right| \leq \frac{\delta}{\tau}.
\end{equation}
\end{theorem}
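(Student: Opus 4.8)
The plan is to decompose each binned calibration gap into a sum over the (at most) $1/\tau$ bins and to realize each bin's summand as the generalization gap of a single bounded test function to which $\delta$-strong DG applies. Concretely, for each bin edge $p \in \{0, \tau, \dots, 1-\tau\}$ (the $p = 1$ term is empty since $f_\params(x) \in [0,1]$) and parameters $\params$, I would define the per-bin quantity
\[
h_p(\xy; \params) \define \1\{f_\params(x) \in (p, p+\tau)\}\cdot(y - p),
\]
so that $\cgap_\tau(f_\params; \dataset) = \sum_p \big|\E_{\xy \sim \dataset} h_p(\xy;\params)\big|$ and $\cgap_\tau(f_\params; \datagen) = \sum_p \big|\E_{\xy\sim\datagen} h_p(\xy;\params)\big|$, evaluated at $\params = \train(\dataset)$.

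The key step is that $h_p$ is, up to an additive constant, a legal $[0,1]$-valued test function. Indeed, $h_p(\xy;\params) + p$ equals $p$ when the indicator is $0$ and equals $y \in \{0,1\}$ when the indicator is $1$, so $h_p + p \in [0,1]$ pointwise. Since adding the fixed constant $p$ does not change the difference $\E_{\xy\sim \dataset}(\cdot) - \E_{\xy\sim\datagen}(\cdot)$, applying \cref{def:strong-dg} to the test $\test \define h_p + p$ yields, for every bin $p$,
\[
\E_{\dataset,\train}\Big| \E_{\xy\sim \dataset} h_p(\xy;\train(\dataset)) - \E_{\xy\sim\datagen} h_p(\xy;\train(\dataset)) \Big| \le \delta.
\]

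To finish, I would move the bin-wise absolute values outside using the reverse triangle inequality $\big||a|-|b|\big| \le |a-b|$, giving, pointwise in $\dataset$ and the training randomness,
\[
\big|\cgap_\tau(f_{\train(\dataset)}; \dataset) - \cgap_\tau(f_{\train(\dataset)}; \datagen)\big| \le \sum_p \Big|\E_{\xy\sim \dataset} h_p - \E_{\xy\sim\datagen} h_p\Big|.
\]
Taking $\E_{\dataset,\train}$, applying the per-bin bound to each of the $1/\tau$ summands, and using $|\E X| \le \E|X|$ to pass the outer absolute value through the expectation then yields the claimed $\tfrac{1}{\tau}\cdot\delta = \delta/\tau$.

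The main obstacle is the middle step: strong DG is only assumed for tests valued in $[0,1]$, whereas the natural calibration summand $h_p$ is signed and lies in $[-p, 1-p]$. The crux is the observation that a single fixed shift by $p$ lands it in $[0,1]$ while leaving the generalization gap invariant, so no factor is lost (a cruder argument treating $h_p \in [-1,1]$ would cost an extra factor of two). The remaining bookkeeping---counting the $1/\tau$ contributing bins and interchanging absolute value with expectation---is routine.
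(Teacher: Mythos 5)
Your proposal is correct and follows essentially the same route as the paper's own proof: decompose $\cgap_\tau$ into the per-bin tests $\phi_p((x,y);\params)=\1\{f_\params(x)\in(p,p+\tau)\}\cdot(y-p)$, apply $\delta$-strong DG bin by bin, and combine via the reverse triangle inequality and $|\E X|\le \E|X|$ over the $1/\tau$ bins. Your extra observation that $\phi_p$ lies in $[-p,1-p]$ rather than $[0,1]$ and must be shifted by the constant $p$ before invoking the definition is a small point the paper's proof glosses over, and handling it costs nothing since the shift cancels in the train--test difference.
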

\begin{proof}
First, let us define the family of tests:
\[
\begin{aligned}
\phi_p((x,y); \params)
~\define~
\1\{f_\params(x) \in (p, p+\tau)\}
\cdot
(y - p)
\end{aligned}
\]
The assumed $\delta$-strong DG implies that each of these tests $\phi_p$ have similar expectations between
train and test distributions:
\begin{align*}
\forall p:\quad
\left|
\E_{S \sim \datagen^n; (x, y) \sim S}[
\phi_p((x,y); \train(S))
]
-
\E_{S \sim \datagen^n; (x, y) \sim \datagen}[
\phi_p((x,y); \train(S))
]
\right|
\leq \delta
\end{align*}

Thus, the difference in expected calibration gaps between train and test is:
\begin{align*}
&
|\E_{S \sim \datagen^n}[
\cgap_\tau(f_\params; S)
-
\cgap_\tau(f_\params; \datagen) 
]|\\
&=
\left|
\E_{S \sim \datagen^n}
\sum_{p \in \{0, \tau, 2\tau, \dots, 1\}}
\left|\E_{x, y \sim S}[\phi_p((x, y); \params)]\right|
-
\left|\E_{x, y \sim \datagen}[\phi_p((x, y); \params)]\right|
\right|\\
&\leq
\sum_{p \in \{0, \tau, 2\tau, \dots, 1\}}
\E_{S \sim \datagen^n}
\left|\E_{x, y \sim S}[\phi_p((x, y); \params)
-
\E_{x, y \sim \datagen}[\phi_p((x, y); \params)]
\right|\\
&\leq
\frac{\delta}{\tau},
\end{align*}
where the last inequality follows from the definition of $\delta$-strong DG applied to functions $\phi_p$.
\end{proof}

\section{Additional Details on Algorithms}\label{app:algo}

We define $q_g$ as the probability of group $g$, and $m$ as the number of groups.

\paragraph{\iserm.}
    The weight for group $g$ is $w_g = \nicefrac{1}{m \cdot q_g}$.
    Let $g_i$ be the group that the $i$-th example belongs to.
    We then sample (with replacement) from the training set with the $i$-th example having a $w_{g_i}$ chance of being sampled until we have $b$ examples, where $b$ is the batch size.
    Finally, for each mini-batch, we optimize the standard cross-entropy loss with the sampled examples.

\paragraph{\iwerm.}
    The weight for group $g$ is $w_g = \nicefrac{1}{m \cdot q_g}$.
    We optimize the following loss function:
    \[
        w_{g} \cdot \loss(\xy; \params),
    \] where $\ell(\xy, \params)$ is the cross-entropy loss and $\xy \in \dataset$ drawn uniformly random drawn from the dataset, and $g$ is the group to which $\xy$ belongs.

\section{Additional Experiment Details}\label{app:experiment}

\subsection{Details on Datasets, Software, and Model Training}
\label{app:technical}

\begin{table}[h!]
    \footnotesize
    \centering
    \caption{The number of examples in each subgroup for CelebA.}
    \label{tab:CelebA_detail}
    \begin{tabular}{lrrr}
    \toprule
                        & training & validation & testing \\
    \midrule
    not blond, female   & 71629    & 8535       & 9767    \\
    not blond, male     & 66874    & 8276       & 7535    \\
    blond, female       & 22880    & 2874       & 2480    \\
    blond, male       & 1387     & 182        & 180    \\
    \bottomrule
    \end{tabular}
\end{table}

\begin{table}[h!]
    \footnotesize
    \centering
    \caption{The number of examples in each subgroup for UTKFace.}
    \label{tab:UTKFace_detail}
    \begin{tabular}{lrrr}
    \toprule
                     & training & validation & testing \\
    \midrule
    male, White      & 3919    & 454       & 1105    \\
    male, Black      & 1700    & 181       & 437    \\
    male, Asian      & 1115    & 157       & 303    \\
    male, Indian     & 1594    & 190       & 477    \\
    male, Others     & 563     & 61        & 136    \\
    female, White    & 3316    & 384       & 902    \\
    female, Black    & 1606    & 188       & 414    \\
    female, Asian    & 1302    & 158       & 399    \\
    female, Indian   & 1230    & 152       & 333    \\
    female, Others   & 655     & 75        & 202    \\
    \bottomrule
    \end{tabular}
\end{table}

\begin{table}[h!]
    \footnotesize
    \centering
    \caption{The number of examples in each subgroup for iNat.}
    \label{tab:iNaturalist_detail}
    \begin{tabular}{lrrr}
    \toprule
                     & training & validation & testing \\
    \midrule
    Actinopterygii   & 2112    & 195       & 312     \\
    Amphibia         & 14531   & 1242      & 1930    \\
    Animalia         & 5362    & 491       & 737     \\
    Arachnida        & 4838    & 461       & 660     \\
    Aves             & 191773  & 17497     & 26251   \\
    Chromista        & 435     & 52        & 55      \\
    Fungi            & 6148    & 575       & 883     \\
    Insecta          & 96894   & 8648      & 13013   \\
    Mammalia         & 26724   & 2475      & 3624    \\
    Mollusca         & 7627    & 693       & 1057    \\
    Plantae          & 159843  & 14653     & 22117   \\
    Protozoa         & 309     & 25        & 37      \\
    Reptilia         & 33404   & 2983      & 4494    \\
    \bottomrule
    \end{tabular}
\end{table}

\begin{table}[h!]
    \footnotesize
    \centering
    \caption{The number of examples in each subgroup for CivilComments.}
    \label{tab:civil_detail}
    \begin{tabular}{lrrr}
    \toprule
    & training & validation & testing \\
    \midrule
    Non-toxic, Identity  &  94895  &   15759  &  46185          \\
    Non-toxic, Other   & 143628  &  24366   & 72373          \\
    Toxic, Identity &   18575  &   3088  &    9161        \\
    Toxic, Other  &  11940  &   1967  &   6063       \\
    \bottomrule
    \end{tabular}
\end{table}

\begin{table}[h!]
    \footnotesize
    \centering
    \caption{The number of examples in each subgroup for MNLI.}
    \label{tab:mnli_detail}
    \begin{tabular}{lrrr}
    \toprule
    & training & validation & testing \\
    \midrule
    Contradiction, No negation  & 57498  & 22814   &  34597     \\
    Contradiction, Negation   &  11158 &  4634  &  6655      \\
    Entailment, No negation   & 67376  &  26949  &   40496    \\
    Entailment, Negation     & 1521  & 613   &    886   \\
    Neutral,  No negation & 66630  &  26655  &  39930     \\
    Neutral,  Negation  & 1992  &  797  &  1148     \\
    \bottomrule
    \end{tabular}
\end{table}

\begin{table}[h!]
    \footnotesize
    \centering
    \caption{The number of examples in each subgroup for ADULT.}
    \label{tab:adult_detail}
    \begin{tabular}{lrrr}
    \toprule
    & training & validation & testing \\
    \midrule
    Female, income$\leq$50k  & 11763  & 911   &  1749     \\
    Male, income$\leq$50k   &  18700 &  1373  &  2659      \\
    Female, income$>$50k   & 1444  &  105  &  220    \\
    Male, income$>$50k     & 8093  & 611   &  1214   \\
    \bottomrule
    \end{tabular}
\end{table}

\begin{table}[t]
    \footnotesize
    \centering
    \caption{The accuracy for each subgroup on CelebA. These results are acquired without any regularization or early stopping (trained on full $50$ epochs).}
    \label{tab:CelebA}
    \begin{tabular}{llcccc}
\toprule
          &      & \multicolumn{2}{c}{blond} & \multicolumn{2}{c}{not blond} \\
          &      & female & male &    female & male \\
\midrule
\multirow{2}{*}{\erm} & train &   1.00 & 0.99 &      1.00 & 1.00 \\
          & test &   0.80 & 0.42 &      0.97 & 1.00 \\
\cline{1-6}
\multirow{2}{*}{\iwerm} & train &   0.98 & 0.99 &      0.98 & 0.99 \\
          & test &   0.87 & 0.49 &      0.95 & 0.98 \\
\cline{1-6}
\multirow{2}{*}{\iserm} & train &   1.00 & 1.00 &      1.00 & 1.00 \\
          & test &   0.83 & 0.38 &      0.96 & 0.99 \\
\cline{1-6}
\multirow{2}{*}{\dpsgd} & train &   0.80 & 0.41 &      0.96 & 0.99 \\
          & test &   0.74 & 0.29 &      0.98 & 1.00 \\
\cline{1-6}
\multirow{2}{*}{\dpis} & train &   0.94 & 0.96 &      0.88 & 0.90 \\
          & test &   0.92 & 0.85 &      0.91 & 0.92 \\
\bottomrule
\end{tabular}
\end{table}

All algorithms are implemented in \texttt{PyTorch}\footnote{Code and license can be found in \url{https://github.com/pytorch/pytorch}.}~\citep{paszke2019pytorch}.
For DP-related utilities, we use \texttt{opacus}\footnote{Code and license can be found in \url{https://github.com/pytorch/opacus}.}~\citep{opacus}.
Other packages, including
\texttt{numpy}~\footnote{Code and license can be found in \url{https://github.com/numpy/numpy}}~\citep{harris2020array},
\texttt{scipy}~\footnote{Code and license can be found in \url{https://github.com/scipy/scipy}}~\citep{2020SciPy-NMeth},
\texttt{tqdm}~\footnote{Code and license can be found in \url{https://github.com/tqdm/tqdm}}, and
\texttt{pandas}~\footnote{Code and license can be found in \url{https://github.com/pandas-dev/pandas}}~\citep{reback2020pandas},
are also used.
For \gdro~\citep{sagawa2019distributionally}, we use the implementation from \texttt{wilds}~\citep{koh2021wilds}.
We use Nvidia 2080ti, 3080, and A100 GPUs.
Our experiments required approximately 400 hours of GPU time.

\paragraph{Datasets.}
For CelebA and CivilComments, we follow the training/validation/testing split in \citet{koh2021wilds}.
For UTKFace and iNat, we randomly split the data into 17000/2000/4708 and 550000/50000/75170 for training/validation/testing.
For MNLI, we use the same training/validation/testing split in \citet{sagawa2019distributionally}.
For Adult~\cite{kohavi1996scaling}, we randomly split the data into 35000/3000/5842 for training/validation/testing.
\cref{tab:CelebA_detail,tab:UTKFace_detail,tab:iNaturalist_detail,tab:civil_detail,tab:mnli_detail,tab:adult_detail} show the dataset statistics on each group.

All the datasets are publicly available for non-commercial use. In our work, we adhere to additional rules regulating the use of each dataset.
All datasets other than iNat could potentially contain personally identifiable information, and are likely collected without consent, to the best of our knowledge. They are all, however, collected from manifestly public sources, such as public posts on social media. Thus, we consider the associated privacy risks low.

The data also contain offensive material (e.g., explicitly in the case of CivilComments dataset). We consider the associated risks of reproducing the offensive behavior low, as we use the datasets only to evaluate our theoretical and theoretically-inspired results.

\paragraph{Models.}
Similar to previous work~\citep{sagawa2019distributionally}, we use the ImageNet-1k pretrained ResNet50~\cite{he2016deep} from \texttt{torchvision} for CelebA, UTKFace, and iNat, and use the pretrained BERT-Base~\cite{devlin2018bert} from \texttt{huggingface}~\citep{wolf2019huggingface} for CivilComments and MNLI.

For ADULT, we follow the setup in \cite{xu2021removing} and use logistic regression with standard optimization, and DP-based training methods.
We fix the batch size to $256$ (for SGD), weight decay to $0.01$, and number of epochs to $20$. For the DP algorithms, we use gradient norm clipping to $0.5$, and sampling rate of $0.005$.
For all training algorithms, we train five model times with different random seeds and we record the mean and standard error of the mean of our metrics. 
The noise parameter $\sigma$ for DP-SGD-F and DP-SGD is set to $1.0$, and we set the $\sigma$ for DP-IS-SGD to $5.0$ to achieve similar privacy budget $\epsilon \approx 0.7$.
The additional noise parameter for DP-SGD-F $\sigma_2$ is set to $10\sigma$ as in \citet{xu2021removing}.

\paragraph{Hyperparameters.}
We run $50$ epochs for CelebA, $100$ epochs for UTKFace, $20$ epochs for iNat, and $5$ epochs for CivilComments and MNLI.
For image datasets (CelebA, UTKFace, and iNat), we use the SGD optimizer and for NLP datasets (CivilComments and MNLI), we use the AdamW~\citep{loshchilov2017decoupled} optimizer.
We use \texttt{opacus}'s~\cite{opacus} implementation of \dpsgd and DP-AdamW to achieve DP guarantees.

We fix the batch size for none-DP algorithms to $64$ for CelebA and UTKFace, $256$ for iNat, $16$ for CivilComments, and $32$ for  MNLI.
For \dpsgd and \dpis, we set the sample rate to $0.0001$ for CelebA and iNat, $0.001$ for UTKFace, and 0.00005 for CivilComments and MNLI.

\subsection{Generalization of Worst-Case Group Accuracy as a Proxy for the DG Gap}
\label{app:empirical-dg}

Although generalization of worst-case group accuracy is not explicitly implied by DG, in our experiments it is practically equivalent to using the generalization gap of subgroup accuracy, which is bounded by TV stability. Let us first concretely define the generalization gap of the worst-case group accuracy:
\begin{definition}
The on-average generalization gap of the worst-case accuracy is defined as the following difference:
\begin{equation}\label{eq:worst-group-gen}
    \wggap~ \define \E_{\dataset \sim \datagen^n}\left[\max_{\group \in \setgroups} \E_{\xy \sim \dataset_\group}[\ell(\xy, \theta(\dataset))] ~\Big|~ |\dataset_{\group}| > 0\right] - \E_{\dataset \sim \datagen^n}\left[\max_{\group \in \setgroups} \E_{\xy \sim \datagen_\group} [\ell(\xy, \theta(\dataset))]\right],
\end{equation}
where we take $\ell((x, y), \theta)~\define~\1[f_\theta(x) = y]$ to be the 0/1 loss. In this definition we explicitly restrict the datasets to include elements of each group $\group \in \setgroups$, which is a technicality needed in order to avoid undefined behavior. 
\end{definition}

In all our experimental results, the worst-performing groups (the maximizers in \cref{eq:worst-group-gen}) are always the same on the training and test data. As long as this holds---the worst-performing group is the same on the train and test data---the generalization gap above simplifies to:
\begin{equation}\label{eq:worst-group-gen-as-gen-of-worst-group}
\begin{aligned}
    \wggap = 
    \E_{\substack{\dataset \sim \datagen^n \\ \xy \sim \dataset_{\group^*}}}[\ell(\xy, \train(\dataset)) \mid |\dataset_{\group^*}| > 0] - \E_{\substack{\dataset \sim \datagen^n \\ \xy \sim \datagen_{\group^*}}}[\ell(\xy, \train(\dataset))],
\end{aligned}
\end{equation}
where $\group^* \in \setgroups$ is the worst-performing group. In \cref{app:subgroup} we show that this simplified gap from \cref{eq:worst-group-gen-as-gen-of-worst-group} is bounded by TV stability.

Therefore, in practice the generalization gap in \cref{eq:worst-group-gen} offers a lower bound on the DG gap in \cref{eq:dg-variational-defn}. Using it as a proxy for DG gap follows the spirit of the estimation approach by \citet{nakkiran2020distributional} which proposes to estimate the DG gap by taking the maximum of empirical generalization gaps for a finite set of relevant test functions (here, per-group accuracies).

\paragraph{Other Approaches to Estimate the DG Gap.}
The generalization gap of worst-case group accuracy can be loose as a proxy.
Finding the worst-case test function is an object of study in the literature on \newterm{membership inference attacks}~\cite{shokri2017membership}, because DG and the accuracy of such attacks in their standard formalization are equivalent, as showed by~\citet{kulynych2022disparate}. In this work, we opt for a simpler and direct approach described above.

\subsection{Additional Details for \cref{sec:dg-in-practice}}
\label{app:dg-regularization-basic}

As mentioned in \cref{sec:algorithms}, many regularization methods can be used to improve different generalization gaps.
For example, \citet{sagawa2019distributionally} show that strong $\ell_2$ regularization helps with improving group-distributional generalization, and \citet{YY2020roblocallip} show that dropout helps with adversarial-robustness generalization.
However, these works do not have theoretical justification.

Our framework suggests a unifying reason why strong regularization
is helpful in distributional robustness: because it enforces DG.
Following this theoretically-inspired intuition, other regularization methods
beyond a combination of gradient noise and clipping (\dpsgd) can imply DG in practice.
We verify this hypothesis empirically.

\begin{figure*}[t!]
    \centering
    \subfigure[Differential privacy as regularization]{
      \includegraphics[width=.3\textwidth]{images/dpdg/dpdg_wgacc_celebA-nohead.pdf}}
    \subfigure[$\ell_2$ regularization]{
      \includegraphics[width=.3\textwidth]{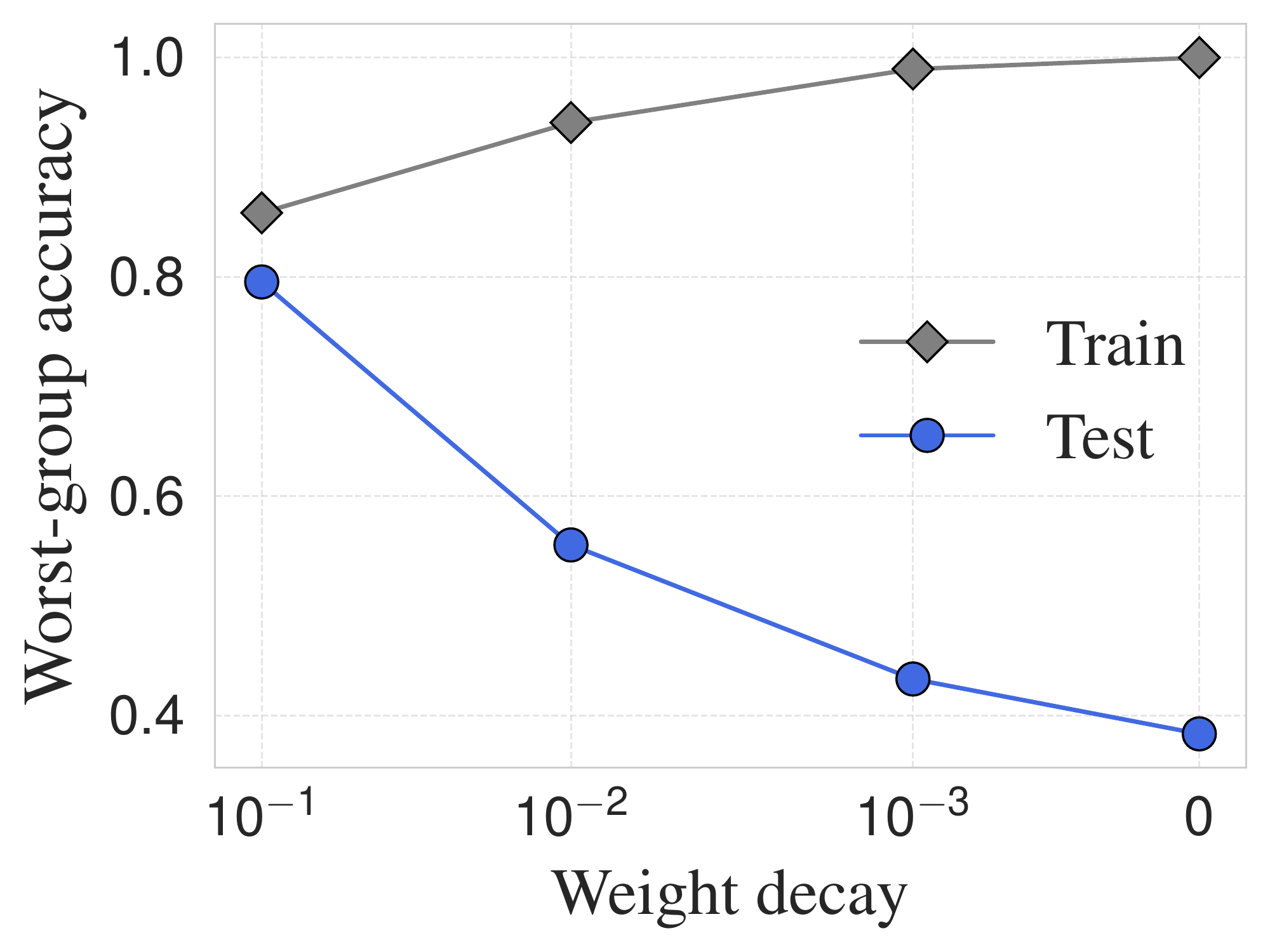}}
    \subfigure[Early-stopping as regularization]{
      \includegraphics[width=.3\textwidth]{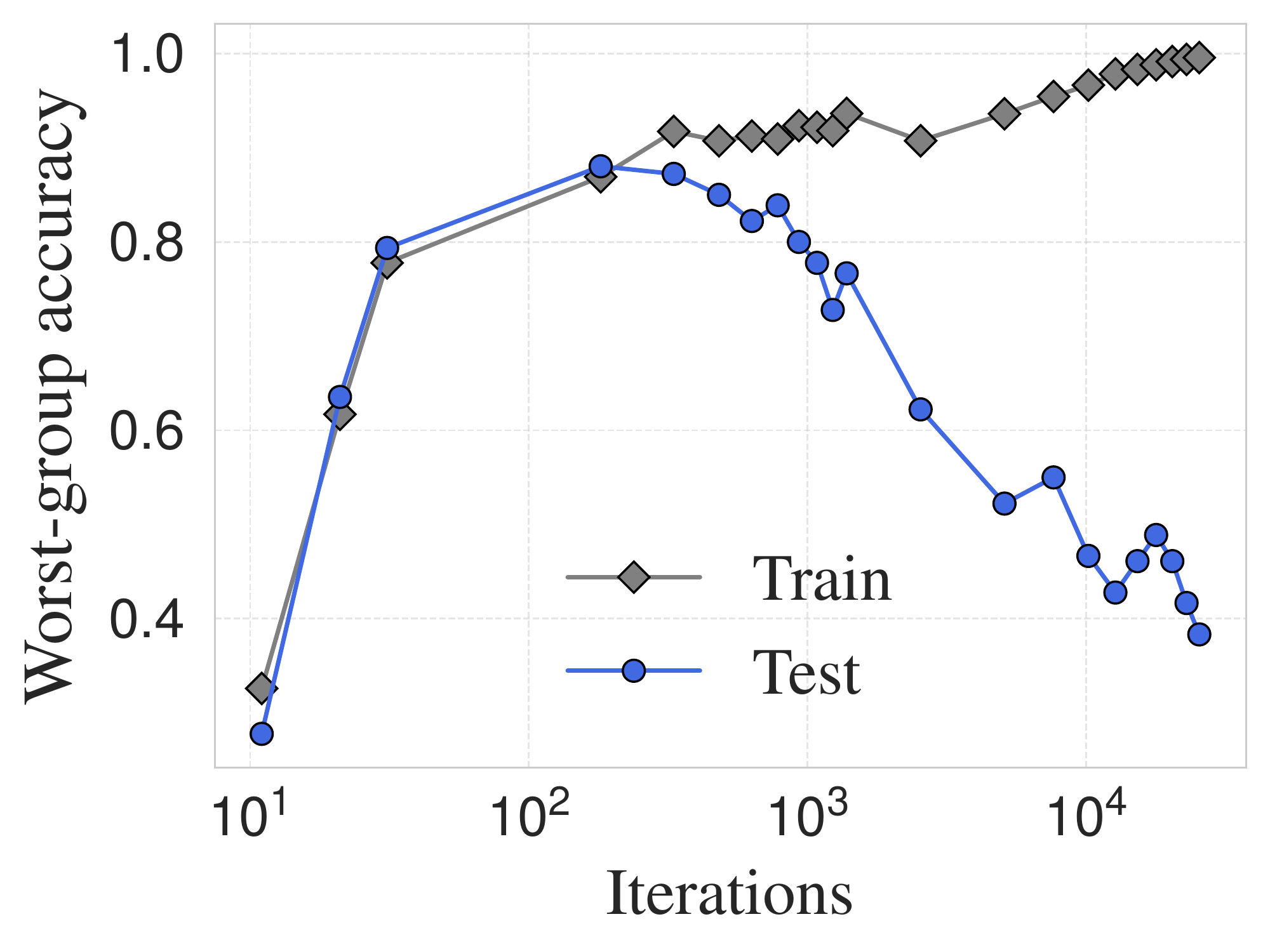}}

    \caption{{\bf Regularization induces DG.}
    The figure shows train/test worst-group accuracies as a function of regularization strength for SGD on CelebA,
    with different types of regularizers: differential privacy budget $\eps$, weight decay, and train time.
    For DP-SGD, $\epsilon=\infty$ represents standard SGD.
    For all types of regularizers, increasing the strength (left on x-axis) corresponds to a smaller generalization gap
    in worst-group accuracy.
    }
    \label{fig:dpdg_gap}
\end{figure*}

\paragraph{Privacy, $\ell_2$ Regularization, and Early Stopping.} In \cref{fig:dpdg_gap}, we train a neural network on CelebA using \dpsgd,
and decrease the ``regularization strength'' in several different ways:
by increasing privacy budget $\eps$ (\cref{fig:dpdg_gap}a),
decreasing the $\ell_2$ regularization (\cref{fig:dpdg_gap}b),
or increasing the number of training iterations (\cref{fig:dpdg_gap}c).\footnote{Train time can be considered a regularizer,
as its decrease induces stability (e.g. \citet{hardt2016train}).}
We then measure the gap in worst-group accuracy on train vs. test (\cref{app:empirical-dg}).
We observe that for all regularizers,
the gap between training and testing worst-group accuracy increases as
the regularization is weakened.

\paragraph{Investigating $\ell_2$ Regularization in Depth.}
In \cref{fig:other_dg_methods}, we show the training and testing worst-group accuracy with different strength of $\ell_2$ regularization and on different epochs (w/ and w/o $\ell_2$ regularization).
We have three observations:
(1) with properly tuned regularization parameter, the gap between training and testing worst-group accuracy can be narrowed,
(2) the gap can start widening in very early stage of training, and
(3) the testing worst-group accuracy can fluctuate largely, which highlights the importance of using validation set for early stopping in this task.

\begin{figure}[h!]
    \centering
    \subfigure[CelebA]{\includegraphics[width=.3\textwidth]{images/regularizations/l2reg_celebA.pdf}}
    \subfigure[UTKFace]{\includegraphics[width=.3\textwidth]{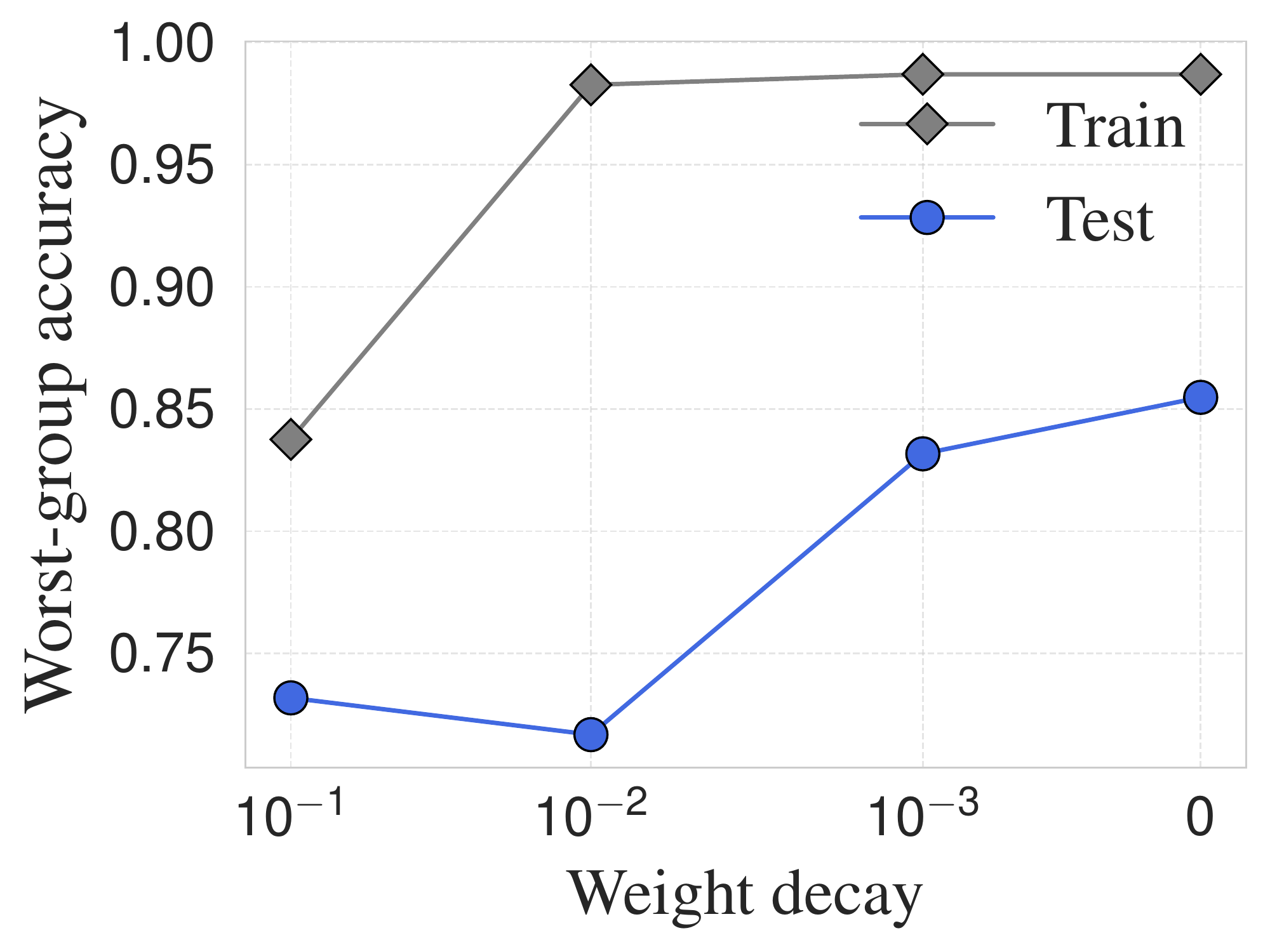}}
    \subfigure[iNat]{\includegraphics[width=.3\textwidth]{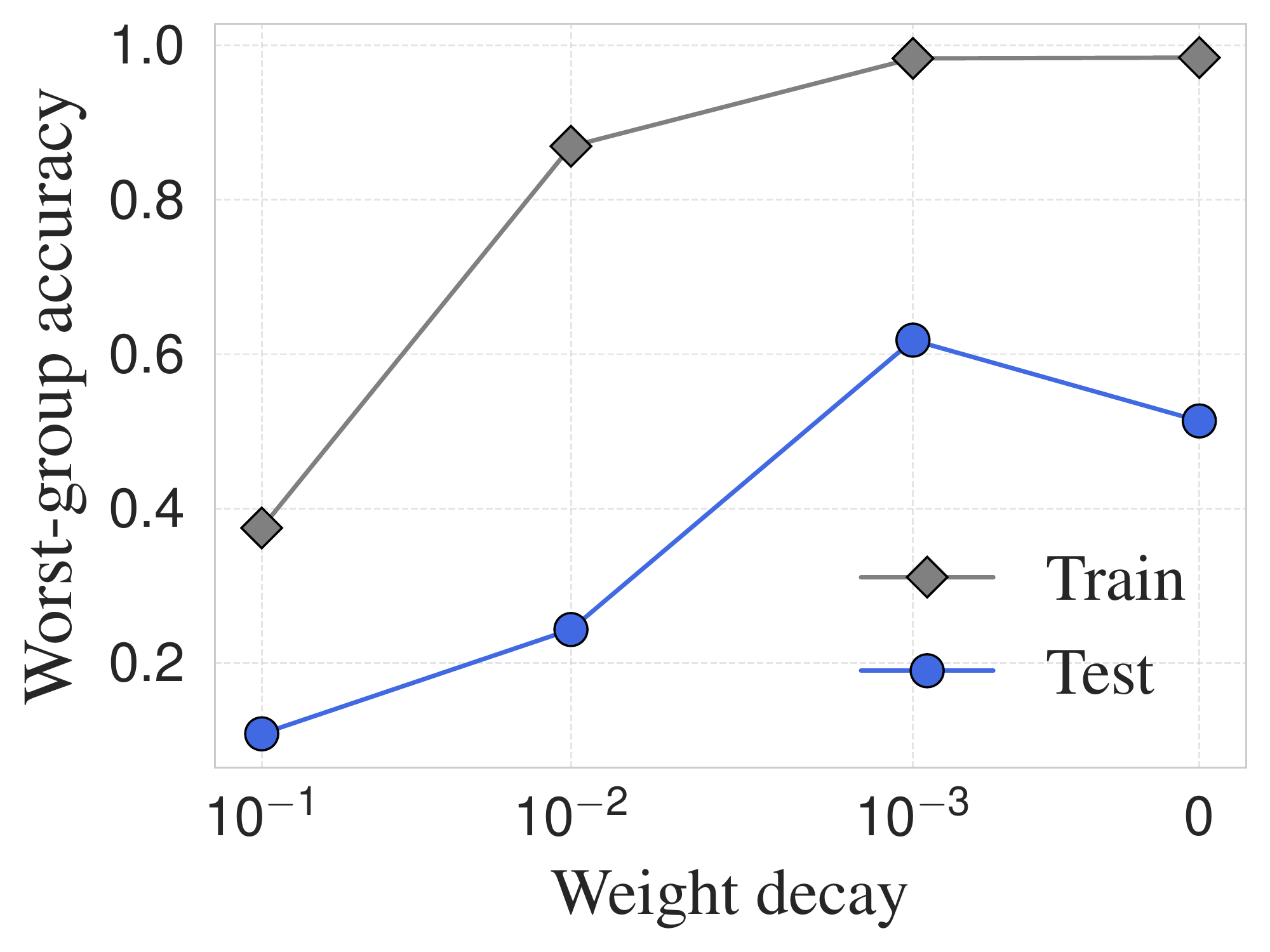}}
    
    \subfigure[CelebA (no reg.)]{
      \includegraphics[width=.3\textwidth]{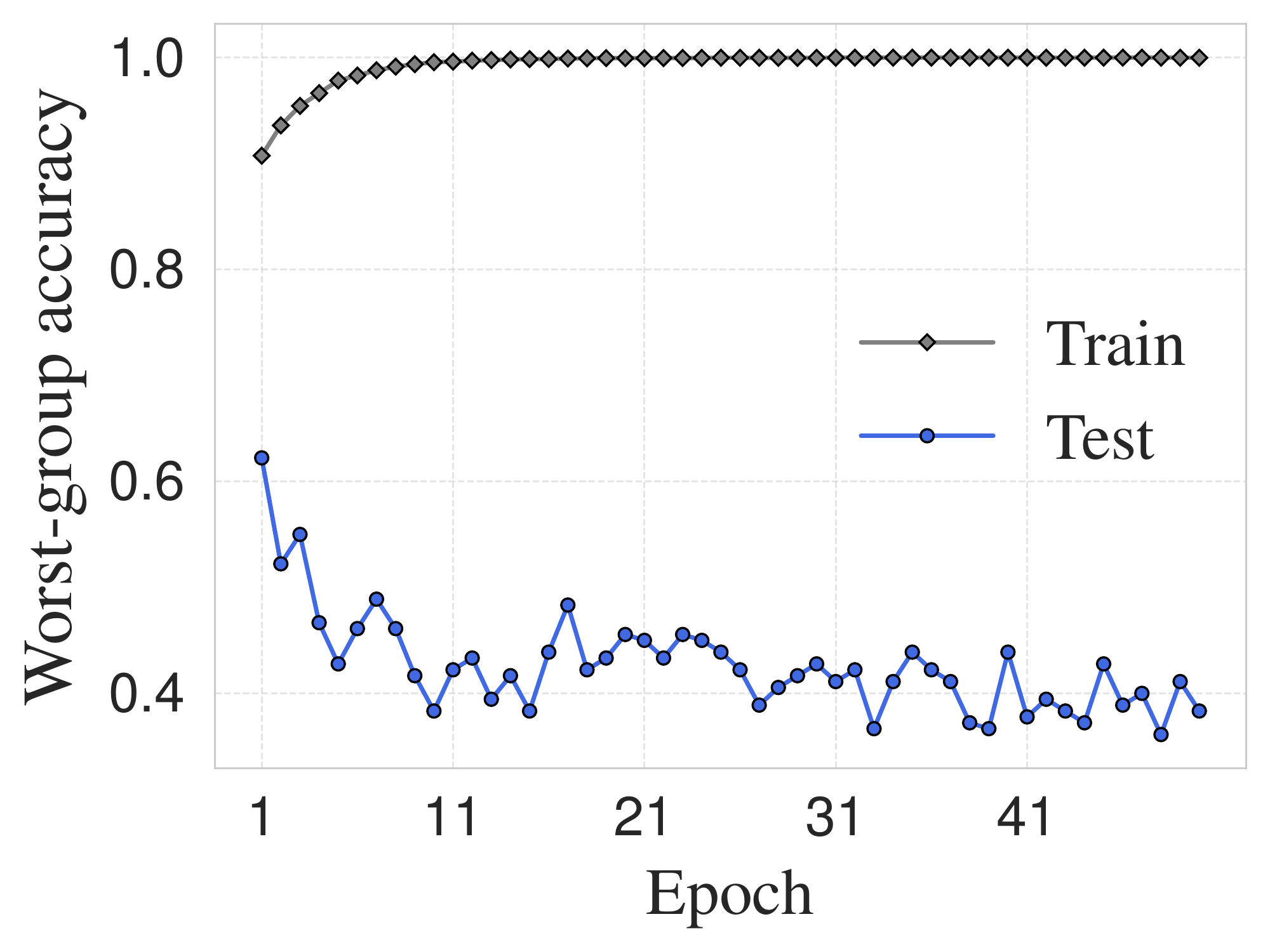}}
    \subfigure[UTKFace (no reg.)]{
      \includegraphics[width=.3\textwidth]{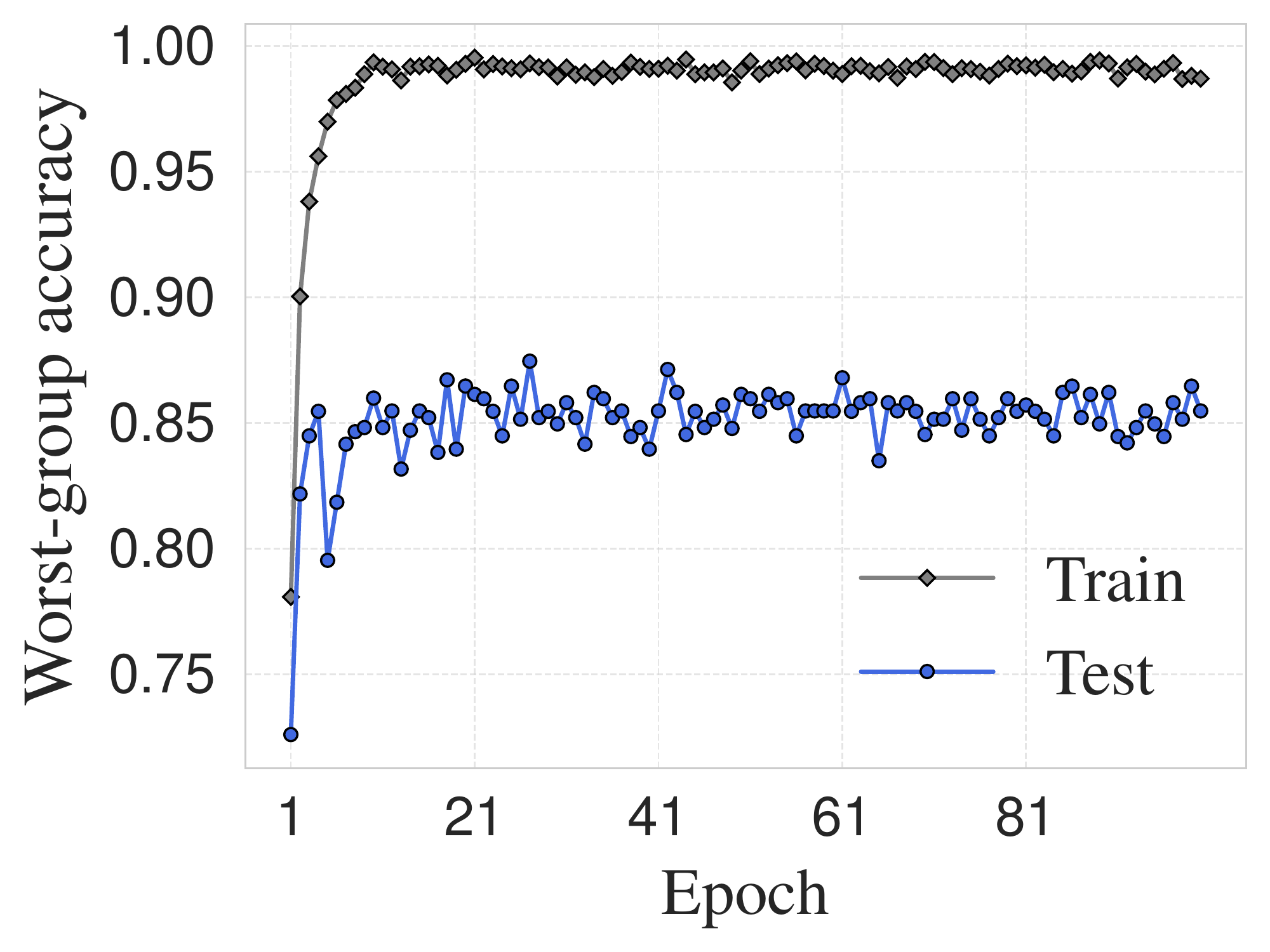}}
    \subfigure[iNat (no reg.)]{
      \includegraphics[width=.3\textwidth]{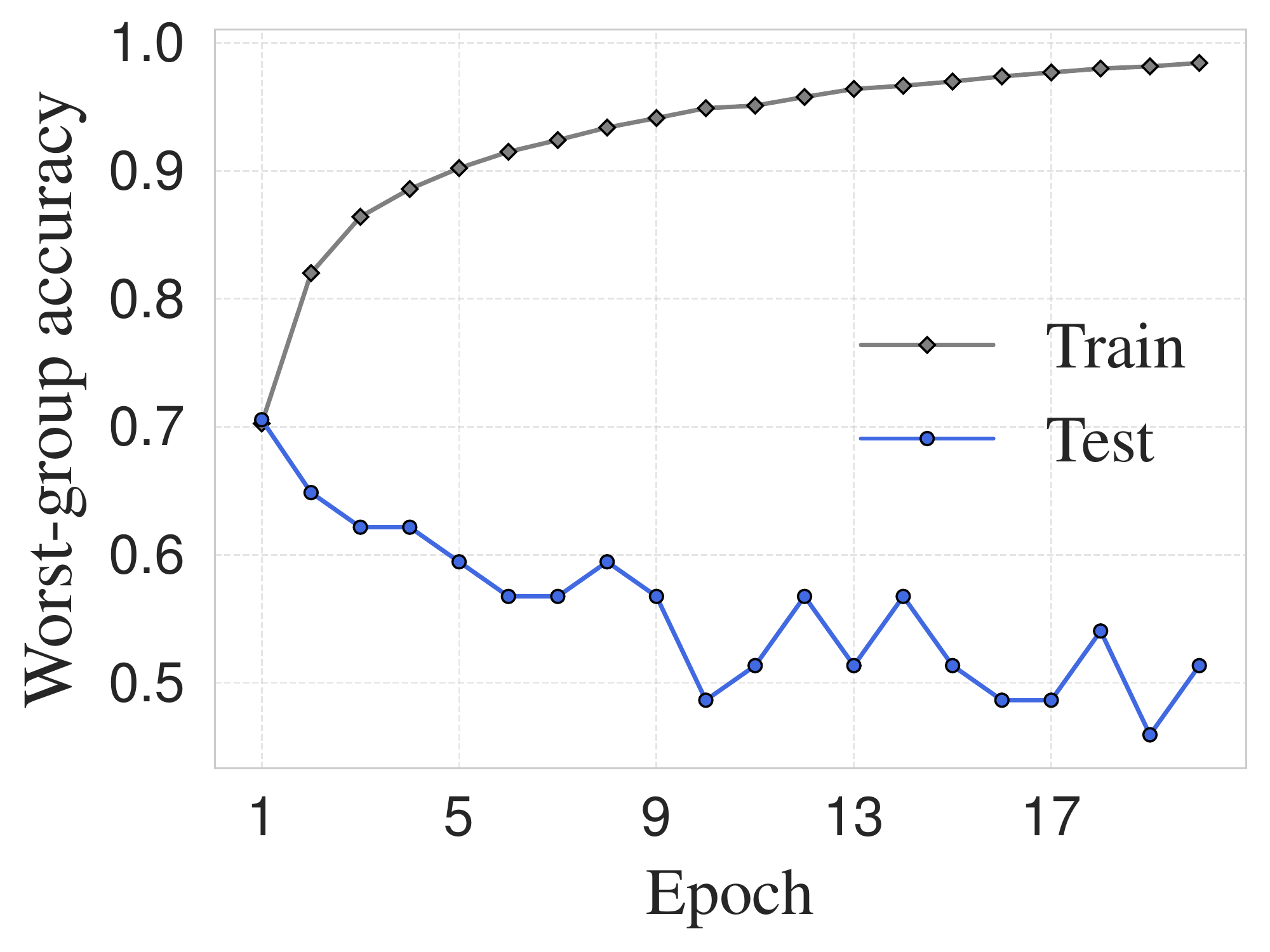}}
    
    \subfigure[CelebA ($\ell_2$ reg.)]{
      \includegraphics[width=.3\textwidth]{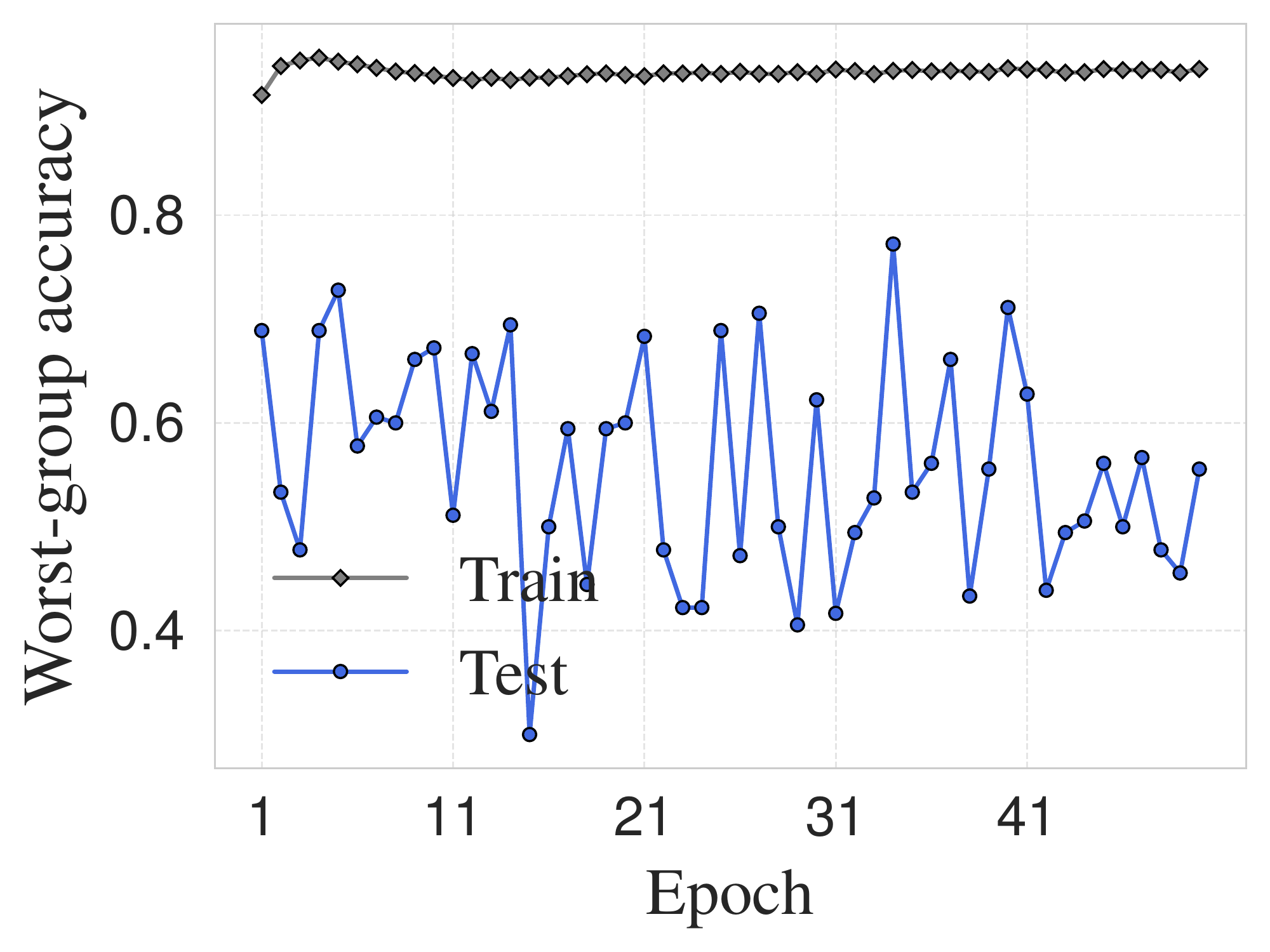}}
    \subfigure[UTKFace ($\ell_2$ reg.)]{
      \includegraphics[width=.3\textwidth]{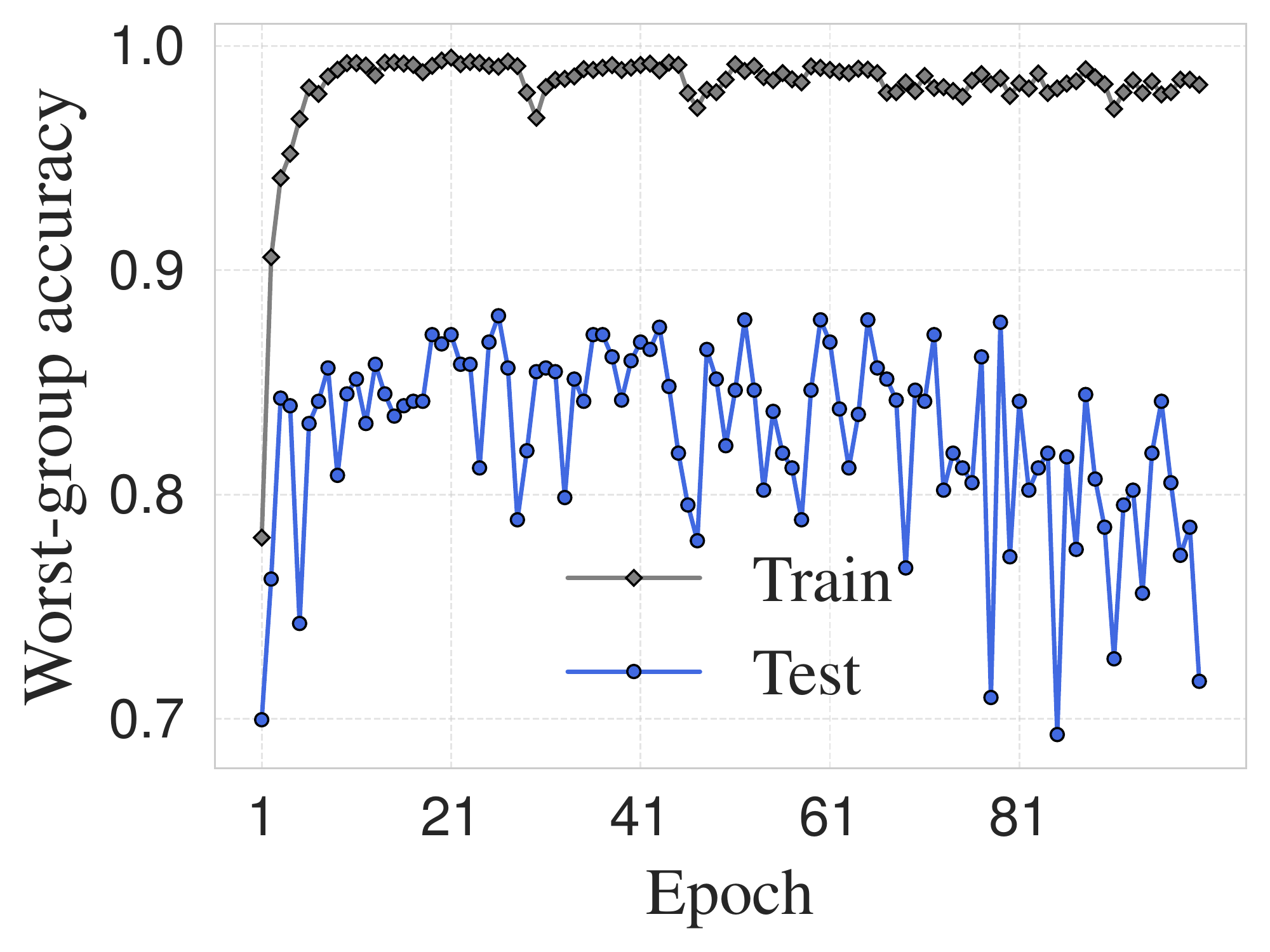}}
    \subfigure[iNat ($\ell_2$ reg.)]{
      \includegraphics[width=.3\textwidth]{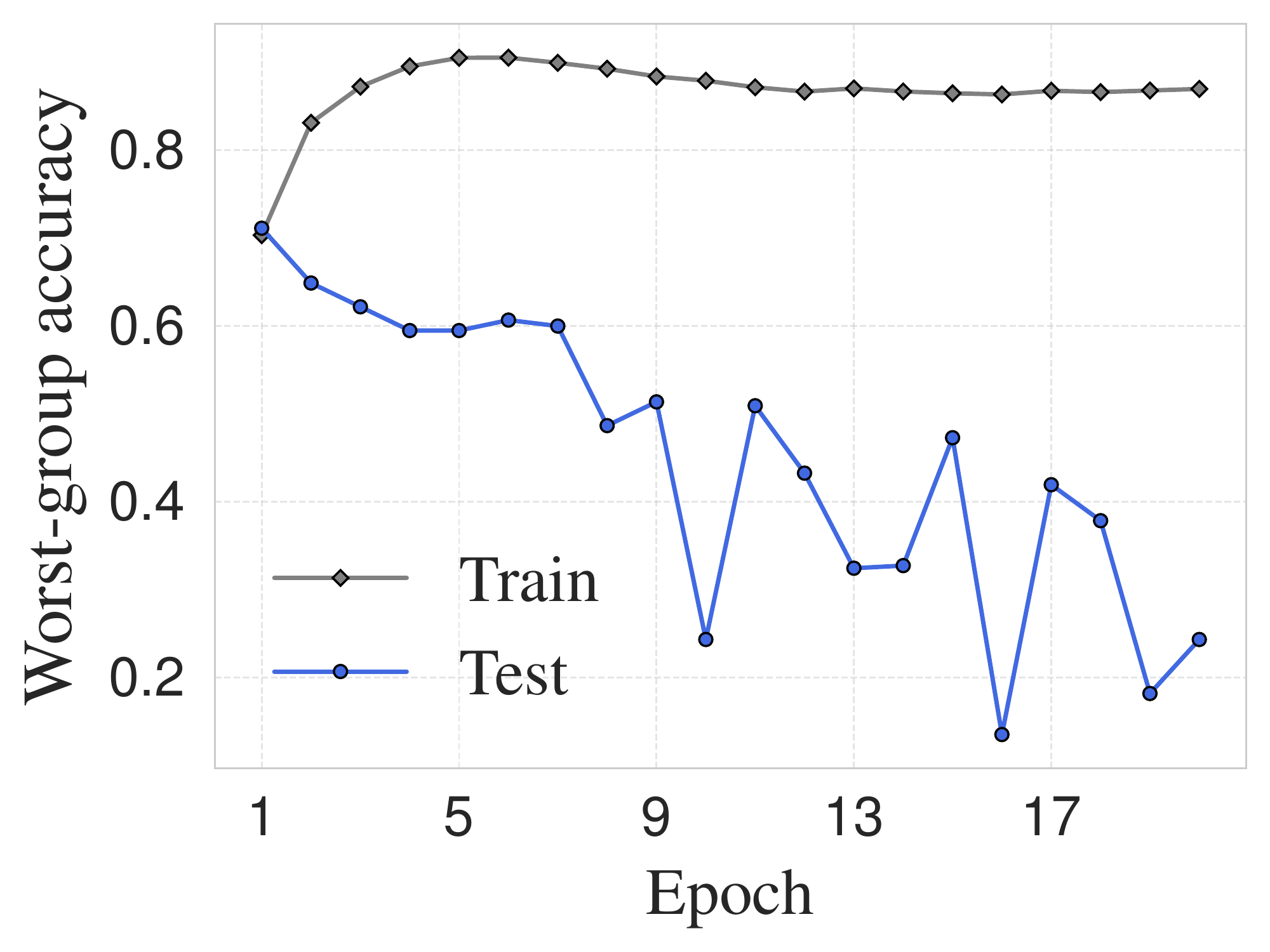}}
    
    \caption{
    We show the training and testing worst-group accuracy with different strength of $\ell_2$ regularization and on different epochs (w/ and w/o $\ell_2$ regularization).
    The network is trained with \iserm on CelebA, UTKFace, and iNat.
    For (a), (b), and (c), we show the result of the last epoch.
    For (g), (h), and (i), we set weight decay to $0.01$.
    }
    \label{fig:other_dg_methods}
\end{figure}

\subsection{Additional Details for \cref{sec:exp-disparate}}
\label{app:exp-disparate-extra}

\cref{fig:disparity-app} shows the accuracy disparity, test accuracy, and worst-group accuracy for CelebA, UTKFace, and iNat on \dpsgd and \dpis.

The reason that UTKFace has a similar disparity between \dpsgd and \dpis is likely because UTKFace has a relatively small difference in the number of training examples between the largest group and the smallest group.
In UTKFace, the majority group has around seven times more examples than in the minority group, whereas in CelebA, this difference is $52\times$.

\begin{figure}[h!]
    \centering
    \subfigure[CelebA]{\includegraphics[trim=0 0 0 42,clip,width=.3\textwidth]{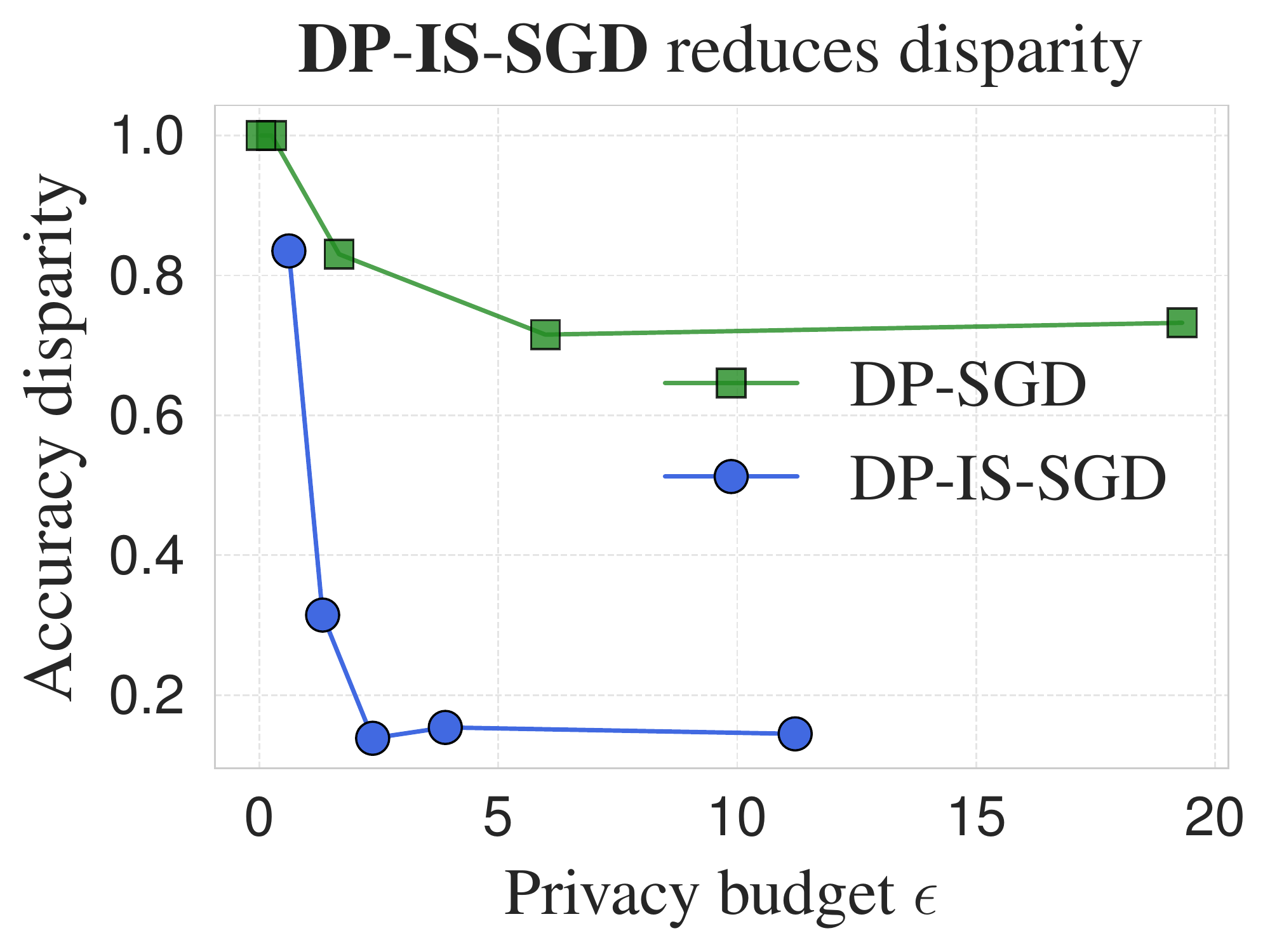}}
    \subfigure[UTKFace]{\includegraphics[trim=0 0 0 42,clip,width=.3\textwidth]{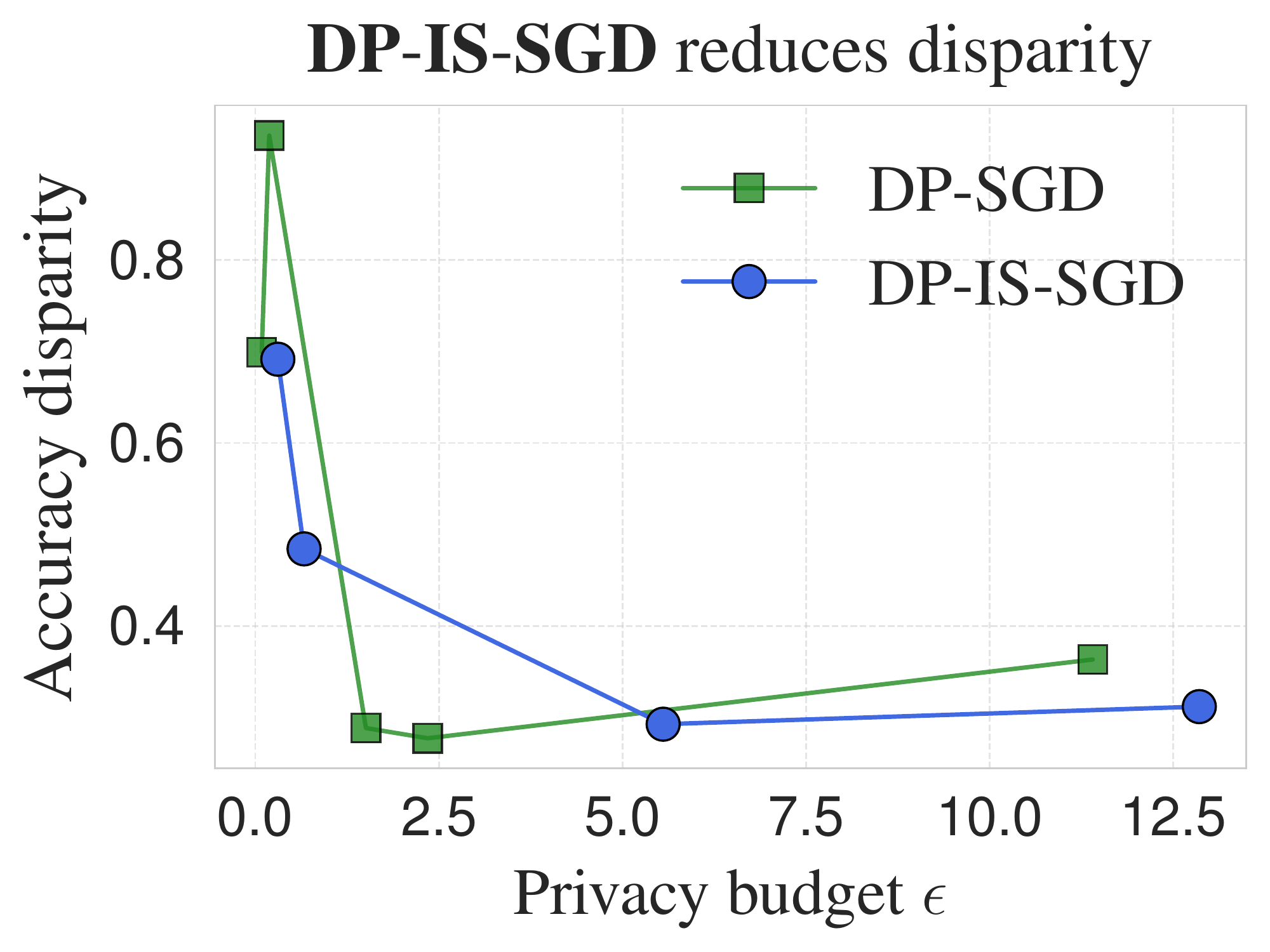}}
    \subfigure[iNat]{\includegraphics[trim=0 0 0 42,clip,width=.3\textwidth]{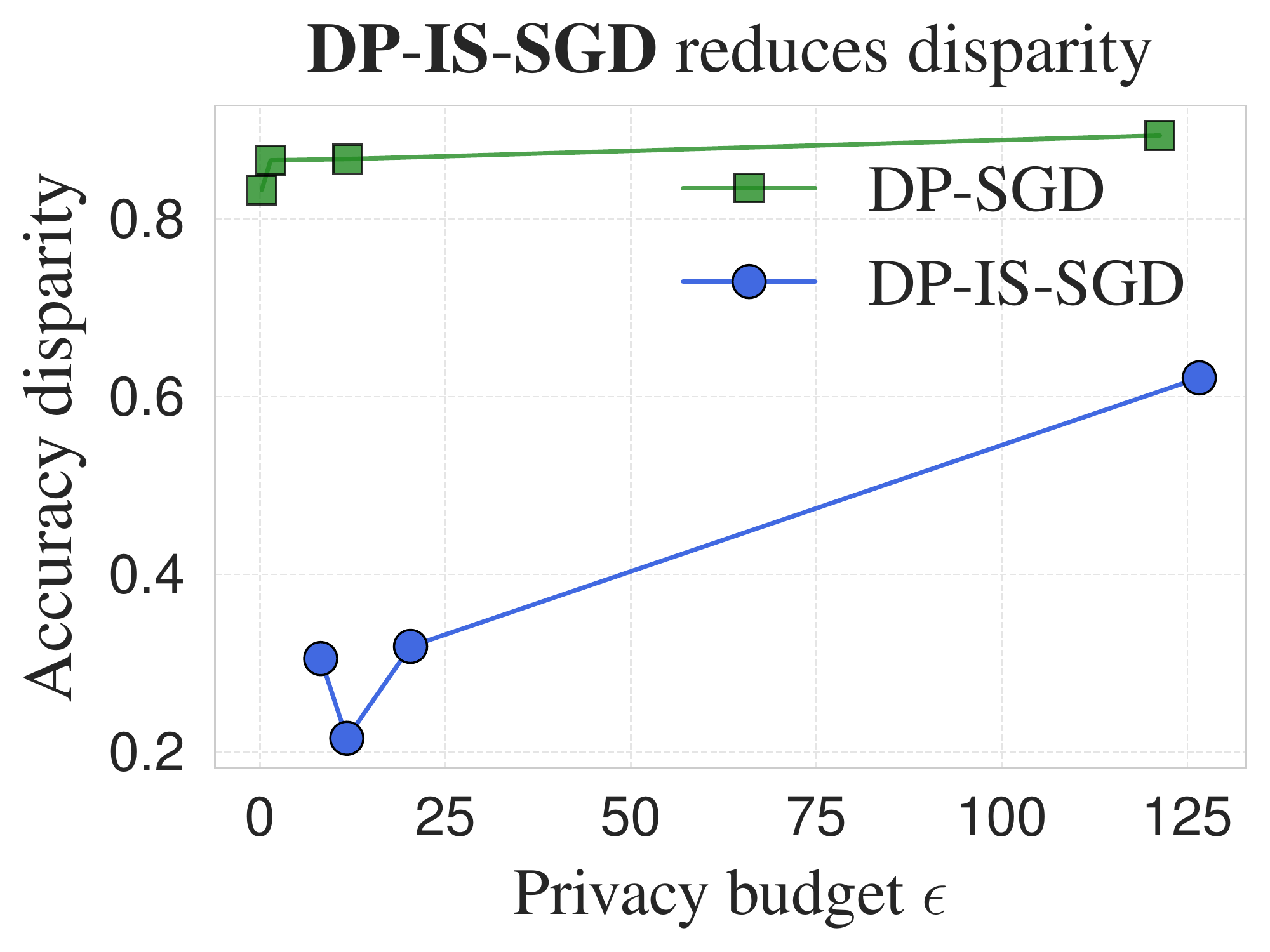}}
    
    \subfigure[CelebA]{\includegraphics[width=.3\textwidth]{images/dpdg/disparity_tstacc_celebA.pdf}}
    \subfigure[UTKFace]{\includegraphics[width=.3\textwidth]{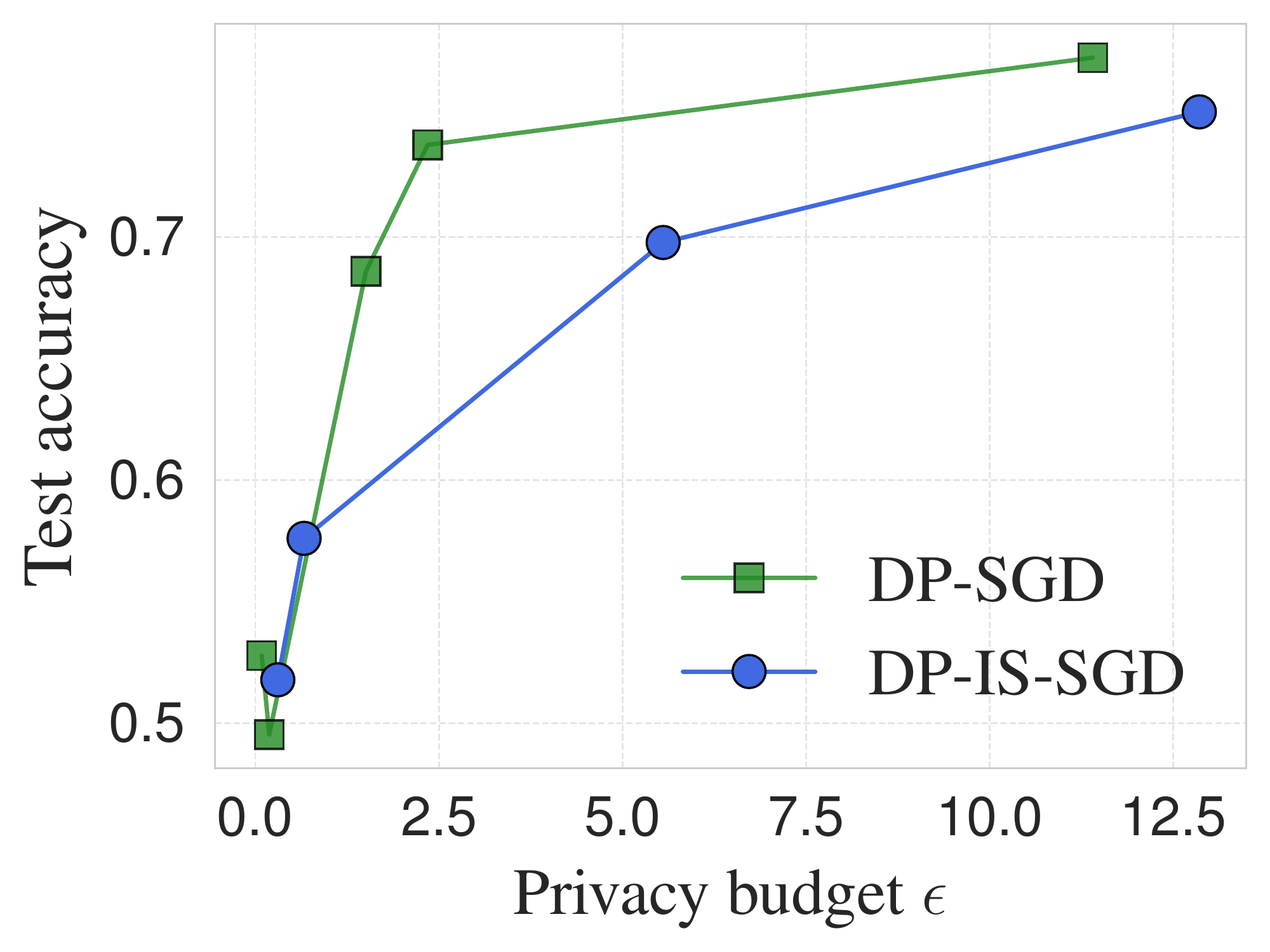}}
    \subfigure[iNat]{\includegraphics[width=.3\textwidth]{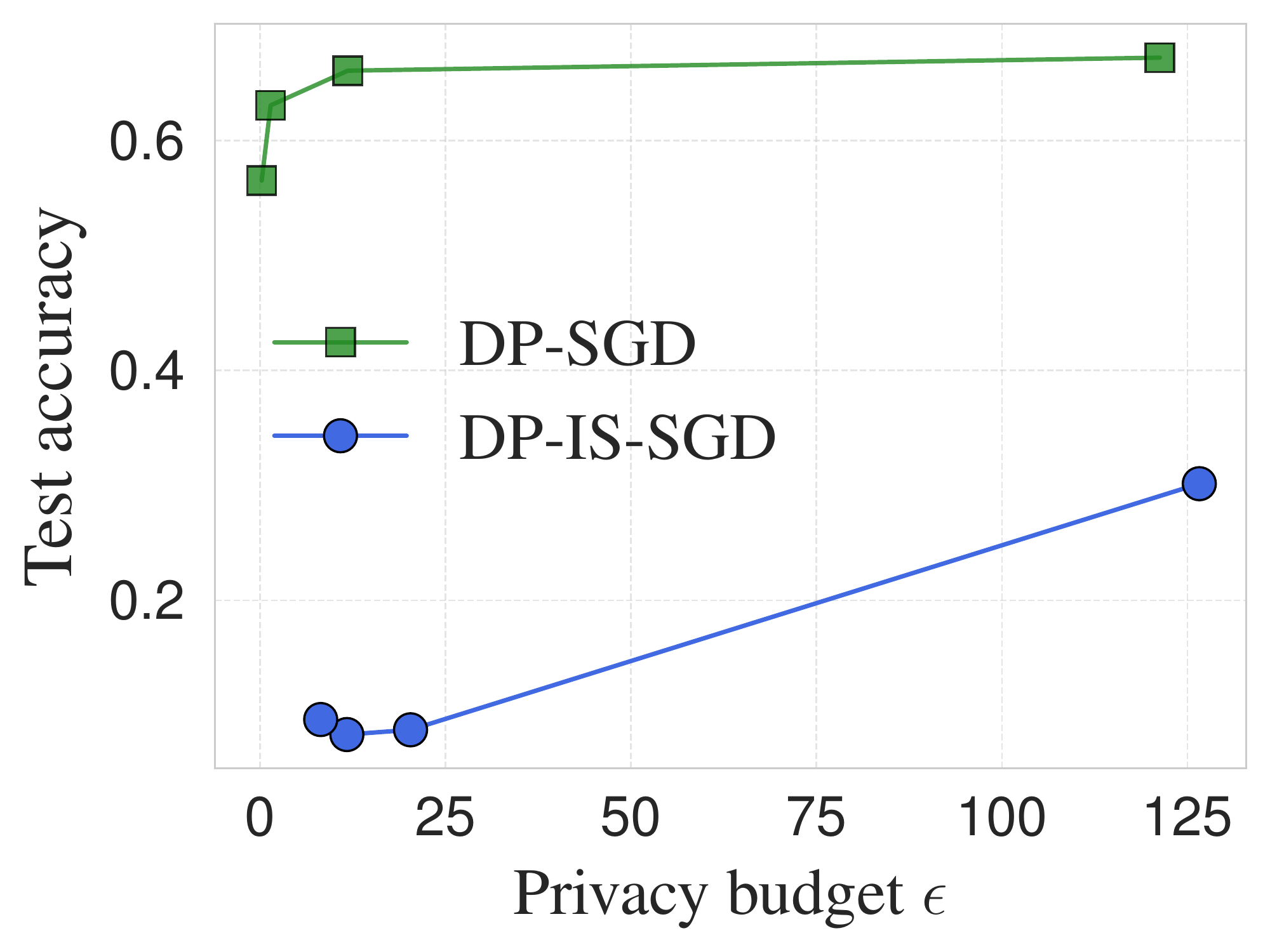}}
    
    \subfigure[CelebA]{\includegraphics[width=.3\textwidth]{images/dpdg/disparity_wgacc_celebA.pdf}}
    \subfigure[UTKFace]{\includegraphics[width=.3\textwidth]{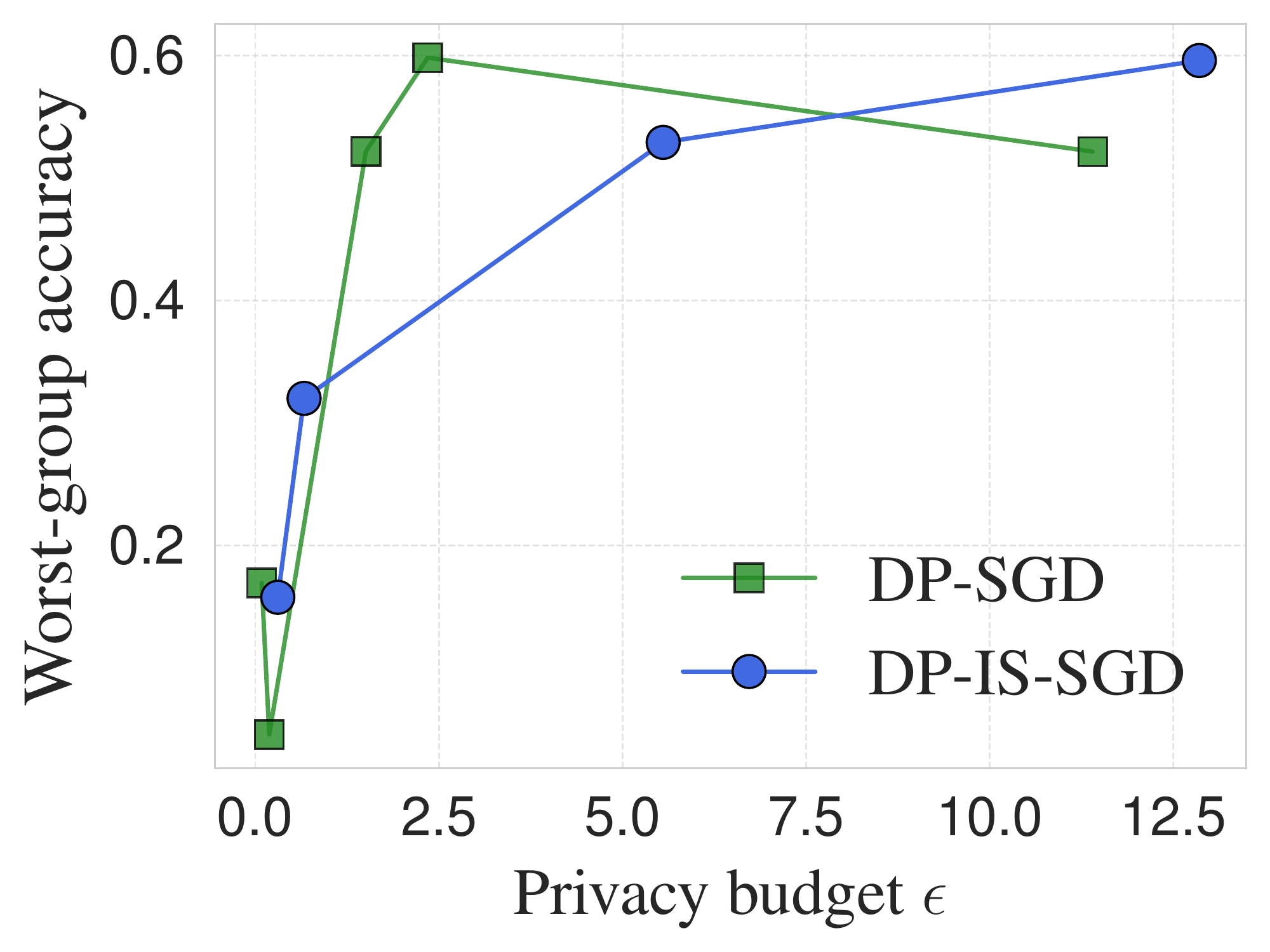}}
    \subfigure[iNat]{\includegraphics[width=.3\textwidth]{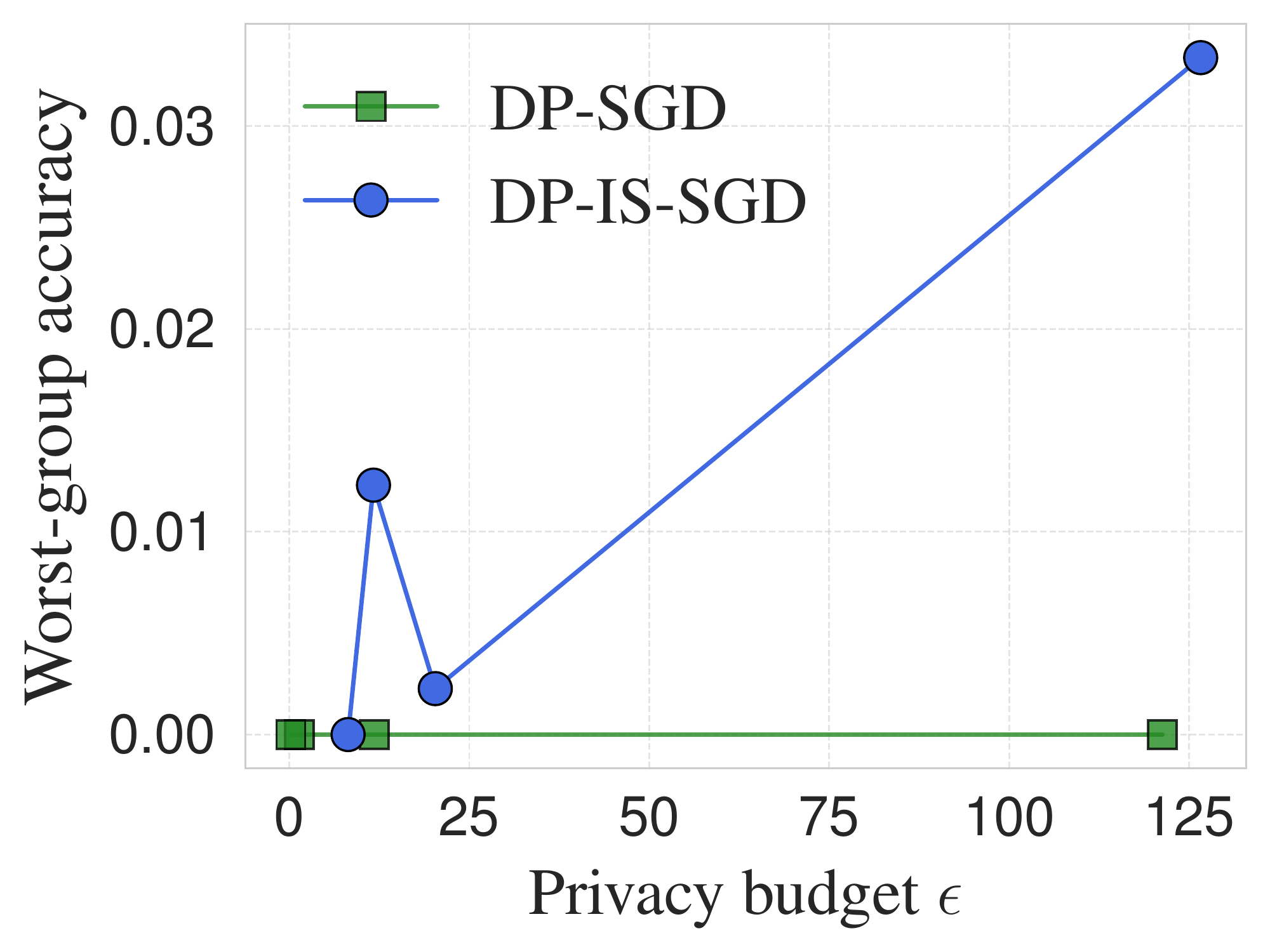}}
    
    \caption{The disparity (lower the better) and test accuracies of the models trained with \dpsgd and \dpiw on three datasets. If we care about privacy, \dpis improves disparate impact at most privacy budgets. For CelebA, we train the model for $30$ epochs. For UTKFace, we train for $100$ epochs. For iNat, we train for $20$ epochs. The GDP accountant is used to compute the privacy budget.}
    \label{fig:disparity-app}
\end{figure}

\paragraph{Comparison with DP-SGD-F~\cite{xu2021removing}.} We did not manage to obtain good performance from DP-SGD-F on CelebA, UTKFace, and iNat, possibly because of the different domain---images---than tabular data considered by \citet{xu2021removing}. To proceed with the comparison, we evaluate the algorithms on the census data---ADULT dataset~\cite{kohavi1996scaling} (see \cref{tab:adult_detail} for dataset statistics)---that \citet{xu2021removing} used in their work. As subgroups, we consider four intersectional groups composed of all possible values of the “sex” attribute and prediction class (an income higher/lower than 50k).

We show the results in \cref{tab:dpsgdf}.
For a comparable epsilon value (0.69 for DP-SGD-F, and 0.7 for our DP-IS-SGD), we see that our method has smaller accuracy disparity (Eq. 2) across the groups, although also lower overall accuracy.

\begin{table}[t]
    \caption{\textbf{\dpis has lower disparity \dpsgd-F on ADULT and better accuracy at the same privacy level. } The table shows the privacy level, maximum accuracy disparity across groups, and overall accuracy for all algorithms.}
    \label{tab:dpsgdf}
    \centering
    \begin{tabular}{lccc}
    \toprule
    Algorithm 	& $\epsilon$ & Accuracy disparity & Overall accuracy \\
    \midrule
    SGD                & -  & $0.660 \pm 0.000$ &  $0.836 \pm 0.000$ \\
    DP-SGD      & $0.6573$ & $0.852 \pm 0.005$ &  $0.802 \pm 0.001$ \\
    DP-SGD-F             & $0.6964$ & $0.657 \pm 0.023$ &  $0.832 \pm 0.001$ \\
    DP-IS-SGD  & $0.7059$ & $0.246 \pm 0.034$ &  $0.766 \pm 0.010$ \\
    \bottomrule
    \end{tabular}
\end{table}

\subsection{Additional Details for \cref{sec:exp-dro}}
\label{app:exp-dro-extra}

We compare different algorithms, including
\erm-$\ell_2$ and \iwerm-$\ell_2$ as baselines, and two other algorithms, \iserm-$\ell_2$~\cite{idrissi2022simple} and \gdro-$\ell_2$~\cite{sagawa2019distributionally} in terms of the group robustness.
We set the learning rate as $0.001$ for CelebA, UTKFace, and iNat, $0.00002$ for MNLI, and $0.00001$ for CivilComments.
We use the validation set to select the hyperparameters:
\begin{enumerate}
    \item For \erm-$\ell_2$, \iwerm-$\ell_2$, \iserm-$\ell_2$, and \gdro-$\ell_2$, we select the weight decay from $0.0001$, $0.01$, $0.1$, and $1.0$.
     \item For \dpis, we fix the gradient clipping to $1.0$ (except for iNat, where we set the value to $10.0$ as $1.0$ does not converge). We select the noise parameter from $1.0$, $0.1$, $0.01$, $0.001$ on CelebA and UTKFace, select the noise parameter from $0.0000001$, $0.000001$, $0.00001$, and $0.0001$ on iNat and select the noise parameter from $0.01$ and $0.001$ on CivilComments and MNLI.
    \item For \noisyiwerm, \noisyiserm, and \noisygdro, we select the standard deviation of the random noise from $0.001$, $0.01$, $0.1$, and $1.0$ on CelebA, UTKFace, and iNat, and we select standard deviation of the random noise from $0.00001$, $0.0001$, and  $0.001$ on CivilComments and MNLI.
\end{enumerate}

\paragraph{Statistical Concerns.}
Although our results appear to be comparable to or better than SOTA, we caution readers about the exact ordering of methods due to high estimation variance:
these benchmarks have small validation and test sets (e.g., CelebA has $182$ validation examples), 
and so hyperparameter tuning is subject to both overfitting and estimation error.
For example, 
we observe validation accuracies which differ from their test accuracies by up to $5\%$
in our experiments.
We attempt to mitigate this using three random train/val/test splits on CelebA, and avoid large hyperparameter sweeps\footnote{For example, we do not tune the ``group adjustments'' parameter for gDRO, using the default from~\citet{koh2021wilds} instead.}, but this is not done
in prior work.

\subsection{Additional Details for \cref{sec:exp-rob-overfitting}}
\label{app:exp-rob-overfitting-extra}
We use the CIFAR-10 dataset~\cite{krizhevsky2009learning}, and ResNet-18~\cite{he2016deep} as the network architecture. We train the model to be robust against $L_\infty$ perturbations of at most $\gamma = 8/255$ bound, which is a standard setup for adversarial training on this dataset. We vary $\sigma$ (noise parameter) from 0.0 (regular adversarial training without gradient noise) to 0.01. In addition, we compare the performance of noisy gradient to \textit{adversarial training with early stopping}---a simple but effective approach for mitigating  overfitting in adversarial training~\citep{rice2020overfitting}.

In this experiment, we measure robust accuracy and its respective generalization gap, thus setting $\ell((x, y), \theta)~\define~\1[f_\theta(x) = y]$ to be the 0/1 loss. 

\section{Author Contributions}

BK proposed to leverage the connection between differential privacy and distribution generalization,
and led the theory developments and algorithm analysis.
Yao-Yuan Yang led the experiments, designed the experimental settings, and implemented the algorithms. Yaodong Yu contributed to the experiments, especially in the NLP settings, and helped benchmark baselines. The noisy-gradient algorithm is proposed by Yao-Yuan Yang and Yaodong Yu.
JB contributed to the theory, in particular
the algorithmic analysis, initial proof of the tightness of the bound, and formal connections to robust generalization and calibration.
PN organized the team and managed the project,
contributing in parts to the theory and experiments.
All authors participated in framing the results and writing the paper.

\end{document}